\documentclass{article}
\usepackage[accepted]{icml2026}

\addtocontents{toc}{\protect\setcounter{tocdepth}{-2}}

\definecolor{DarkRed}{RGB}{139, 0, 0}
\definecolor{DarkBlue}{RGB}{0, 0, 139}

\usepackage{amsfonts}
\usepackage{nicefrac}
\usepackage{microtype}
\usepackage{graphicx}
\usepackage{subcaption}
\usepackage{booktabs} 
\usepackage{hyperref}
\usepackage{url}
\usepackage{wrapfig}

\renewcommand{\algorithmicrequire}{\textbf{Input:}}
\renewcommand{\algorithmicensure}{\textbf{Output:}}

\usepackage{amsmath}
\usepackage{amssymb}
\usepackage{mathtools}
\usepackage{amsthm}
\usepackage{algorithm}
\usepackage{algorithmic}

\usepackage[capitalize,noabbrev]{cleveref}

\usepackage{dsfont}
\usepackage[mathscr]{euscript}
\usepackage{expl3}
\usepackage{xcolor}
\usepackage{colortbl}
\usepackage{thmtools}
\usepackage{thm-restate}
\usepackage{multirow}
\usepackage{accents}

\usepackage{pifont}% http://ctan.org/pkg/pifont

\usepackage{MnSymbol}
\DeclareMathAlphabet\mathbb{U}{msb}{m}{n}
\usepackage{xpatch}

\def\Rset{\mathbb{R}}

\let\P\undefined

\DeclareMathOperator*{\P}{\mathbb{P}}
\DeclareMathOperator*{\E}{\mathbb E}
\DeclareMathOperator*{\argmax}{argmax}

\DeclareMathOperator{\sign}{sign}

\DeclareMathOperator{\Var}{Var}

\DeclarePairedDelimiter{\abs}{\lvert}{\rvert} 
\DeclarePairedDelimiter{\bracket}{[}{]}
\DeclarePairedDelimiter{\curl}{\{}{\}}
\DeclarePairedDelimiter{\paren}{(}{)}
\DeclarePairedDelimiter{\norm}{\|}{\|}

\ExplSyntaxOn
\tl_const:Nn \c_my_uc_alphabet_tl { ABCDEFGHIJKLMNOPQRSTUVWXYZ }
\tl_const:Nn \c_my_full_alphabet_tl { ABCDEFGHIJKLMNOPQRSTUVWXYZ abcdefghijklmnopqrstuvwxyz }

\tl_map_inline:Nn \c_my_uc_alphabet_tl
 { \cs_gset:cpn { c#1 } { \mathcal{#1} } }

\tl_map_inline:Nn \c_my_uc_alphabet_tl
 { \cs_gset:cpn { s#1 } { \mathscr{#1} } }

\tl_map_inline:Nn \c_my_full_alphabet_tl
 {
  \str_if_eq:nnTF { b#1 } { bf } 
    { } % Do nothing for \bf
    {
      \str_if_eq:nnTF { sf#1 } { sf }
        { } % Do nothing for \sf
        {
          \cs_gset:cpn { b#1 } { \ensuremath{\mathbf{#1}} } 
          \cs_gset:cpn { sf#1 } { \ensuremath{\mathsf{#1}} }
        }
    }
 }
\ExplSyntaxOff

\newcommand{\h}{\widehat}
\newcommand{\ov}{\overline}
\newcommand{\wt}{\widetilde}

\newcommand{\ignore}[1]{}

\newlength{\dhatheight}

\newcommand{\1}{\mathds{1}}

\newcommand{\hh}{{\mathsf{h}}}

\newcommand{\tar}{\textrm{tar}}
\newcommand{\sur}{\textrm{sur}}

\newcommand{\loss}{\sfL}

\hypersetup{
  breaklinks   = true, %splits links across lines
  colorlinks   = true, %Colours links instead of ugly boxes
  urlcolor     = blue, %Colour for external hyperlinks
  linkcolor    = blue, %Colour of internal links
  citecolor    = blue %Colour of citations
}

\usepackage[toc, page, header]{appendix}
\theoremstyle{plain}
\newtheorem{theorem}{Theorem}[section]

\newtheorem{lemma}[theorem]{Lemma}

\theoremstyle{definition}

\theoremstyle{remark}
\newtheorem{remark}[theorem]{Remark}

\usepackage[disable,textsize=tiny]{todonotes}

\icmltitlerunning{Linear-Core Surrogates: Smooth Loss Functions with Linear
  Rates for Classification and Structured Prediction}

\begin{document}

\twocolumn[
\icmltitle{Linear-Core Surrogates: Smooth Loss Functions with Linear
  Rates for Classification and Structured Prediction}

\begin{icmlauthorlist}
\icmlauthor{Mehryar Mohri}{google,courant}
\icmlauthor{Yutao Zhong}{google}
\end{icmlauthorlist}

\icmlaffiliation{google}{Google Research, New York, NY;}
\icmlaffiliation{courant}{Courant Institute of Mathematical Sciences,
  New York, NY}

\icmlcorrespondingauthor{Mehryar Mohri}{mohri@google.com}
\icmlcorrespondingauthor{Yutao Zhong}{yutaozhong@google.com}

\icmlkeywords{Surrogate Loss Functions, Consistency, Structured Prediction, Stochastic Optimization, Robustness}

\vskip 0.3in
]

\printAffiliationsAndNotice{}

\begin{abstract}
  A fundamental dichotomy in the theory of classification sets smoothness
  against statistical efficiency: smooth surrogate losses such as the logistic
  loss enable fast $O(1/T)$ optimization but yield slow square-root
  $\sH$-consistency bounds, while piecewise-linear losses like the Hinge loss
  achieve optimal linear $\sH$-consistency rates but are non-differentiable.  We
  introduce \emph{Linear-Core (LC) Surrogates}, the first family of
  \emph{explicit} convex loss functions that provably resolve this tension.  By
  stitching a linear core to a smooth tail, we construct surrogates that are
  differentiable everywhere ($C^1$, and even $C^2$ under mild conditions) while
  retaining strict linear $\sH$-consistency bounds, the strongest known form of
  consistency guarantee.
  We establish these linear bounds across three increasingly complex settings:
  binary classification, multi-class classification, and structured prediction.
  To our knowledge, this is the first explicit construction to simultaneously
  achieve smoothness and linear $\sH$-consistency in any of these settings.
  Beyond their theoretical appeal, Linear-Core Surrogates offer practical
  advantages.  In multi-class classification, their constant gradient profile
  near the decision boundary provides natural robustness to instance-dependent
  label noise, outperforming Cross-Entropy by \emph{2.6\%} on corrupted
  CIFAR-10.  In structured prediction, their smoothness enables an unbiased
  stochastic gradient estimator that bypasses the $O(|\sY|^2)$ per-step
  complexity of exact inference, yielding a \emph{23$\times$ speedup} over
  Structured SVMs on large-vocabulary sequence tagging tasks.
\end{abstract}

\section{Introduction}
\label{sec:intro}

In the theory of classification, the choice of loss function is governed by two
conflicting desiderata: computational tractability and statistical consistency.
On one hand, practical optimization requires loss functions that are convex and
smooth, enabling the use of efficient gradient-based algorithms with fast
convergence rates ($O(1/T)$ or better) \citep{nesterov1983method,beck2009fast}.
On the other hand, theoretical guarantees often rely on \emph{consistency
  bounds}, which relate the excess error of the surrogate to that of the
discrete target loss, e.g., the 0-1 loss.  While classical
\emph{Bayes-consistency} ensures convergence in the infinite-sample limit and
for the family of all measurable functions
\citep{zhang2004statistical,lin2004note,bartlett2006convexity,
  steinwart2007compare}, recent work has focused on the stronger notion of
\emph{$\sH$-consistency}, which provides non-asymptotic guarantees restricted to
the hypothesis set $\sH$ of interest
\citep*{Awasthi2022Hconsistency,awasthi2022multi, MaoMohriZhong2023structured,
  MaoMohriZhong2024}.

Historically, a dichotomy has persisted in this landscape: smooth losses like
the Logistic (Cross-Entropy) or Exponential loss yield valid consistency bounds,
but these bounds are notoriously slow.  Specifically, due to the vanishing
curvature of these losses near the origin, the transfer rate is on the order of
the square root of the excess surrogate error
($\Delta \sR \propto \sqrt{\Delta \sL}$)
\citep{bartlett2006convexity,zhang2004statistical}.  This implies that high
precision in optimization translates inefficiently to target accuracy.
Conversely, piecewise-linear losses like the Hinge loss offer fast linear
consistency rates ($\Delta \sR \propto \Delta \sL$)
\citep{steinwart2007compare,Awasthi2022Hconsistency} but suffer from
non-differentiability, leading to optimization instability and slower
sub-gradient convergence rates ($O(1/\sqrt{T})$) (see
Table~\ref{tab:loss_comparison} for a comparison).

In this work, we propose \emph{Linear-Core (LC) Surrogates}, a new family of
\emph{explicit} smooth loss functions designed to resolve this dichotomy.  By
constructing a loss with a linear ``core'' stitched to a smooth tail, we achieve
the best of both worlds: the fast $O(1/T)$ optimization rates of smooth losses
and the optimal linear $\sH$-consistency bounds of the Hinge loss.  To our
knowledge, this is the first explicit construction of smooth surrogates with
provable linear $\sH$-consistency bounds.

We extend the Linear-Core framework to the challenging domain of structured
prediction, where the output space $\sY$ is exponentially large.  Standard
surrogates such as the Structured SVM (SSVM) loss
\citep{tsochantaridis2005large} or the CRF negative log-likelihood
\citep{lafferty2001conditional} often lack consistency guarantees with respect
to discrete target metrics like the Hamming loss
\citep{ciliberto2016consistent,osokin2017structured,nowak2019sharp}.
\citet{MaoMohriZhong2023structured} addressed this theoretical gap by providing
a detailed analysis of $\sH$-consistency for structured prediction and
introducing a family of structured losses with provable $\sH$-consistency
guarantees, though with square-root bounds.  Building on this foundation, we
show that our Linear-Core surrogates not only preserve these rigorous guarantees
but \emph{strengthen} them to \emph{linear} $\sH$-consistency bounds.  As an
additional practical benefit, their smoothness enables an unbiased stochastic
gradient estimator that bypasses the $O(|\sY|^2)$ bottleneck of exact inference
in standard SSVM and CRF methods.

Our approach is most closely related to the work of \citet{cao2025establishing},
which also seeks linear convergence rates via \emph{Convolutional Fenchel-Young
  losses}.  While their resulting binary loss profile structurally resembles our
construction, their general framework defines losses \emph{implicitly} via
variational optimization, relies on non-standard decoding, and is limited to
excess loss bounds with respect to the family of all measurable functions
(Bayes-consistency).  In contrast, our Linear-Core surrogates are
\emph{explicit}, standard convex functions compatible with standard $\argmax$
decoding, and we provide the strictly stronger \emph{$\sH$-consistency} bounds.
Furthermore, our framework extends to multi-class and structured prediction with
closed-form losses, and the explicit formulation enables our unbiased stochastic
sampling algorithm (Section~\ref{sec:struct_optimization}), which bypasses the
$O(|\sY|^2)$ bottleneck of exact inference.  See Appendix~\ref{app:related-work}
for an extended discussion.

The remainder of this paper follows a deliberate progression of the label space
complexity, where each setting builds upon the previous one.  Our primary
contribution is theoretical: we resolve a fundamental open question by
constructing the first explicit smooth surrogates with linear
$\sH$-consistency, and we do so across all three settings.
\emph{Binary Classification (Section~\ref{sec:binary}):} We establish the
fundamental mechanism in the simplest setting ($|\sY|=2$), constructing the
Linear-Core surrogate by stitching a linear core to a smooth tail and proving
that it achieves both $C^1$/$C^2$ smoothness and linear $\sH$-consistency bounds
(Sections~\ref{sec:binary-bound}).  To our knowledge, this is the first explicit
smooth surrogate with provable linear $\sH$-consistency.
\emph{Multi-Class Classification (Section~\ref{sec:multi}):} We scale to $n$
competing classes via sum-losses, proving that the multi-class objective inherits
the optimal linear transfer bound (Section~\ref{sec:multi-bound}).  The constant
gradient profile near the decision boundary acts as a natural `hard' regularizer,
yielding $+2.6\%$ improvement over Cross-Entropy on CIFAR-10 under
instance-dependent noise (Section~\ref{sec:multi_experiments}).
\emph{Structured Prediction (Section~\ref{sec:struct}):} We tackle
the most complex domain where the output space is exponentially large, proving
linear $\sH$-consistency bounds for structured sum-losses
(Section~\ref{sec:struct-bound}).  The smoothness of our surrogates enables an
unbiased stochastic gradient estimator with bounded variance independent of
$|\sY|$ (Theorem~\ref{thm:variance_bound}), which bypasses the $O(|\sY|^2)$
bottleneck of exact inference, achieving a 23$\times$ speedup over SSVM and
17.4$\times$ over CRF (Sections~\ref{sec:struct_experiments}
and~\ref{sec:efficiency_exp}).

\begin{table}[t]
  \centering
  \caption{Comparison of surrogate loss properties.}
  \label{tab:loss_comparison}
  \resizebox{\columnwidth}{!}{
    \begin{tabular}{lccccc}
      \toprule
      \textbf{Loss Function} & \textbf{Convexity} & \textbf{Smoothness}
      & \textbf{Consistency Bound} & \textbf{Rate} \\
      \midrule
      Hinge Loss & Yes & No (Non-diff. at $\pm 1$) & Linear & Fast \\
      Squared-Hinge & Yes & $C^1$ & Square-root & Slow \\
      Logistic / Exp & Yes & $C^\infty$ & Square-root & Slow \\
      \midrule
      \textbf{Linear-Core ($\ov{\Phi}$)} & \textbf{Yes} & \textbf{$C^1$ / $C^2$}
      & \textbf{Linear} & \textbf{Fast} \\
      \bottomrule
    \end{tabular}
  } \vskip -0.3in
\end{table}

\section{Preliminaries}
\label{sec:pre}

We denote the input space by $\sX$, the label space by $\sY$, the hypothesis set
by $\sH$ and the data distribution by $\sD$. We consider a \emph{target loss}
$\ell_{\tar} \colon \sH \times \sX \times \sY \to \Rset$ (e.g., the 0-1 loss)
and a \emph{surrogate loss}
$\ell_{\sur} \colon \sH \times \sX \times \sY \to \Rset$ (e.g., a convex margin
loss).  The \emph{generalization error} of a hypothesis $h \in \sH$ is defined
as $\sE_\ell(h) = \E_{(x, y) \sim \sD}[\ell(h, x, y)]$.  The \emph{best-in-class
  error} is denoted by $\sE^*_\ell(\sH) = \inf_{h \in \sH} \sE_\ell(h)$.  The
difference $\sE_{\ell}(h) - \sE^*_{\ell}(\sH)$ is referred to as the
\emph{estimation error}. We analyze \emph{$\sH$-consistency bounds}
\citep{Awasthi2022Hconsistency,mao2023cross}, which relate the estimation error
of the target loss to that of the surrogate. Such bounds typically take the
form:
$ \sE_{\ell_{\tar}}(h) - \sE^*_{\ell_{\tar}}(\sH) + \sM_{\ell_{\tar}}(\sH) \leq
\Gamma\paren*{\sE_{\ell_{\sur}}(h) - \sE^*_{\ell_{\sur}}(\sH) +
  \sM_{\ell_{\sur}}(\sH)}$, where $\Gamma$ with $\Gamma(0) = 0$ is an increasing
concave function (e.g., $t \mapsto t$ or $t \mapsto \sqrt{t}$). The term
$\sM_\ell(\sH)$ is the \emph{minimizability gap}, defined as
$\sM_\ell(\sH) = \sE^*_\ell(\sH) - \E_x[\inf_{h \in \sH} \E_y [\ell(h, x, y)
\mid x]]$. This quantity measures the discrepancy between the best possible
expected loss within $\sH$ and the expected pointwise infimum. It is upper
bounded by, yet generally finer than, the standard approximation error
$\sE^*_\ell(\sH) - \sE^*_\ell(\sH_{\rm{all}})$, where $\sH_{\rm{all}}$ denotes
the family of all measurable functions \citep{MaoMohriZhong2024}.  We denote the
excess target and surrogate errors by
$\Delta \sR = \sE_{\ell_{\tar}}(h) - \sE^*_{\ell_{\tar}}(\sH_{\rm{all}})$ and
$\Delta \sL = \sE_{\ell_{\sur}}(h) - \sE^*_{\ell_{\sur}}(\sH_{\rm{all}})$,
respectively.  When $\sH = \sH_{\rm{all}}$, the minimizability gap reduces to
the approximation error. Consequently, in this case, an $\sH$-consistency bound
implies the standard excess error bound $\Delta \sR \leq \Gamma (\Delta
\sL)$. Thus, $\sH$-consistency bound is a strictly stronger guarantee than
standard Bayes-consistency.

\section{Binary Classification}
\label{sec:binary}

We first consider the binary classification setting where $\ell_{\tar}$ is the
binary zero-one loss, defined by
$\ell_{0-1} (h, x, y) = 1_{\sign(h(x)) \neq y}$.  When the target is the binary
zero-one loss, we define the \emph{$\sH$-estimation error transformation} $\sT$
as the function satisfying the following tight lower bound for all $h \in \sH$:
$ \sT \paren*{\sE_{\ell_{0-1}}(h) - \sE^*_{\ell_{0-1}}(\sH) +
  \sM_{\ell_{0-1}}(\sH)} \leq \sE_{\ell_{\sur}}(h)-\sE^*_{\ell_{\sur}}(\sH) +
\sM_{\ell_{\sur}}(\sH)$.  Tightness implies that for any $t \in [0, 1]$, there
exists a distribution and a hypothesis such that
$\sE_{\ell_{0-1}}(h) - \sE^*_{\ell_{0-1}}(\sH) + \sM_{\ell_{0-1}}(\sH) = t$ and
$\sE_{\ell_{\sur}}(h)-\sE^*_{\ell_{\sur}}(\sH) + \sM_{\ell_{\sur}}(\sH) =
\sT(t)$. Explicit forms of $\sT$ have been characterized for binary margin-based
losses \citep{Awasthi2022Hconsistency,MaoMohriZhong2024}.

\subsection{Smooth Surrogates with Linear Cores}

\begin{figure}[t]
  \centering
  \includegraphics[width=0.49\linewidth]{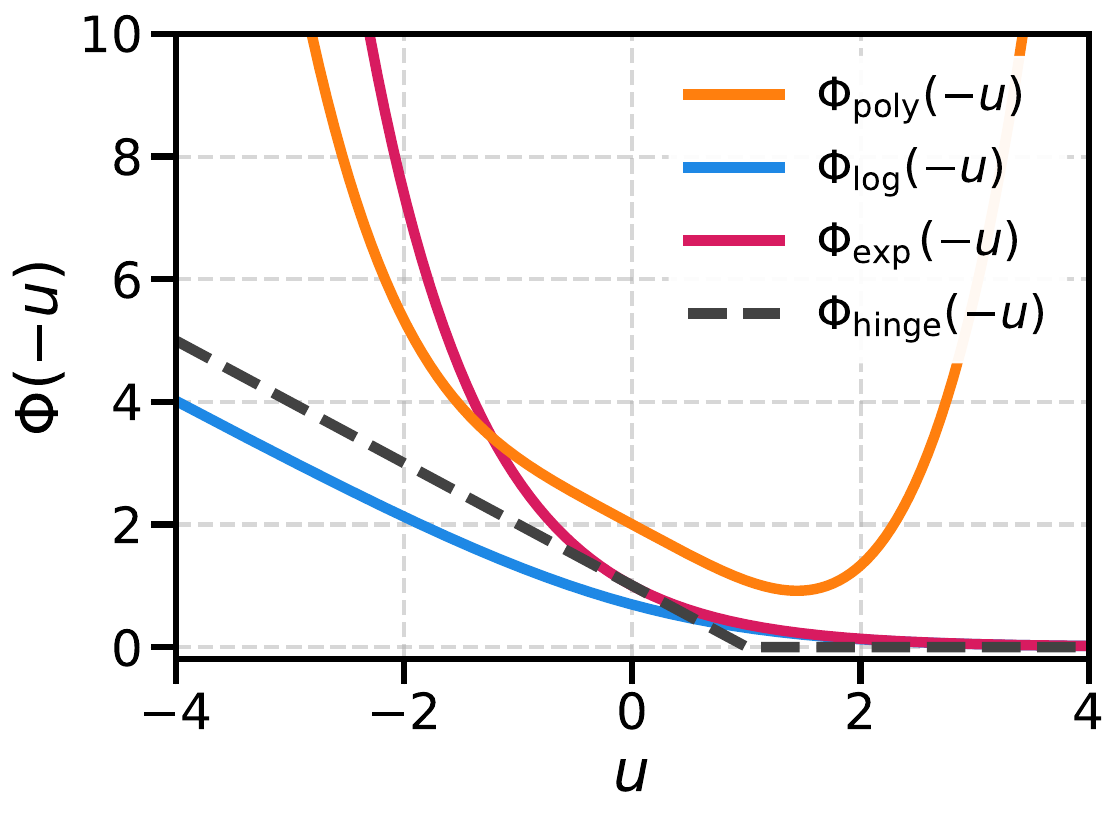}
  \includegraphics[width=0.49\linewidth]{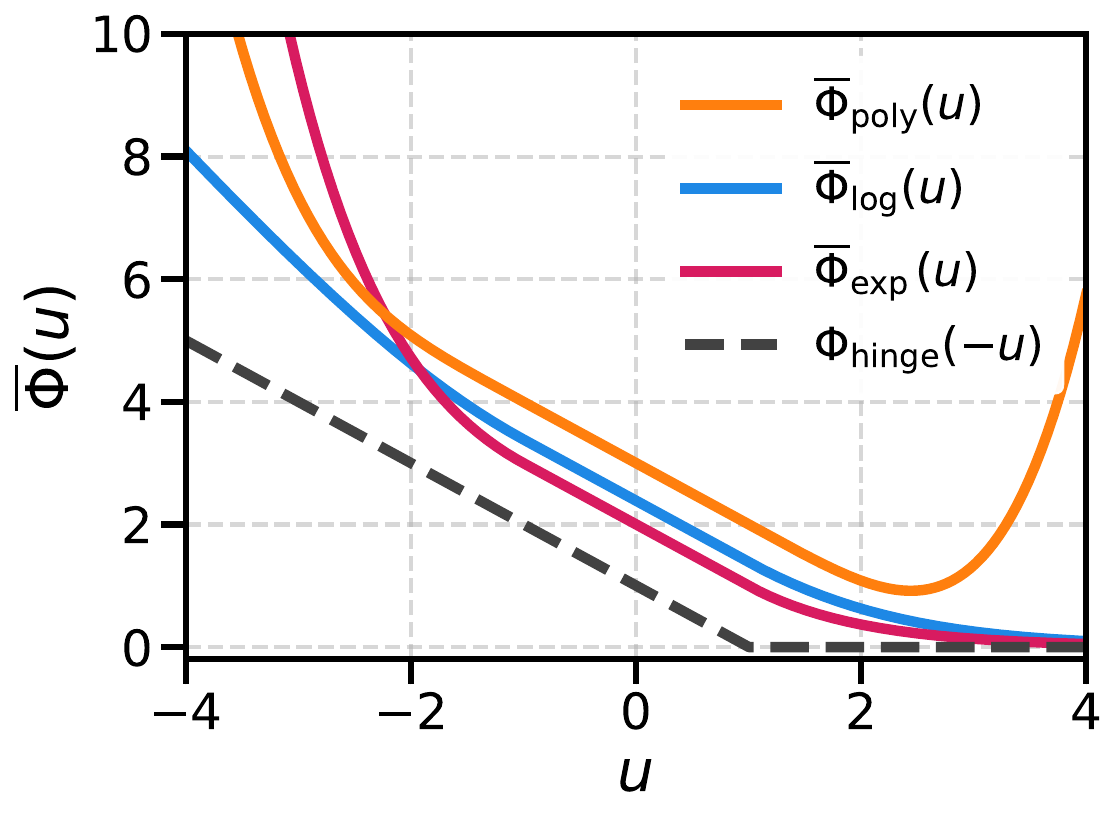}
  \vskip -0.1in
  \caption{Left: Standard surrogates. Right: LC surrogates $\ov \Phi$.}
  \vskip -0.15in
  \label{fig:extensions}
\end{figure}

Let $\Phi \colon \Rset \to \Rset_{+}$ be a differentiable convex function with
$\Phi'(0) > 0$. For instance, one may take $\Phi(u) = \log(1 + e^{u})$,
corresponding to the logistic loss, or $\Phi(u) = e^{u}$, corresponding to the
exponential loss.  We define the smooth \emph{linear-core (LC) surrogate} loss
as the margin-based loss function
$\ell_{\ov \Phi} \colon (h, x, y) \mapsto \ov \Phi(y h(x))$, where $\ov \Phi$
is given by
\begin{equation}
  \label{eq:Phi}
  \ov \Phi(u) =
  \begin{cases}
    - u + 1 + \dfrac{\Phi(0)}{\Phi'(0)}, & -1 \leq u \leq 1, \\[4pt]
    \dfrac{\Phi(1 - u)}{\Phi'(0)}, & u > 1, \\[10pt]
    \dfrac{\Phi(-1 - u)}{\Phi'(0)} + 2, & u < -1.
  \end{cases}
\end{equation}
Figure~\ref{fig:extensions} gives several illustrations of $\ov \Phi$.
We also consider a \emph{one-sided} linear-core surrogate, obtained by smoothing
only the right outer branch and keeping the left side linear. We define
$\wt \Phi$ as:
\begin{equation}
  \label{eq:Phi_one}
  \wt \Phi(u) =
  \begin{cases}
    -\,u + 1 + \dfrac{\Phi(0)}{\Phi'(0)}, & u \leq 1,\\[8pt]
    \dfrac{\Phi(1 - u)}{\Phi'(0)}, & u > 1.
  \end{cases}
\end{equation}
Note that any constant can be added to the loss function without affecting the
minimization.  The following proposition establishes the key analytical
properties for the entire family of Linear-Core surrogates (See
Appendix~\ref{app:prop-proof} for proofs).

\begin{restatable}[Convexity and Smoothness]{proposition}{PropProperties}
  \label{prop:properties}
  Let $\phi \in \curl*{\ov \Phi, \wt \Phi}$.  The Linear-Core surrogates (Binary
  $\phi$, Multi-class $\ell_{\phi}^{\mathrm{sum}}$, and Structured
  $\loss_{\phi}^{\mathrm{sum}}$ defined in Sections~\ref{sec:multi}
  and~\ref{sec:struct}) are globally convex and continuously differentiable
  ($C^1$). Furthermore, if the base $\Phi$ satisfies $\Phi''(0)=0$, they are
  twice continuously differentiable ($C^2$).
\end{restatable}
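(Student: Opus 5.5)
The plan is to prove the binary statement directly by checking the junction points $u=\pm 1$. I then transfer it to the multi-class and structured surrogates. Those are, per Sections~\ref{sec:multi} and~\ref{sec:struct}, nonnegative combinations of terms $\phi(\cdot)$ applied to affine functions of the score vector (e.g.\ score differences $h(x,y)-h(x,y')$ weighted by nonnegative coefficients). Convexity and $C^k$ regularity are both preserved under composition with affine maps and under nonnegative finite sums. So once $\phi\in\curl*{\ov\Phi,\wt\Phi}$ is shown to be convex and $C^1$ (resp.\ $C^2$), the claims for $\ell_\phi^{\mathrm{sum}}$ and $\loss_\phi^{\mathrm{sum}}$ follow immediately. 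I will state this reduction first and spend the work on the scalar case.

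For $\ov\Phi$, each of the three branches is $C^1$ on its open interval: the middle branch is affine, and the outer branches are $\Phi$ composed with affine maps, divided by $\Phi'(0)>0$. I will check the junctions as follows.
\begin{itemize}
\item \emph{Continuity.} At $u=1$ the middle branch gives $\Phi(0)/\Phi'(0)$, and the right branch gives $\Phi(0)/\Phi'(0)$. At $u=-1$ the middle branch gives $2+\Phi(0)/\Phi'(0)$, and the left branch gives the same value.
\item \emph{First derivatives.} The right branch has derivative $-\Phi'(1-u)/\Phi'(0)$, which equals $-1$ at $u=1$. The left branch has derivative $-\Phi'(-1-u)/\Phi'(0)$, which equals $-1$ at $u=-1$. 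Both match the core slope $-1$.
\end{itemize}
A continuous function that is differentiable on each piece, with matching one-sided derivatives at the junctions, is $C^1$. This uses that $\Phi'$ is continuous, since a differentiable convex function on $\Rset$ is $C^1$. The case $\wt\Phi$ is identical, with only the junction at $u=1$.

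For convexity, it suffices that $\phi'$ is nondecreasing, because $\phi$ is $C^1$. On $u>1$, $\phi'(u)=-\Phi'(1-u)/\Phi'(0)$ is nondecreasing because $\Phi'$ is nondecreasing and $1-u$ decreases as $u$ increases. Similarly on $u<-1$. The derivative equals $-1$ on the core, and continuity at the junctions glues these into a globally nondecreasing function.

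For $C^2$, the second derivative is $0$ on the core and $\Phi''(1-u)/\Phi'(0)$ or $\Phi''(-1-u)/\Phi'(0)$ on the tails. Both tail expressions tend to $\Phi''(0)/\Phi'(0)=0$ at the junctions, so they match the core. I expect the main (though mild) obstacle to be the regularity hypothesis this needs: the $C^2$ statement implicitly requires $\Phi$ to be $C^2$, at least near the relevant points. I would make this explicit, e.g.\ by assuming $\Phi\in C^2$ as for the logistic and exponential examples. I would also note that those two examples have $\Phi''(0)\neq 0$, so the $C^2$ claim applies to other bases $\Phi$. The remaining step is confirming that the multi-class and structured losses are indeed nonnegative combinations of $\phi$ of affine maps. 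That holds for sum-losses by construction.
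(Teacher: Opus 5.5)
Your proposal is correct and follows essentially the same route as the paper. You match values and slopes at the junctions $u=\pm1$, where you additionally verify continuity of the function values and of $\Phi'$, which the paper takes for granted. You then get convexity from the nondecreasing derivative, obtain $C^2$ from $\Phi''(z)\to 0$ as $z\to 0$ under $\Phi\in C^2$, and lift everything to $\ell^{\mathrm{sum}}_\phi$ and $\loss^{\mathrm{sum}}_\phi$ as nonnegative sums of $\phi$ composed with linear maps of $h(x,\cdot)$. One caveat, shared with the paper: in the structured case the weights $\ov\ell(y',y)=1-\ell(y',y)$ are nonnegative only when $\ell\le 1$ (e.g., normalized Hamming loss). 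Your ``by construction'' should therefore be replaced by that explicit assumption.
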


The condition $\Phi''(0) = 0$ is not vacuous. For example, take
$\Phi(u) = a\, u + \frac{1}{12} u^{4} + K$ with $a > 0$. Then $\Phi''(0) = 0$,
and by Proposition~\ref{prop:properties}, the corresponding linear-core
surrogate is twice continuously differentiable on $\Rset$. In contrast, for
common choices such as the logistic loss $\Phi(u) = \log(1+e^{u})$ or the
exponential loss $\Phi(u) = e^{u}$, one has $\Phi''(0) > 0$, so $\ov \Phi$ is
$C^1$ but not $C^2$ at the hinge points $u = \pm 1$.

\subsection{Linear \texorpdfstring{$\sH$}{H}-Consistency Bound}
\label{sec:binary-bound}

We call a hypothesis set $\sH$ complete if, for every $x \in \sX$,
$\{ h(x) \colon h \in \sH \} = \Rset$. Since $\ov \Phi$ is convex and
differentiable at zero and satisfies the inequality $\ov \Phi'(0) = -1 < 0$, by
\citet[Theorem~4.1]{MaoMohriZhong2024}, for complete hypothesis sets, the tight
transformation $\sT$ takes the following form.
\begin{restatable}{theorem}{TBinary}
  \label{thm:T-binary}
  Let $\sH$ be a complete hypothesis set.  The transformation $\sT$ can be
  expressed as follows:
  \begin{equation*}
    \forall t \in [0, 1], \quad \sT(t) = \ov \Phi(0) - \inf_{u \in \Rset} \paren*{\tfrac{1 - t}{2} \ov \Phi(-u) + \tfrac{1 + t}{2} \ov \Phi(u)}.
  \end{equation*}
\end{restatable}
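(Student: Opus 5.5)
The plan is to obtain the formula as a direct instance of \citet[Theorem~4.1]{MaoMohriZhong2024}. That result covers margin-based losses $\Phi_0(yh(x))$ with $\Phi_0$ convex, differentiable at $0$, and satisfying $\Phi_0'(0)<0$. For complete hypothesis sets it gives the tight transformation as
\[
\sT(t)=\inf_{u \le 0}\Bigl[\tfrac{1-t}{2}\Phi_0(-u)+\tfrac{1+t}{2}\Phi_0(u)\Bigr]-\inf_{u\in\Rset}\Bigl[\tfrac{1-t}{2}\Phi_0(-u)+\tfrac{1+t}{2}\Phi_0(u)\Bigr].
\]
The proof therefore has two parts: verify the hypotheses for $\Phi_0=\ov\Phi$, then show that the first infimum equals $\ov\Phi(0)$.

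\emph{Hypotheses.} Convexity and differentiability of $\ov\Phi$ are given by Proposition~\ref{prop:properties}. On the core $[-1,1]$, $\ov\Phi$ is affine with slope $-1$, so $\ov\Phi'(0)=-1<0$.

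\emph{Constrained infimum.} Fix $t\in[0,1]$ and set $g_t(u)=\tfrac{1-t}{2}\ov\Phi(-u)+\tfrac{1+t}{2}\ov\Phi(u)$. This function is convex, being a nonnegative combination of convex functions. At $u=0$ it is differentiable with
\[
g_t'(0)=-\tfrac{1-t}{2}\ov\Phi'(0)+\tfrac{1+t}{2}\ov\Phi'(0)=t\,\ov\Phi'(0)=-t\le 0.
\]
Since $g_t$ is convex, its derivative is nondecreasing, so $g_t'(u)\le g_t'(0)\le 0$ wherever $g_t$ is differentiable with $u\le 0$. Hence $g_t$ is nonincreasing on $(-\infty,0]$, and its infimum over $u\le 0$ is attained at $u=0$. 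That value is $g_t(0)=\ov\Phi(0)$. Substituting into the formula above gives the claimed expression.

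\emph{Finiteness.} The unconstrained infimum is finite because $\ov\Phi\ge 0$ up to a constant, as $\Phi\ge0$. So $\sT(t)$ is well defined, and $\sT(0)=0$.

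The only delicate point is matching the exact form of the cited theorem: some versions state $\sT$ with the constrained infimum over $u\le 0$ rather than with $\ov\Phi(0)$. The monotonicity argument above reconciles the two forms, so I do not expect a genuine obstacle.
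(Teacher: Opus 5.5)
Your proposal is correct and matches the paper's proof. The paper likewise checks only that $\ov\Phi$ is convex and differentiable at $0$ with $\ov\Phi'(0)=-1<0$, then invokes \citet[Theorem~4.1]{MaoMohriZhong2024}; your extra monotonicity step showing $\inf_{u\le 0} g_t(u)=g_t(0)=\ov\Phi(0)$ is valid but not needed, since the cited theorem is already stated in the $\ov\Phi(0)$ form, and your step simply re-derives it from the more primitive $\inf_{u\le 0}$ characterization of \citet{Awasthi2022Hconsistency}.
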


See Appendix~\ref{app:T-binary} for a proof.  The following result shows that
the transformation is bounded below by a linear function of $t$. The proof is
presented in Appendix~\ref{app:lin-binary}.

\begin{restatable}{lemma}{LinBinary}
  \label{lemma:lin-binary}
  For all $t \in [0, 1]$, $\sT(t) \geq t$ and $\sT(0) = 0$.
\end{restatable}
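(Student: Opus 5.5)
Using Theorem~\ref{thm:T-binary}, the goal is to show that for every $u\in\Rset$,
\[
g_t(u) := \tfrac{1-t}{2}\ov\Phi(-u)+\tfrac{1+t}{2}\ov\Phi(u) \le \ov\Phi(0) - t .
\]
Writing $c = \Phi(0)/\Phi'(0)$, we have $\ov\Phi(0)=1+c$. The claim $\sT(0)=0$ is immediate: at $t=0$, convexity of $\ov\Phi$ gives $g_0(u)=\tfrac12(\ov\Phi(u)+\ov\Phi(-u))\ge\ov\Phi(0)$, with equality at $u=0$, so the infimum equals $\ov\Phi(0)$. For $t>0$ it suffices to find \emph{one} $u$ achieving the required upper bound, and I would take the hinge point $u=1$.

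At $u=1$ we are at the boundary of the linear core. Directly from~\eqref{eq:Phi}, $\ov\Phi(1)=c$ and $\ov\Phi(-1)=2+c$. Hence
\[
g_t(1)=\tfrac{1-t}{2}(2+c)+\tfrac{1+t}{2}c = 1-t+c = \ov\Phi(0)-t ,
\]
which gives $\sT(t)\ge \ov\Phi(0)-g_t(1)=t$. Equivalently, since $g_t$ is linear on $[-1,1]$ with slope $\tfrac{1-t}{2}-\tfrac{1+t}{2}=-t$, moving from $0$ to $1$ decreases $g_t$ by exactly $t$.

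There is no real obstacle. The only care needed is checking the branch values at $\pm1$, which are determined by continuity of the pieces in~\eqref{eq:Phi}: $\Phi(0)/\Phi'(0)$ on the right and $\Phi(0)/\Phi'(0)+2$ on the left. The same argument works verbatim for $\wt\Phi$, since it agrees with $\ov\Phi$ on $[-1,1]$.
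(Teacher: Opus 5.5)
Your proposal is correct and follows essentially the paper's proof: $\sT(t)\ge t$ comes from evaluating the objective at the hinge point $u=1$ of the linear core, where it equals $\ov\Phi(0)-t$. For $\sT(0)=0$ you use midpoint convexity, $\tfrac12(\ov\Phi(u)+\ov\Phi(-u))\ge\ov\Phi(0)$, which is a cleaner and fully rigorous replacement for the paper's informal check of the outer branches.
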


By Lemma~\ref{lemma:lin-binary} together with
\citet[Theorem~4]{Awasthi2022Hconsistency}, we obtain a linear $\sH$-consistency
bound for the surrogate losses $\ell_{\ov \Phi}$.

\begin{restatable}[Linear $\sH$-consistency bound]{theorem}{BoundBinary}
  \label{thm:bound-binary}
  Let $\sH$ be a complete hypothesis set.  Then, for all $h \in \sH$,
  \ifdim\columnwidth=\textwidth {
      \begin{equation*}
        \sE_{\ell_{0-1}}(h) - \sE^*_{\ell_{0-1}}(\sH) + \sM_{\ell_{0-1}}(\sH)
        \leq
        \sE_{\ell_{\ov \Phi}}(h) - \sE^*_{\ell_{\ov \Phi}}(\sH) + \sM_{\ell_{\ov \Phi}}(\sH).
      \end{equation*}
    }\else {
      \begin{multline*}
        \sE_{\ell_{0-1}}(h) - \sE^*_{\ell_{0-1}}(\sH) + \sM_{\ell_{0-1}}(\sH)\\
        \leq
        \sE_{\ell_{\ov \Phi}}(h) - \sE^*_{\ell_{\ov \Phi}}(\sH) + \sM_{\ell_{\ov \Phi}}(\sH).  
      \end{multline*}
    }\fi
\end{restatable}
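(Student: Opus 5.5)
The plan is to combine Theorem~\ref{thm:T-binary}, Lemma~\ref{lemma:lin-binary}, and the general $\sH$-consistency machinery of \citet[Theorem~4]{Awasthi2022Hconsistency}. That result states the following. Suppose a non-decreasing function $\Psi$ with $\Psi(0)=0$ lower bounds the conditional calibration gap. Concretely, for every $x$ and every conditional distribution with $\eta(x) = \P(Y=+1 \mid x)$, and every $h \in \sH$,
\[
\Psi\paren*{\Delta \sC_{\ell_{0-1},\sH}(h,x)} \leq \Delta \sC_{\ell_{\ov \Phi},\sH}(h,x).
\]
Here $\Delta \sC_{\ell,\sH}(h,x)$ denotes the conditional regret $\E_y[\ell(h,x,y)\mid x] - \inf_{h'\in\sH}\E_y[\ell(h',x,y)\mid x]$. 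If, in addition, $\Psi$ is convex, then $\Psi$ transfers to the full $\sH$-consistency bound with minimizability gaps. So it suffices to exhibit such a $\Psi$ that is the identity.

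The key steps, in order, are as follows.

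\textbf{Step 1.} Using completeness of $\sH$, compute the conditional 0-1 regret. It is zero when $\sign(h(x))$ agrees with the sign of $2\eta(x)-1$, and equals $|2\eta(x)-1|$ otherwise.

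\textbf{Step 2.} Bound the surrogate regret in the misclassified case. Write $t = |2\eta(x)-1|$ and consider $h$ with the wrong sign; without loss of generality $\eta \geq 1/2$ and $h(x) \leq 0$. The conditional surrogate risk is $\eta\,\ov\Phi(h(x)) + (1-\eta)\ov\Phi(-h(x))$. This is a convex function of the scalar $h(x)$, and it has derivative $(1-\eta) - \eta = -t \leq 0$ at $0$. Hence its minimum over $h(x)\leq 0$ is attained at $h(x)=0$, where it equals $\ov\Phi(0)$. The infimum over all of $\sH$ is $\inf_u \{\tfrac{1+t}{2}\ov\Phi(u) + \tfrac{1-t}{2}\ov\Phi(-u)\}$. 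So the surrogate regret is at least exactly the expression $\sT(t)$ of Theorem~\ref{thm:T-binary}, and by Lemma~\ref{lemma:lin-binary}, $\sT(t) \geq t$. In the correctly classified case the 0-1 regret vanishes, so the inequality holds trivially with $\Psi(0)=0$.

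\textbf{Step 3.} Take $\Psi(t)=t$, which is convex, non-decreasing and vanishes at $0$. Steps 1 and 2 give $\Delta\sC_{\ell_{0-1},\sH}(h,x) \leq \Delta\sC_{\ell_{\ov\Phi},\sH}(h,x)$ pointwise. Taking expectations over $x$ and invoking the identity
\[
\E_x[\Delta\sC_{\ell,\sH}(h,x)] = \sE_\ell(h)-\sE^*_\ell(\sH)+\sM_\ell(\sH)
\]
(immediate from the definition of $\sM_\ell$) yields the stated inequality directly; the linearity of $\Psi$ means Jensen's inequality is not even needed.

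The only real content is Step 2, and in particular the tie-breaking convention of $\sign$ at $h(x)=0$. Under $\sign(0)$'s convention, the point $h(x)=0$ counts as misclassified for one of the two labels. The bound must hold there too, which is fine because the surrogate regret at $0$ is exactly $\sT(t)$. The substantive inequality $\sT(t)\geq t$ is already supplied by Lemma~\ref{lemma:lin-binary}, so I expect no genuine obstacle beyond checking this case carefully.
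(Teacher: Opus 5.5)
Your proof is correct and has the same skeleton as the paper's: both reduce everything to the inequality $\sT(t)\geq t$ of Lemma~\ref{lemma:lin-binary}.

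The paper's proof is a two-line citation. It imports the bound $\sT\paren*{\sE_{\ell_{0-1}}(h)-\sE^*_{\ell_{0-1}}(\sH)+\sM_{\ell_{0-1}}(\sH)}\leq \sE_{\ell_{\ov\Phi}}(h)-\sE^*_{\ell_{\ov\Phi}}(\sH)+\sM_{\ell_{\ov\Phi}}(\sH)$ wholesale from \citet[Theorem~4.1]{MaoMohriZhong2024}, then substitutes $\sT(t)\geq t$.

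You instead re-derive the pointwise content of that cited theorem yourself:
\begin{itemize}
\item Completeness makes the conditional $0$-$1$ regret equal to $0$ or $|2\eta-1|$.
\item The map $u\mapsto \eta\,\ov\Phi(u)+(1-\eta)\,\ov\Phi(-u)$ is convex with derivative $1-2\eta\leq 0$ at the origin, so its minimum over the wrong closed half-line is attained at $u=0$. Hence the conditional surrogate regret is at least $\sT(t)$.
\item Because the final transformation is the identity, you integrate directly via $\E_x[\Delta\sC_{\ell,\sH}(h,x)]=\sE_\ell(h)-\sE^*_\ell(\sH)+\sM_\ell(\sH)$. No Jensen or convexity-of-$\Psi$ step is needed, so the appeal to \citet{Awasthi2022Hconsistency} becomes superfluous.
\end{itemize}

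What your version buys: it is self-contained and uses only the lower-bound direction, so the tightness part of Theorem~\ref{thm:T-binary} is never needed. It also shows exactly where convexity, $\ov\Phi'(0)=-1$, and the $\sign(0)$ convention enter; you handle that convention correctly by working on closed half-lines. The paper's version is shorter and inherits tightness of $\sT$ from the cited result, but tightness plays no role in the inequality itself.
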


\begin{proof}[Proof Sketch] The proof relies on the $\sH$-consistency framework
  of \citet{Awasthi2022Hconsistency}. The key step is analyzing the
  transformation function $\sT(t)$, which relates the surrogate estimation error
  to the target error. Since our surrogate $\overline{\Phi}$ is linear with
  slope $-1$ on the interval $[-1, 1]$, we show that the $\sT$ satisfies the
  lower bound $\sT(t) \geq t$ for all $t \in [0, 1]$
  (Lemma~\ref{lemma:lin-binary}). This linear lower bound directly implies the
  linear consistency rate $\Delta \sR \le \Delta \sL$ when
  $\sH = \sH_{\rm{all}}$.
\end{proof}

Intuition: Unlike smooth losses where the gradient vanishes at the origin
(causing ``flat'' landscapes and slow $\sqrt{t}$ transfer), the Linear-Core
surrogate maintains a non-zero gradient near the decision boundary.  This
forces the surrogate estimation error to scale linearly with the target error.

Combining Proposition~\ref{prop:properties} with Theorem~\ref{thm:bound-binary},
we obtain convex and smooth (even twice continuously differentiable) surrogate
losses with linear $\sH$-consistency bounds. Note that this does not conflict
with \citet[Theorem~4.2]{MaoMohriZhong2024}, since $\ov \Phi''(0) = 0$.

\subsection{One-sided smoothing}

We next analyze the consistency of the one-sided linear-core surrogate
$\wt \Phi$ defined in Eq.~\eqref{eq:Phi_one}. By
Proposition~\ref{prop:properties}, $\wt \Phi$ is convex and smooth. The linear
lower bound on the transformation also holds for this variant.

\begin{restatable}[Linear bound for one-sided smoothing]{lemma}{LinBinaryOne}
  \label{lemma:lin-binary-one}
  For $t \in [0, 1]$, define
  \begin{equation*}
    \sT_{\mathrm{one}}(t) = \wt \Phi(0) - \inf_{u \in \Rset} \paren*{\tfrac{1 - t}{2}\,\wt \Phi(-u) + \tfrac{1 + t}{2}\,\wt \Phi(u)}.
  \end{equation*}
  Then $\sT_{\mathrm{one}}(t) \geq t$ and $\sT_{\mathrm{one}}(0) = 0$.
\end{restatable}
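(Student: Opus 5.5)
We need $\inf_u g_t(u) \le \wt\Phi(0) - t$, where $g_t(u) = \tfrac{1-t}{2}\wt\Phi(-u) + \tfrac{1+t}{2}\wt\Phi(u)$. Write $c = \Phi(0)/\Phi'(0)$, so that $\wt\Phi(0) = 1 + c$.

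The plan is to bound the infimum from above by evaluating $g_t$ at the single point $u = 1$. No lower bound on $g_t$ is needed for the inequality $\sT_{\mathrm{one}}(t) \ge t$, since any value of $g_t$ upper bounds the infimum.

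\emph{Evaluating at $u=1$.} Both arguments $u = 1$ and $-u = -1$ lie in the linear branch $u \le 1$ of Eq.~\eqref{eq:Phi_one}. Hence $\wt\Phi(1) = c$ and $\wt\Phi(-1) = 2 + c$. Therefore
\begin{equation*}
g_t(1) = \tfrac{1-t}{2}(2+c) + \tfrac{1+t}{2}\,c = (1-t) + c .
\end{equation*}
It follows that $\sT_{\mathrm{one}}(t) \ge (1+c) - (1-t+c) = t$. This is the same computation as in Lemma~\ref{lemma:lin-binary}, because $\wt\Phi$ and $\ov\Phi$ agree on $[-1,1]$.

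\emph{Showing $\sT_{\mathrm{one}}(0)=0$.} Taking $u = 0$ gives $\sT_{\mathrm{one}}(0) \ge 0$, so it remains to show $\inf_u g_0(u) \ge \wt\Phi(0)$, where $g_0(u) = \tfrac12\bigl(\wt\Phi(u) + \wt\Phi(-u)\bigr)$. Since $\wt\Phi$ is convex (Proposition~\ref{prop:properties}), Jensen's inequality gives
\begin{equation*}
\tfrac12\bigl(\wt\Phi(u) + \wt\Phi(-u)\bigr) \ge \wt\Phi(0).
\end{equation*}
Hence $\inf_u g_0(u) = \wt\Phi(0)$ and $\sT_{\mathrm{one}}(0) = 0$.

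\emph{Remark on finiteness.} The infimum could be $-\infty$ for $t > 0$, because the left branch of $\wt\Phi$ is linear with slope $-1$. In that case $\sT_{\mathrm{one}}(t) = +\infty \ge t$ and the inequality still holds, so the argument is unaffected.

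No step is a real obstacle. The only care needed is to confirm that the point $u = 1$ falls in the linear region of $\wt\Phi$ for both arguments, and to invoke convexity for the $t = 0$ case.
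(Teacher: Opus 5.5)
Your proof is correct. For $\sT_{\mathrm{one}}(t)\ge t$ it is the paper's argument: on the linear core the objective equals $(1+c)-tu$, and evaluating at $u=1$ gives the bound.

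For $\sT_{\mathrm{one}}(0)=0$ you are more careful than the paper, which only says ``the same calculation shows'' it. Restricting to $[-1,1]$ yields only $\sT_{\mathrm{one}}(0)\ge 0$, and your Jensen step $\tfrac12\bigl(\wt\Phi(u)+\wt\Phi(-u)\bigr)\ge\wt\Phi(0)$ supplies the missing reverse inequality over all of $\Rset$.

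One small correction: your finiteness remark is mistaken, though harmless. Since $\Phi\ge 0$ and $c\ge 0$, you have $\wt\Phi\ge 0$; the slope-$-1$ branch tends to $+\infty$ as $u\to-\infty$, not $-\infty$. So the objective is nonnegative and the infimum is always finite.
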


The proof, given in Appendix~\ref{app:asym}, is essentially identical to the
proof of Lemma~\ref{lemma:lin-binary}, since both arguments rely only on the
linear core of $\wt \Phi$ over $[-1, 1]$. This shows that the fundamental linear
lower bound carries over unchanged to the one-sided case.

Furthermore, Theorem~\ref{thm:bound-binary} also remains valid without
modification, since its proof relies only on Lemma~\ref{lemma:lin-binary}, which
we have extended to the one-sided smoothing case in
Lemma~\ref{lemma:lin-binary-one}. The proof is presented in
Appendix~\ref{app:bound-binary-one}.

\begin{restatable}[Linear $\sH$-consistency bound for one-sided
  smoothing]{corollary}{BoundBinaryOne}
  \label{cor:bound-binary-one}
  Let $\sH$ be a complete hypothesis set.  Then for $\wt \Phi$, the following
  linear $\sH$-consistency bound holds: \ifdim\columnwidth=\textwidth {
      \begin{equation*}
        \forall h \in \sH, \quad
        \sE_{\ell_{0-1}}(h) - \sE^*_{\ell_{0-1}}(\sH)
        + \sM_{\ell_{0-1}}(\sH)
        \leq
        \sE_{\ell_{\wt \Phi}}(h) - \sE^*_{\ell_{\wt \Phi}}(\sH) + \sM_{\ell_{\wt \Phi}}(\sH).
      \end{equation*}
    }\else {
      \begin{multline*}
        \forall h \in \sH, \quad
        \sE_{\ell_{0-1}}(h) - \sE^*_{\ell_{0-1}}(\sH)
        + \sM_{\ell_{0-1}}(\sH)\\
        \leq
        \sE_{\ell_{\wt \Phi}}(h) - \sE^*_{\ell_{\wt \Phi}}(\sH) + \sM_{\ell_{\wt \Phi}}(\sH).
      \end{multline*}
    }\fi
\end{restatable}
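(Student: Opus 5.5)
The plan follows the route used for Theorem~\ref{thm:bound-binary}, with $\ov \Phi$ replaced by $\wt \Phi$. First, I will check that $\wt \Phi$ satisfies the hypotheses of \citet[Theorem~4.1]{MaoMohriZhong2024}. It is convex and differentiable by Proposition~\ref{prop:properties}. Since $0$ lies in the linear core, $\wt \Phi'(0) = -1 < 0$. It is also bounded below, since $\Phi \geq 0$ gives $\wt \Phi(u) = \Phi(1-u)/\Phi'(0) \geq 0$ for $u > 1$. For a complete hypothesis set, that theorem then yields the tight $\sH$-estimation error transformation in exactly the form of Theorem~\ref{thm:T-binary}:
\begin{equation*}
  \sT(t) = \wt \Phi(0) - \inf_{u \in \Rset} \paren*{\tfrac{1 - t}{2}\,\wt \Phi(-u) + \tfrac{1 + t}{2}\,\wt \Phi(u)}.
\end{equation*}
This is precisely the function $\sT_{\mathrm{one}}$ of Lemma~\ref{lemma:lin-binary-one}.

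Next, I will invoke Lemma~\ref{lemma:lin-binary-one}, which gives $\sT_{\mathrm{one}}(t) \geq t$ for all $t \in [0,1]$. By \citet[Theorem~4]{Awasthi2022Hconsistency}, for every $h \in \sH$ and every distribution,
\begin{equation*}
  \sT\paren*{\sE_{\ell_{0-1}}(h) - \sE^*_{\ell_{0-1}}(\sH) + \sM_{\ell_{0-1}}(\sH)} \leq \sE_{\ell_{\wt \Phi}}(h) - \sE^*_{\ell_{\wt \Phi}}(\sH) + \sM_{\ell_{\wt \Phi}}(\sH).
\end{equation*}
The argument of $\sT$ on the left lies in $[0,1]$, so the linear lower bound applies. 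Substituting $\sT(t) \geq t$ gives the claimed inequality.

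The one point that needs care is that \citet[Theorem~4]{Awasthi2022Hconsistency} is stated with a convex non-decreasing lower bound $\Psi$ on the conditional calibration gap, not with $\sT$ itself. I will take $\Psi(t) = t$, which is convex with $\Psi(0) = 0$, so it remains to show the pointwise inequality. For each $x$, let $\eta(x)$ be the conditional probability of the positive class and set $t = |2\eta(x) - 1|$. For any $h$ whose sign disagrees with the Bayes sign, the conditional surrogate gap is at least $\wt \Phi(0) - \inf_u(\cdot)$: by convexity, the conditional surrogate risk restricted to wrong-signed scores is minimized at $u = 0$. That lower bound equals $\sT_{\mathrm{one}}(t) \geq t$, which is exactly the conditional 0-1 gap.

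I expect this convexity step to be the only real check. It is the same argument as in the symmetric case, because it uses only the convexity of $\wt \Phi$ and $\wt \Phi'(0) < 0$, not the symmetric smoothing of $\ov \Phi$.
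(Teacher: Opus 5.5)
Your proposal is correct and follows essentially the same route as the paper: use convexity, differentiability and $\wt \Phi'(0) = -1 < 0$ to invoke \citet[Theorem~4.1]{MaoMohriZhong2024}, identify the transformation with $\sT_{\mathrm{one}}$, and conclude from $\sT_{\mathrm{one}}(t) \geq t$ (Lemma~\ref{lemma:lin-binary-one}). Your extra pointwise check is also valid: the conditional risk $\eta\, \wt \Phi(u) + (1-\eta)\, \wt \Phi(-u)$ is convex with derivative $-(2\eta - 1)$ at $0$, so over wrong-signed scores it is minimized at $u = 0$; with $\Psi(t) = t$ this makes explicit the step the paper leaves to the cited theorem.
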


Intuitively, $\wt \Phi$ inherits all desirable properties of $\ov \Phi$
(convexity, smoothness, and linear $\sH$-consistency bounds) while smoothing
only one side.  This is attractive when one prefers to soften the penalty for
large positive margins while keeping the negative side linear, e.g., when the
objective tolerates large positive scores but requires sharper control on the
negative side.

\subsection{Empirical Validation of Convergence Rates}
\label{sec:empirical_validation}

To illustrate the theoretical distinction between the linear $\sH$-consistency
of our Linear-Core Surrogates and the slower consistency of standard smooth
losses, we analyze a canonical \emph{biased coin} problem
\citep{bartlett2006convexity}. We consider a binary classification task where
the label probability is $\eta = 1/2 + \delta$, with $\delta > 0$ representing
the margin.

We compute the exact excess surrogate error $\Delta \sL$ and excess target error
$\Delta \sR$ analytically across a range of margins
$\delta \in [10^{-4}, 10^{-1}]$. This setup removes finite-sample optimization
noise and isolates the asymptotic convergence behavior of the loss
functions. Figure~\ref{fig:rates} reports the results.

\setlength{\columnsep}{10pt}
\setlength{\intextsep}{5pt}
\begin{wrapfigure}{r}{0.5\columnwidth}
  \includegraphics[width=0.5\columnwidth]{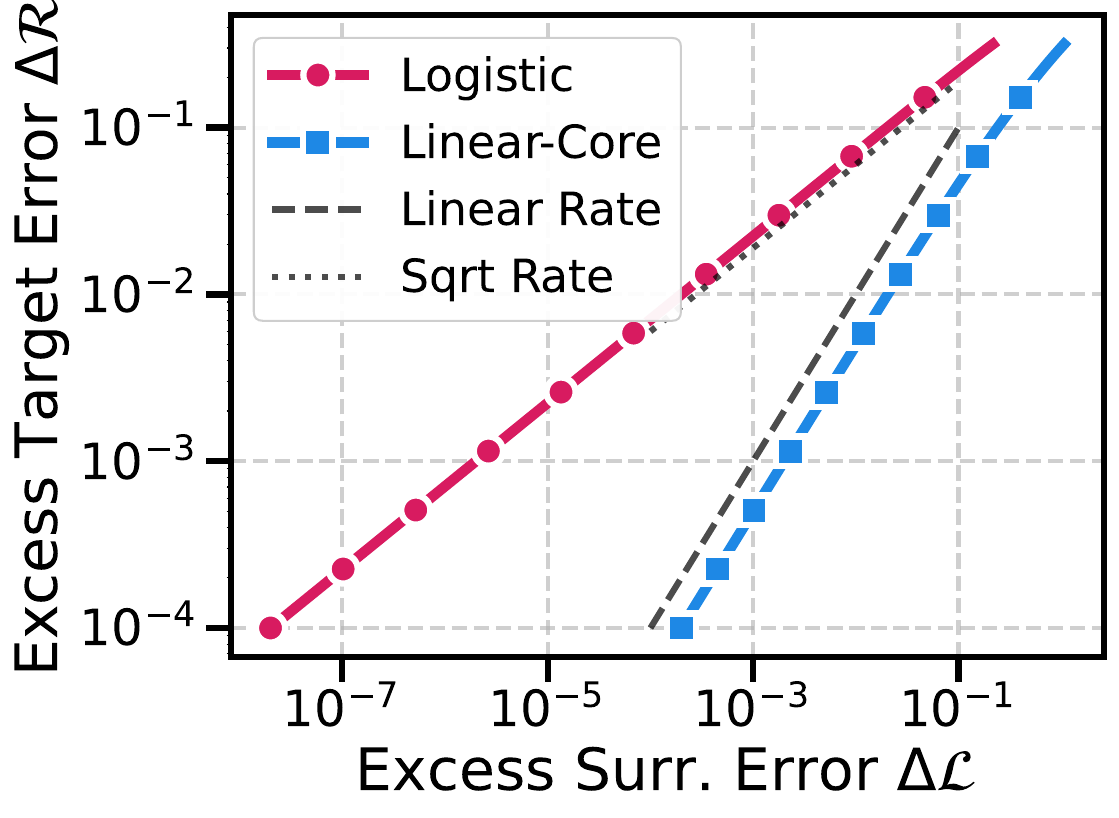}
  \vskip -.05in
  \caption{Rates: \textcolor{DarkBlue}{LC} vs. \textcolor{DarkRed}{Logistic}.}
  \label{fig:rates}
\end{wrapfigure}

The log-log plot confirms that the Linear-Core Surrogate (blue) maintains a
strict linear relationship (slope $\approx 1$) where
$\Delta \sR = O(\Delta \sL)$. In contrast, the standard Logistic loss (red)
exhibits the slower square-root relationship (slope $\approx 0.5$)
characteristic of losses with vanishing curvature, where
$\Delta \sR = O(\sqrt{\Delta \sL})$.  This confirms that for hard problems
(small $\delta$), minimizing the Linear-Core Surrogate translates to target
error reduction significantly faster than minimizing the logistic loss.
Appendix~\ref{app:stability} further considers a generalized family
parameterized by $\tau > 0$, showing that the linear rates are robust to this
choice.

\section{Multi-Class Classification}
\label{sec:multi}

Let $\sY$ be a finite label set with cardinality $|\sY| = n \geq 2$. We consider
score functions $h \colon \sX \times \sY \to \Rset$, where the vector
$h(x, \cdot) \in \Rset^n$ represents the scores assigned to each class.  For any
instance $(x, y) \in \sX \times \sY$, we define the pairwise margins as:
$ m_{y, y'}(x) \coloneqq h(x, y) - h(x, y')$ for all $ y' \in \sY$.  The target
$\ell_{\tar}$ is typically the multi-class zero-one loss $\sfL_{0-1}$, defined
as $ \sfL_{0-1}(h, x, y) = 1_{\hh(x) \neq y}$, where
$\hh(x) = \argmax_{y' \in \sY} h(x, y')$ denotes the label predicted by $h$ for
the input $x$.

\textbf{Sum losses.} Generalizing the formulation of \citep{weston1998multi}, we
work with \emph{sum} surrogates of the form
\begin{equation}
  \label{eq:sum_loss}
  \ell^{\mathrm{sum}}_{\Phi}(h, x, y) = \sum_{y' \neq y} \Phi \paren*{-m_{y, y'}(x) }.
\end{equation}
(Equivalently, one may sum over $y' \in \sY$; this choice differs only by an
additive constant when $\Phi(0)$ is finite and does not affect minimization.) We
define the multi-class smooth surrogates by replacing $\Phi$
in~\eqref{eq:sum_loss} with either the symmetric linear-core surrogate
$\ov \Phi$ or the one-sided smoothing $\wt \Phi$:
\begin{align*}
  \ell^{\mathrm{sum}}_{\ov \Phi}(h, x, y) &= \sum_{y' \neq y} \ov \Phi \paren*{ m_{y, y'}(x) },\\  
  \ell^{\mathrm{sum}}_{\wt \Phi}(h, x, y) &= \sum_{y' \neq y} \wt \Phi \paren*{ m_{y, y'}(x) }.
\end{align*}

\subsection{Convexity and smoothness.}

As established in Proposition~\ref{prop:properties}, these multi-class
surrogates preserve the desirable analytical properties of the original binary
surrogates. The losses $\ell^{\mathrm{sum}}_{\ov \Phi}$ and
$\ell^{\mathrm{sum}}_{\wt \Phi}$ are convex in the score vector $h(x, \cdot)$
and are globally $C^1$, and even $C^2$ under mild conditions on $\Phi$.
Figure~\ref{fig:multi_surfaces} visualizes these surfaces for a three-class
problem, plotting the loss as a function of the pairwise margins $m_1$ and
$m_2$; the linear plateau in the central region reflects the linear-core
structure.  Explicit closed-form expressions are given in
Appendix~\ref{app:multi_examples}.

\begin{figure}[t]
  \centering
  \includegraphics[width=0.45\columnwidth]{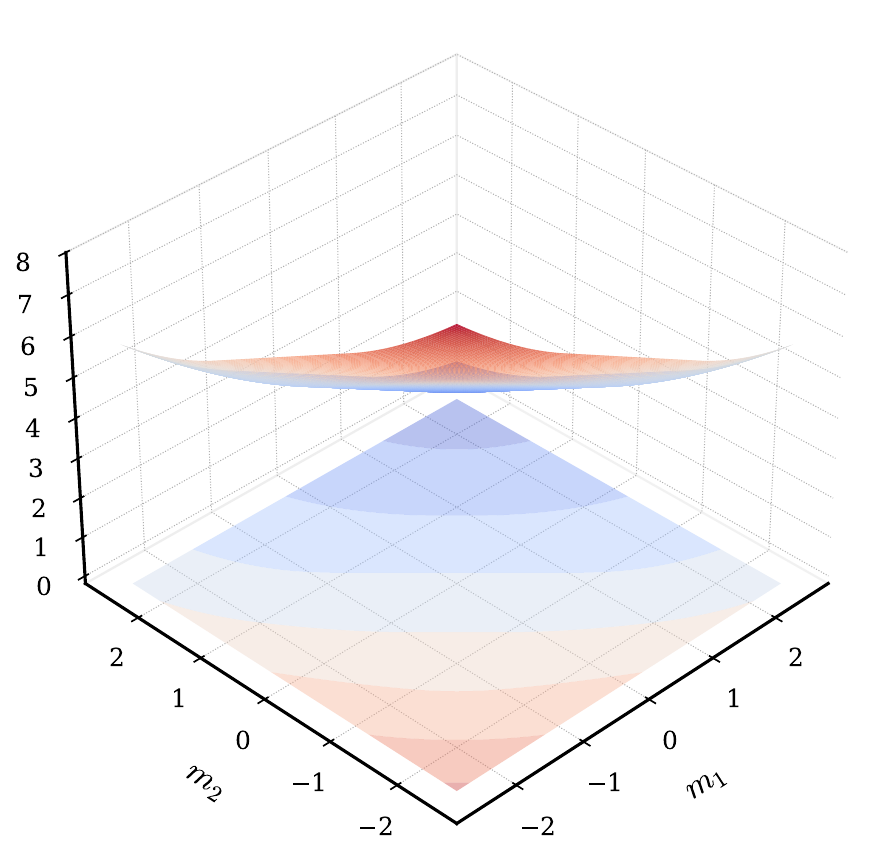} \hspace{.2cm}
  \includegraphics[width=0.45\columnwidth]{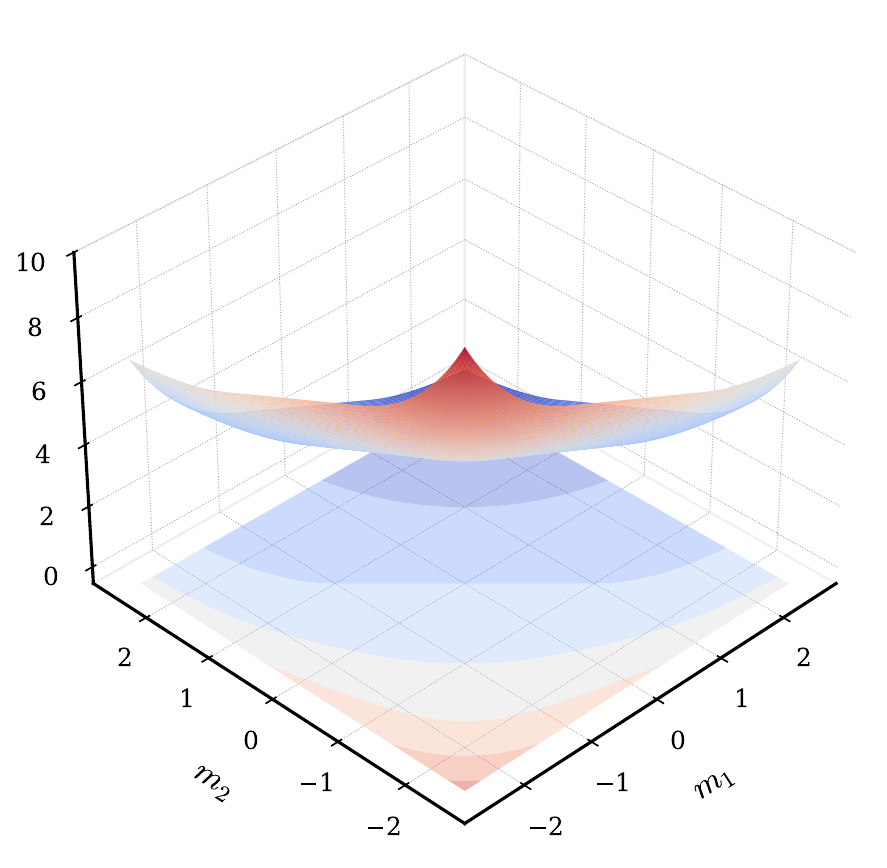}
  \caption{Multi-class sum loss surfaces
    $\ell^{\mathrm{sum}}_{\ov \Phi}$ for a 3-class problem. Left: Logistic
    LC Surrogate. Right: Exponential LC Surrogate.}
  \label{fig:multi_surfaces}
  \vskip -0.2in
\end{figure}

\subsection{Linear \texorpdfstring{$\sH$}{H}-Consistency Bound}
\label{sec:multi-bound}

We now establish a linear $\sH$-consistency bound with respect to the 0-1
loss. In contrast with the squared-hinge analysis (which leads to a
$\sqrt{\cdot}$ bound), the linear-core surrogates yield an exact linear bound
because their central branch is affine, guaranteeing fast-rate convergence to
the best-in-class classifier under minimal assumptions.

\begin{restatable}[Linear $\sH$-consistency bound for multi-class linear-core
  surrogates]{theorem}{BoundMulti}
  \label{thm:bound-multi}
  Assume $\sH$ is symmetric and complete.  Then, for any distribution and any
  $h \in \sH$, \ifdim\columnwidth=\textwidth {
      \begin{align*}
        \sR_{\sfL_{0-1}}(h) - \sR^{*}_{\sfL_{0-1}}(\sH) + \sM_{\sfL_{0-1}}(\sH)
        & \leq \sR_{\ell^{\mathrm{sum}}_{\ov \Phi} }(h) 
          - \sR^{*}_{ \ell^{\mathrm{sum}}_{\ov \Phi} }(\sH)
          + \sM_{ \ell^{\mathrm{sum}}_{\ov \Phi}}\paren*{\sH }, \\
        \sR_{\sfL_{0-1}}(h) - \sR^{*}_{\sfL_{0-1}}(\sH) + \sM_{\sfL_{0-1}}(\sH)
        & \leq
          \sR_{\ell^{\mathrm{sum}}_{\wt \Phi} }(h)
          - \sR^{*}_{\ell^{\mathrm{sum}}_{\wt \Phi} }(\sH)
          + \sM_{\ell^{\mathrm{sum}}_{\wt \Phi}}\paren*{\sH}.
      \end{align*}
    }\else {
      \begin{align*}
        &\sR_{\sfL_{0-1}}(h) - \sR^{*}_{\sfL_{0-1}}(\sH) + \sM_{\sfL_{0-1}}(\sH)\\
        &\qquad \leq \sR_{\ell^{\mathrm{sum}}_{\ov \Phi} }(h) 
          - \sR^{*}_{ \ell^{\mathrm{sum}}_{\ov \Phi} }(\sH)
          + \sM_{ \ell^{\mathrm{sum}}_{\ov \Phi}}\paren*{\sH }, \\
        &\sR_{\sfL_{0-1}}(h) - \sR^{*}_{\sfL_{0-1}}(\sH) + \sM_{\sfL_{0-1}}(\sH)\\
        & \qquad \leq
          \sR_{\ell^{\mathrm{sum}}_{\wt \Phi} }(h)
          - \sR^{*}_{\ell^{\mathrm{sum}}_{\wt \Phi} }(\sH)
          + \sM_{\ell^{\mathrm{sum}}_{\wt \Phi}}\paren*{\sH}.
      \end{align*}
    }\fi
\end{restatable}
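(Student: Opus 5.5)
We will follow the standard pointwise (conditional) route of the $\sH$-consistency framework \citep{Awasthi2022Hconsistency,mao2023cross}. Fix $x$, write $p(y) = \P(Y = y \mid x)$, and let $\Delta\sC_{\ell}(h, x) = \E_y[\ell(h,x,y)\mid x] - \inf_{h' \in \sH}\E_y[\ell(h',x,y)\mid x]$ be the conditional regret. Since $\sE_\ell(h) - \sE^*_\ell(\sH) + \sM_\ell(\sH) = \E_x[\Delta\sC_\ell(h,x)]$, it suffices to prove the pointwise inequality
\[
\Delta\sC_{\sfL_{0-1}}(h,x) \leq \Delta\sC_{\ell^{\mathrm{sum}}_{\phi}}(h,x), \qquad \phi \in \curl*{\ov\Phi, \wt\Phi},
\]
and then take expectations. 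For symmetric and complete $\sH$, the zero-one regret is $\max_y p(y) - p(\hh(x))$, and it vanishes unless $\hh(x) \neq y_{\max} \coloneqq \argmax_y p(y)$. So we will assume $\hh(x) = y_1 \neq y_2 = y_{\max}$ and need to show that the surrogate regret is at least $p(y_2) - p(y_1)$.

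The key step is a swap perturbation. Let $h'$ be the hypothesis that agrees with $h(x,\cdot)$ except that the scores of $y_1$ and $y_2$ are moved toward each other. Concretely, set $h'(x,y_1) = h(x,y_1) - \mu$ and $h'(x,y_2) = h(x,y_2) + \mu$, where $\mu$ ranges over $[0, \mu^*]$ for a suitable $\mu^*$; symmetry and completeness guarantee $h' \in \sH$ pointwise. Then
\[
\Delta\sC_{\ell^{\mathrm{sum}}_\phi}(h,x) \geq \sup_{\mu}\bigl(\E_y[\ell^{\mathrm{sum}}_\phi(h,x,y)] - \E_y[\ell^{\mathrm{sum}}_\phi(h',x,y)]\bigr).
\]
In the expectation $\sum_y p(y)\sum_{y'\neq y}\phi(h(x,y)-h(x,y'))$, the only terms that change are those involving $y_1$ or $y_2$. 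We will separate them into three groups.
\begin{itemize}
\item[(i)] The pair terms $p(y_1)\phi(m_{12}) + p(y_2)\phi(-m_{12})$, where $m_{12} = h(x,y_1) - h(x,y_2) \geq 0$.
\item[(ii)] The cross terms with a third label $y$, namely $p(y)[\phi(h_y - h_1) + \phi(h_y - h_2)]$.
\item[(iii)] The terms $p(y_1)\sum_{y}\phi(h_1 - h_y) + p(y_2)\sum_y \phi(h_2 - h_y)$.
\end{itemize}
For (ii), convexity of $\phi$ shows that moving $h_1$ and $h_2$ symmetrically toward their midpoint does not increase $\phi(h_y-h_1)+\phi(h_y-h_2)$. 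For (iii), a rearrangement argument shows the sum does not increase when $p(y_2)\geq p(y_1)$: the larger weight is placed on the increased score. We will verify this by convexity, comparing derivatives and using $\phi' \leq 0$ with $\phi'$ nondecreasing.

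The remaining step is to show that the pair term (i) drops by at least $p(y_2)-p(y_1)$, and we expect this to be the main obstacle. The two-label pair term has exactly the binary form of Theorem~\ref{thm:T-binary} with $t = \frac{p(y_2)-p(y_1)}{p(y_1)+p(y_2)}$, scaled by $p(y_1)+p(y_2)$. If we were free to re-optimize $m_{12}$ to the binary minimizer, Lemma~\ref{lemma:lin-binary} (respectively Lemma~\ref{lemma:lin-binary-one} for $\wt\Phi$) would give a drop of at least $(p(y_1)+p(y_2))\,t = p(y_2)-p(y_1)$, using that $m_{12}\geq 0$ already gives value at least $\phi(0)$ times the weight, since the linear core has slope $-1$.

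The difficulty is that the perturbation must move $m_{12}$ all the way to the binary optimum while keeping (ii) and (iii) non-increasing. The optimum lies at $u \geq 1$ in the direction favoring $y_2$, i.e., it requires crossing the midpoint. Our first attempt will be to avoid this: we will only bring $m_{12}$ to $-1$, so that $y_2$ leads by $1$. On $[-1,1]$ the core is linear, so the pair term changes by exactly $(p(y_2)-p(y_1))(m_{12}+1) \geq p(y_2)-p(y_1)$. For the other groups, if monotonicity past the midpoint fails, we will instead use a full swap of $h_1$ and $h_2$, composed with a shift, for which (ii) is invariant and (iii) decreases by the rearrangement inequality. We will then bound the pair term directly through the slope $-1$ on the core together with convexity outside it. The same argument applies verbatim to $\wt\Phi$, because only the core and convexity are used.
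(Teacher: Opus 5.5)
\textbf{Gap: the step you flag as the main obstacle cannot be closed, because the statement itself is false.}

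Your perturbation does succeed when $m_{12}\ge 1$. Then $\mu=(m_{12}+1)/2\le m_{12}$, so each (ii) pair $\{h_y-h_1,h_y-h_2\}$ only becomes less spread. The (iii) term is convex in $\mu$ and, by rearrangement, no larger at the full swap $\mu=m_{12}$ than at $\mu=0$, hence no larger on $[0,m_{12}]$. This rearrangement step uses $\phi$ nonincreasing, which the hypotheses on $\Phi$ do not guarantee (e.g.\ $\Phi(u)=u^2+u+1$). Finally, the supporting line of slope $-1$ at $0$ shows the pair term drops by at least $(p(y_2)-p(y_1))(1+m_{12})$.

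For $m_{12}<1$, however, reaching $m_{12}-2\mu=-1$ forces $\mu>m_{12}$. That makes the (ii) pairs more spread than before and can increase (iii). The full swap alone gains only $2m_{12}(p(y_2)-p(y_1))$ on the pair term, which vanishes at near-ties. Any extra shift separating $y_2$ from $y_1$ brings back the same increases, and your closing sentence does not supply the missing estimate.

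No argument can supply it, because the claimed inequality fails. Take $\Phi(u)=e^u$ and three labels with $p(y_2)=0.45$, $p(y_1)=0.35$, $p(y_3)=0.2$. Set $h(x,y_1)=h(x,y_2)+\epsilon$ (so $\hh(x)=y_1$) and $h(x,y_2)-h(x,y_3)=1+\tfrac12\ln 2$.
\begin{itemize}
\item As $\epsilon\to 0$, the conditional $\ell^{\mathrm{sum}}_{\ov\Phi}$-error tends to $2.4+0.8\sqrt2\approx 3.531$.
\item The conditional risk is convex and $C^1$ in the score differences. It has a stationary point, hence its global minimum, at $h(x,y_2)-h(x,y_1)\approx 0.480$, $h(x,y_1)-h(x,y_3)\approx 1.092$, with value $\approx 3.499$.
\item The surrogate regret is therefore about $0.033$, while the $0$-$1$ regret is $0.1$.
\item For $\wt\Phi$ (same $p$, with $h(x,y_2)-h(x,y_3)=1+\ln 2$) one gets about $3.477$ versus $3.401$, a regret of about $0.077<0.1$.
\end{itemize}

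The paper's proof never meets your obstacle only because it asserts that all pairwise minimizers can be realized simultaneously, so that the conditional regret equals the sum of pairwise regrets. That assertion fails here. For $\ov\Phi$ with this $\Phi$, the pairwise minimizer for $p(y)>p(y')$ is $u^*_{yy'}=1+\tfrac12\ln(p(y)/p(y'))$, and $u^*_{y_2y_3}=u^*_{y_2y_1}+u^*_{y_1y_3}-1$. So the sum of pairwise infima ($\approx 3.423$) lies strictly below the joint infimum ($\approx 3.499$). In general one only gets that the conditional regret is \emph{at most} the sum of pairwise regrets, which is the wrong direction for a lower bound. Your competitor-based direction is the valid one; it simply cannot give constant $1$ near ties.
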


\begin{proof}[Proof Sketch]
  We decompose the conditional surrogate regret (see
  Appendix~\ref{app:definitions} for the exact definition) into a sum of
  pairwise regrets between labels. By lower bounding the total regret using only
  the specific pair $(y_{\max}, \hh(x))$, we reduce the problem to the binary
  case. Since the Linear-Core loss has a constant gradient of magnitude $1$ at
  the origin (unlike the vanishing gradient of smooth losses), the pairwise
  regret provides a linear lower bound on the probability difference
  $p(y_{\max}\mid x) - p(\hh(x) \mid x)$, which corresponds exactly to the 0-1
  conditional regret (Lemma~\ref{lemma:explicit_assumption_01} in
  Appendix~\ref{app:auxiliary-multi}).
\end{proof}

The \emph{sum} formulation in \eqref{eq:sum_loss} aggregates pairwise margins
against all competing labels, which has two pleasant consequences in our
setting.  First, replacing $\Phi$ by either linear-core surrogate $\ov\Phi$ or
$\wt\Phi$ preserves convexity in the score vector $h(x, \cdot)$ and grants
global $C^1$-smoothness, and even $C^2$-smoothness under mild assumptions on
$\Phi$ (Proposition~\ref{prop:properties}).  Second, the pairwise structure lets
us reduce conditional regret lower bounds to a family of \emph{two-class}
one-dimensional optimization problems that can be solved in closed form
(Lemmas~\ref{lem:restricted-interval-optimizer} in
Appendix~\ref{app:auxiliary-multi}).  Compared with sum
squared-hinge/exponential surrogates (which yield a $\sqrt{\cdot}$ transfer)
\citep{awasthi2022multi}, the linear-core surrogates admit a \emph{linear}
$\sH$-consistency bound because their middle branch is affine with slope $-1$ at
the origin; this ensures that the pointwise supporting-line lower bound holds.

\subsection{Empirical Validation: Robustness to Noise}
\label{sec:multi_experiments}

While Section~\ref{sec:multi} established the theoretical properties of our
Linear-Core surrogates, their practical use is best demonstrated by their
robustness to realistic data corruption. Standard losses like Cross-Entropy
(CE), also known as logistic
loss~\citep{Verhulst1838,Verhulst1845,Berkson1944,Berkson1951}, are particularly
sensitive to \emph{Instance-Dependent Noise (IDN)}
\citep{berthon2021confidence,cheng2020learning,du2015modelling}. Unlike the
uniform noise model (also known as symmetric label noise)
\citep{van2015learning,ghosh2017robust}, IDN concentrates corruption near the
decision boundary, where the probability of mislabeling correlates with feature
ambiguity (e.g., an image of a ``Dog'' resembling a ``Wolf''). This reflects a
far more realistic noise model encountered in real-world applications, as human
annotators rarely mislabel unambiguous examples far from the decision boundary
\citep{xia2020part}.

We hypothesize that the robustness of the One-Sided Linear-Core surrogate stems
from its gradient saturation near this critical boundary region.  As illustrated
in Figure~\ref{fig:gradient_invariance}, the Cross-Entropy loss (logistic loss)
has non-zero curvature ($\Phi'' > 0$) at the margin $u = 0$, and its gradient
magnitude varies continuously with the distance to the boundary.

\setlength{\columnsep}{10pt}
\setlength{\intextsep}{5pt}
\begin{wrapfigure}{r}{0.5\columnwidth}
  \includegraphics[width=0.49\columnwidth]{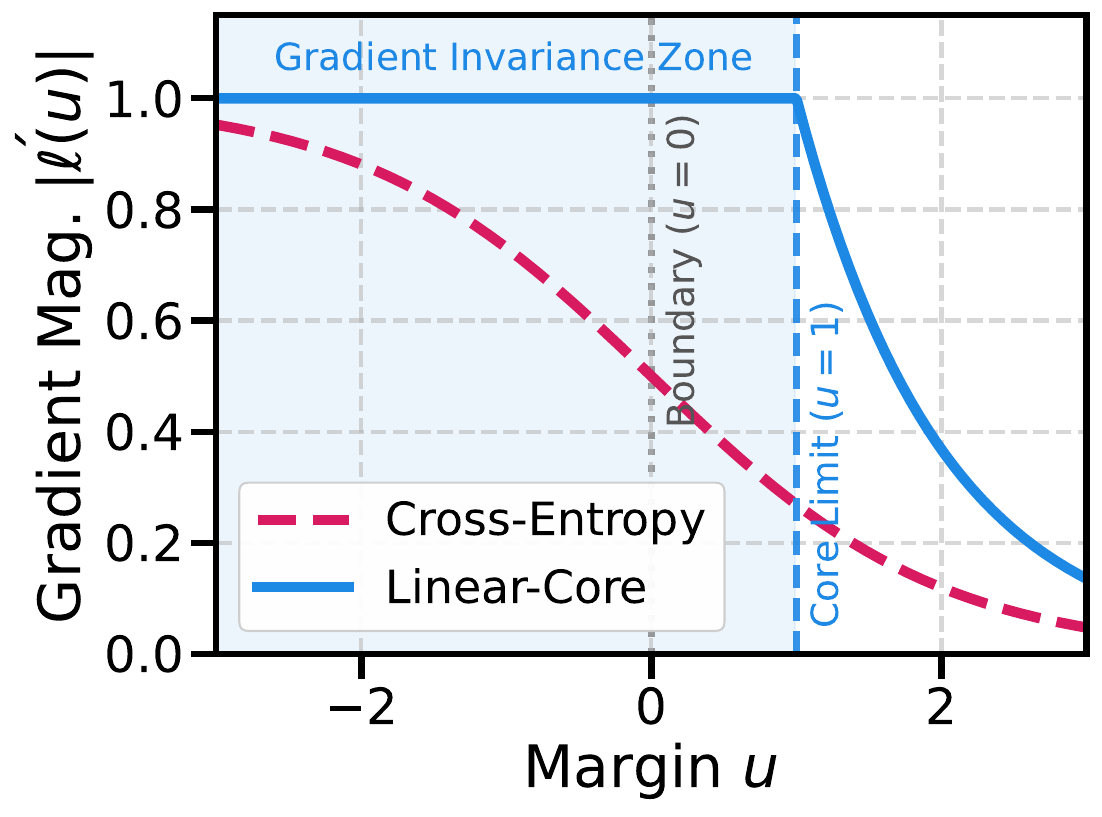}
  \vskip -.05in
  \caption{Gradient magnitudes: Linear-Core (Blue, saturated) vs. Logistic (Red,
    variable).}
  \label{fig:gradient_invariance}
\end{wrapfigure}

This allows the optimizer to reduce the total loss by making fine-grained shifts
to the decision boundary to accommodate ambiguous, noisy examples.  In contrast,
the One-Sided Linear-Core surrogate is strictly affine for all margins
$u \le 1$.  Consequently, its gradient is locally constant (invariant) with
respect to the margin for misclassified and near-boundary examples (see
Figure~\ref{fig:gradient_invariance}, blue line). This prevents the optimizer
from shifting the decision boundary to minimize the individual losses of
corrupted points, effectively acting as a `hard' regularizer that counts margin
violations rather than fitting their probability estimates.

To validate this, we compare our surrogate against the Cross-Entropy baseline on
CIFAR-10 \citep{Krizhevsky09learningmultiple} under feature-dependent label
noise. We also compare our method against the \emph{Generalized Cross-Entropy
  (GCE)} loss \citep{zhang2018generalized}, which has been shown to be
empirically robust against uniform label noise.

\textbf{Setup.}  We train a ResNet-18 \citep{he2016deep} using Stochastic
Gradient Descent (SGD) with momentum $0.9$ \citep{nesterov1983method}, weight
decay $5 \times 10^{-4}$, and a batch size of 128. The learning rate is
initialized at $0.1$ and annealed using a cosine schedule for 50 epochs.  We
introduce instance-dependent noise following the protocol of
\citet{xia2020part}: we project image features onto a random decision boundary
to generate flip probabilities, ensuring that visually ambiguous images are
significantly more likely to be mislabeled. We test noise rates
$\rho \in \{20\%, 30\%, 40\%, 50\%, 60\%\}$. We perform a grid search for the
hyperparameter $q \in (0, 1]$ of the GCE loss and report the best performance
for each noise rate to ensure a strong baseline.

\begin{table}[t]
  \caption{Test Accuracy on CIFAR-10 under Instance-Dependent Noise. Linear-Core
    outperforms baselines.}
  \label{tab:noise_robustness}
  \centering \resizebox{\columnwidth}{!}{
    \begin{tabular}{ccccc}
      \toprule
      IDN Rate ($\rho$) & Cross-Entropy & Gen. Cross-Entropy & Linear-Core & Improvement \\
                        & (CE) & (GCE, Best Tuned) & (Ours) & (vs. GCE) \\
      \midrule
      20\% & 83.20 & 83.23 & \textbf{84.24} & +1.01 \\
      30\% & 77.83 & 77.90 & \textbf{80.52} & +2.62 \\
      40\% & 72.88 & 72.93 & \textbf{75.49} & +2.56 \\
      50\% & 61.08 & 61.14 & \textbf{63.57} & +2.43 \\
      60\% & 37.50 & 37.57 & \textbf{39.86} & +2.29 \\
      \bottomrule
    \end{tabular}
  } \vskip -0.3in
\end{table}

\textbf{Results.}  Table~\ref{tab:noise_robustness} summarizes the results. The
weakness of GCE under instance-dependent noise is evident: its performance
tracks the standard CE baseline almost identically across all noise rates (e.g.,
a negligible $0.05\%$ difference at $40\%$ noise), confirming that simply
re-weighting the loss is insufficient when noise mimics hard examples.  Our
Linear-Core surrogate, however, establishes a distinct performance gap. It
consistently outperforms GCE and CE across the entire spectrum of noise
rates. At a low noise rate ($20\%$), our method already demonstrates superior
generalization with a $+1.01\%$ gain. The advantage becomes most pronounced at
moderate noise levels ($30\%$--$40\%$), where our surrogate surpasses the tuned
GCE baseline by approximately $2.6\%$. Crucially, this robustness is sustained
even under severe corruption: at $50\%$ and $60\%$ noise, where the signal is
heavily degraded, our method maintains a consistent lead of approximately
$2.3\%$--$2.4\%$.  These results confirm that the constant gradient in the
linear core effectively suppresses the signal from systematically corrupted,
near-boundary examples where GCE and CE fail.

\subsubsection{Mechanism Analysis: Gradient Invariance}

To understand the source of this robustness, we analyzed the gradient dynamics
of the loss functions during training. At epoch 40 (after the learning rate
decay), we recorded the gradient magnitude $|\ell'(h(x))|$ for two distinct
groups of training examples: \emph{clean samples} (where the label is correct)
and \emph{noisy samples} on CIFAR-10 with 40\% instance-dependent noise.

Figure~\ref{fig:gradient_invariance_hist} visualizes the distribution of these
gradients, revealing a striking difference in behavior. As shown in
Figure~\ref{fig:gradient_invariance_hist} (Left), the Cross-Entropy loss assigns
a broad range of high-magnitude gradients ($0.6$ to $1.0$) to noisy
samples. This indicates that the loss function is actively ``negotiating'' with
outliers, assigning variable penalties based on the model's confidence, which
drives the decision boundary to overfit these corrupted points.
    
In contrast, Figure~\ref{fig:gradient_invariance_hist} (Right) shows that the
Linear-Core surrogate exhibits a sharp, Dirac-like peak exactly at
$|\ell'| \approx 1.0$ for noisy samples. This confirms our theoretical
hypothesis: for samples with negative margins ($u \le 1$), the gradient
saturates and becomes \emph{invariant} to the magnitude of the error. This
``hard'' clipping effectively ignores the degree of ``wrongness'' for outliers,
preventing the optimizer from shifting the boundary to accommodate mislabeled
examples.

\begin{figure}[t]
  \centering
  \includegraphics[width=0.49\linewidth]{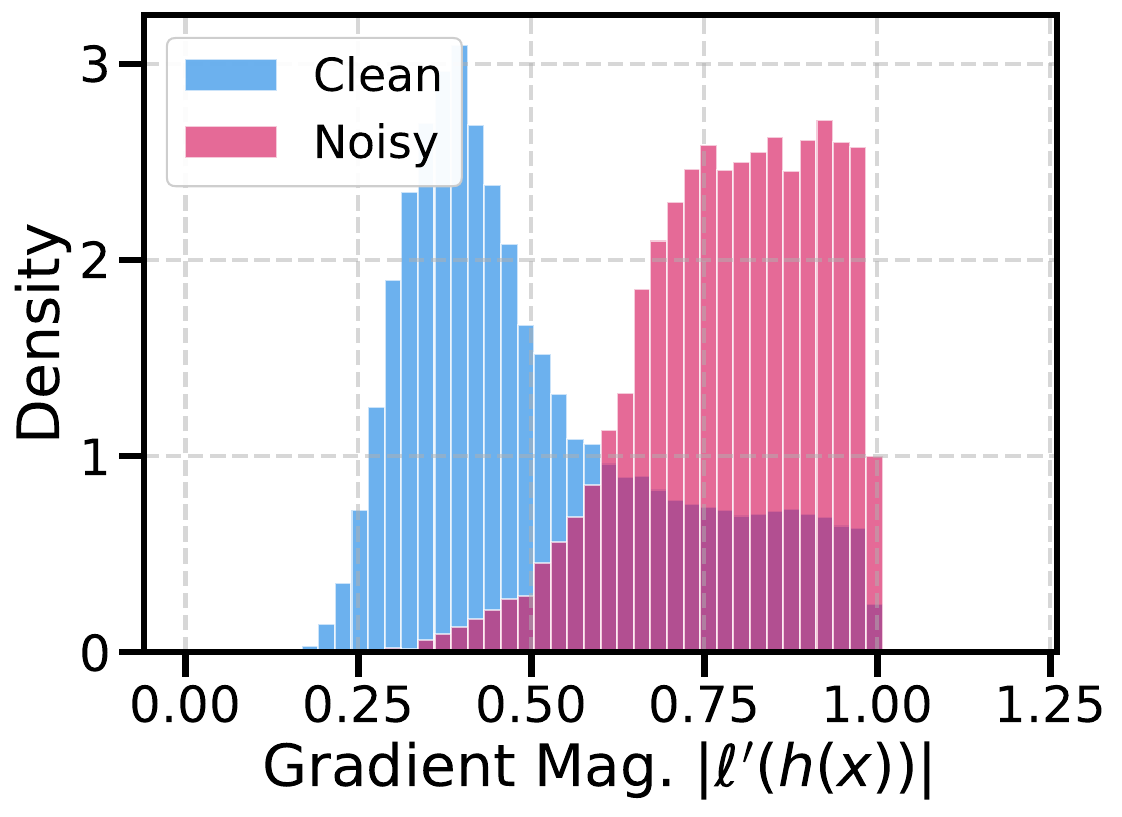}
  \includegraphics[width=0.49\linewidth]{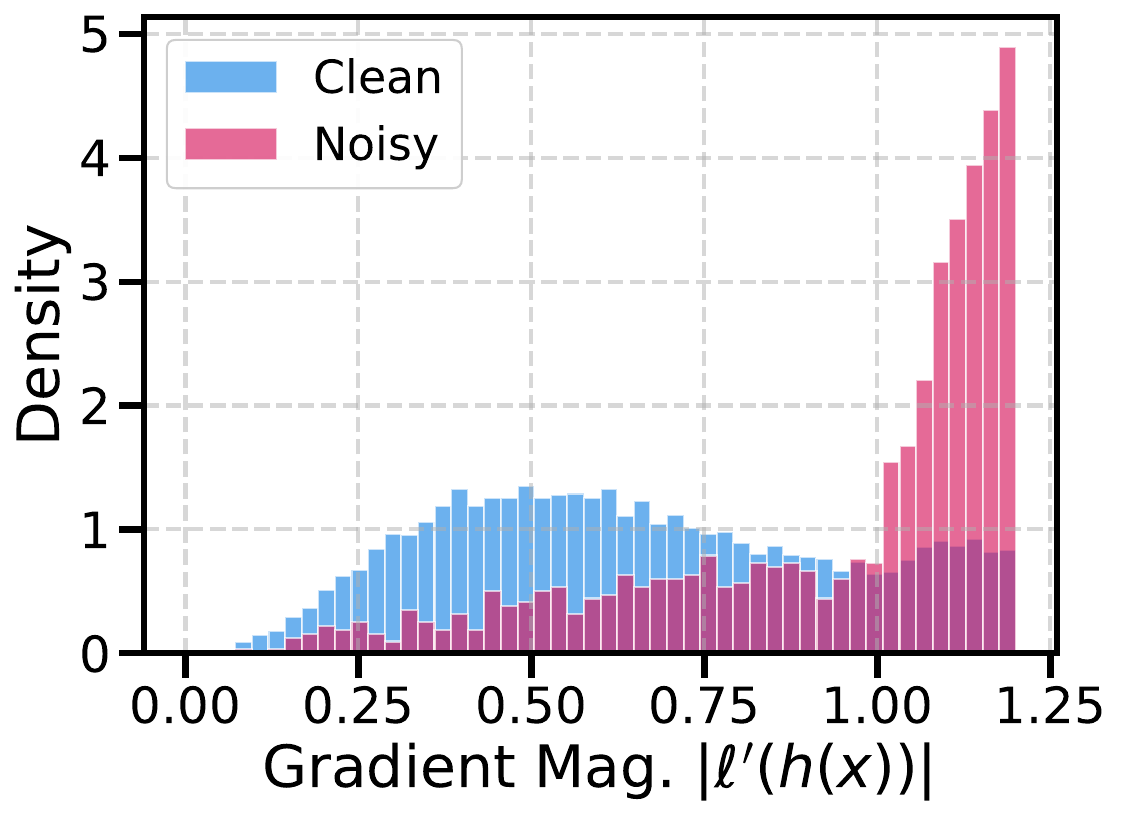}
  \caption{Gradient Magnitudes: CE (Left) vs. Linear-Core (Right).}
  \label{fig:gradient_invariance_hist}
  \vskip -0.3in
\end{figure}

\section{Structured Prediction}
\label{sec:struct}

We consider the general structured prediction setting where the output space
$\sY$ may be exponentially large. We consider a target loss
$\loss(h, x, y) = \ell(\hh(x), y)$, where
$\ell \colon \sY \times \sY \to \Rset_+$ is a non-negative auxiliary loss
function (e.g., Hamming loss) such that $\ell(y, y) = 0$ for all $y \in \sY$.

Following \citet{MaoMohriZhong2023structured}, we work with \emph{structured sum
  losses}. Let $\ov \ell(y', y) = 1 - \ell(y', y)$ denote the similarity
score. The structured sum loss is defined as:
$\forall (x, y) \in \sX \times \sY$,
\begin{equation*}
  \loss^{\mathrm{sum}}_{\Phi}(h, x, y) =
  \sum_{y' \in \sY} \ov \ell(y', y)
  \sum_{y''\neq y'} \Phi \paren*{ h(x, y'') - h(x, y') }.
\end{equation*}
This formulation effectively aggregates pairwise margins, weighted by the
structural similarity between the candidate $y'$ and the true label $y$.  We
define the structured prediction smooth surrogates by replacing the base
function $\Phi$ with either the symmetric linear-core surrogate $\ov \Phi$ or
the one-sided smoothing $\wt \Phi$:
\begin{align*}
  \loss^{\mathrm{sum}}_{\ov \Phi}(h, x, y) & = \sum_{y' \in \sY} \ov \ell(y', y)
                                             \sum_{y''\neq y'} \ov \Phi \paren*{m_{y', y''}(x)}, \\
  \loss^{\mathrm{sum}}_{\wt \Phi}(h, x, y) & = \sum_{y' \in \sY} \ov \ell(y', y)
                                             \sum_{y''\neq y'} \wt \Phi \paren*{m_{y', y''}(x)}.
\end{align*}

\subsection{Convexity and Smoothness}

The optimization landscape of the structured surrogate is determined by the
properties of $\ov \Phi$ and $\wt \Phi$.  Proposition~\ref{prop:properties}
confirms that the structural aggregation preserves the convexity of the base
scalar function, ensuring that the learning objective remains amenable to global
minimization. Furthermore, it establishes that our structured surrogates are
globally $C^1$, and $C^2$ under mild conditions on $\Phi$.

\subsection{Linear \texorpdfstring{$\sH$}{H}-Consistency Bound}
\label{sec:struct-bound}

We now state the main consistency result for structured prediction. Similar to
the multi-class setting, the affine behavior of the linear-core surrogates near
the origin allows us to derive a linear bound relating the estimation error of
the surrogate to that of the target structured loss.

\begin{restatable}[Linear $\sH$-consistency bound for structured
  prediction]{theorem}{BoundStruct}
  \label{thm:bound-struct}
  Assume $\sH$ is symmetric and complete. Let $\loss$ be the target structured
  loss defined by $\ell$.  Then, for any distribution and any $h \in \sH$, the
  following bounds hold: \ifdim\columnwidth=\textwidth {
      \begin{align*}
        \sR_{\loss}(h) - \sR^{*}_{\loss}(\sH) + \sM_{\loss}(\sH)
        & \leq \sR_{\loss^{\mathrm{sum}}_{\ov \Phi} }(h)
          - \sR^{*}_{ \loss^{\mathrm{sum}}_{\ov \Phi} }(\sH)
          + \sM_{ \loss^{\mathrm{sum}}_{\ov \Phi}}\paren*{\sH }, \\
        \sR_{\loss}(h) - \sR^{*}_{\loss}(\sH) + \sM_{\loss}(\sH)
        & \leq
          \sR_{\loss^{\mathrm{sum}}_{\wt \Phi} }(h)
          - \sR^{*}_{\loss^{\mathrm{sum}}_{\wt \Phi} }(\sH)
          + \sM_{\loss^{\mathrm{sum}}_{\wt \Phi}}\paren*{\sH}.
      \end{align*}
    }\else {
      \begin{align*}
        & \sR_{\loss}(h) - \sR^{*}_{\loss}(\sH) + \sM_{\loss}(\sH)\\
        & \qquad \leq \sR_{\loss^{\mathrm{sum}}_{\ov \Phi} }(h)
          - \sR^{*}_{ \loss^{\mathrm{sum}}_{\ov \Phi} }(\sH)
          + \sM_{ \loss^{\mathrm{sum}}_{\ov \Phi}}\paren*{\sH }, \\
        & \sR_{\loss}(h) - \sR^{*}_{\loss}(\sH) + \sM_{\loss}(\sH)\\
        & \qquad \leq \sR_{\loss^{\mathrm{sum}}_{\wt \Phi} }(h)
          - \sR^{*}_{\loss^{\mathrm{sum}}_{\wt \Phi} }(\sH)
          + \sM_{\loss^{\mathrm{sum}}_{\wt \Phi}}\paren*{\sH}.
      \end{align*}
    }\fi
\end{restatable}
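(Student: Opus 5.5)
We follow the standard conditional-regret route of the $\sH$-consistency framework, as in the proof of Theorem~\ref{thm:bound-multi}. By the general result of \citet{Awasthi2022Hconsistency}, extended to structured losses in \citet{MaoMohriZhong2023structured}, it is enough to prove a pointwise inequality. For every $x$ and every conditional distribution $p(\cdot \mid x)$, we need
\[
\Delta\sC_{\loss}(h,x) \le \Delta\sC_{\loss^{\mathrm{sum}}_{\phi}}(h,x), \qquad \phi\in\curl*{\ov\Phi,\wt\Phi}.
\]
Here $\Delta\sC$ denotes the conditional regret relative to $\inf_{h\in\sH}$. Taking expectations over $x$ then produces the stated bound, with the minimizability gaps appearing automatically.

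The first step rewrites the surrogate. Set $q(y' \mid x)=\sum_{y}p(y\mid x)\,\ov\ell(y',y)$. The conditional surrogate risk is then $\sum_{y'}q(y'\mid x)\sum_{y''\neq y'}\phi(h(x,y')-h(x,y''))$, a multi-class sum loss with nonnegative-free weights $q$. The conditional target risk is $\sum_y p(y\mid x)\ell(\hh(x),y)=1-q(\hh(x)\mid x)$, so its minimum over a symmetric complete $\sH$ is $1-\max_{y'}q(y'\mid x)$. The target regret is therefore
\[
q(y_{\max}\mid x)-q(\hh(x)\mid x), \qquad y_{\max}=\argmax_{y'} q(y'\mid x).
\]
The structured problem thus reduces to the multi-class problem of Theorem~\ref{thm:bound-multi}, except that the weights $q$ need not sum to one and may even be negative if $\ell>1$. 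I will carry a general weight vector through the argument, or assume $\ell\le 1$ as in \citet{MaoMohriZhong2023structured}.

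The second step lower bounds the surrogate regret using the single pair $(y_{\max},\hh(x))$. Fix all scores except $h(x,y_{\max})$ and $h(x,\hh(x))$. Completeness and symmetry let us construct a competitor $h^\mu$ that swaps or moves these two scores. Its conditional risk then upper bounds the infimum, giving
\[
\Delta\sC_{\loss^{\mathrm{sum}}_{\phi}}(h,x)\ge \sC(h,x)-\inf_\mu \sC(h^\mu,x).
\]
After cancellation, the difference involves only the terms containing the two labels. The pair term is a binary problem of the form $q_1\phi(u)+q_2\phi(-u)$ with $u=h(x,\hh(x))-h(x,y_{\max})\ge 0$. The cross terms against the other labels $y''$ form convex functions of the two moved scores. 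By convexity and the symmetric swap, I expect their change to be nonnegative, mirroring Lemma~\ref{lem:restricted-interval-optimizer}.

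The third step uses the linear core. Because $\phi$ is convex with $\phi'(0)=-1$, the supporting line at $0$ gives $\phi(v)\ge\phi(0)-v$. Moving to the competitor $\mu=0$, i.e., equal scores for the two labels, the pair term $q_{\hh}\phi(u)+q_{\max}\phi(-u)$ decreases by at least $(q_{\max}-q_{\hh})u$ plus a nonnegative remainder. A more careful choice of competitor is needed to get the full factor. Following the proof of Lemma~\ref{lemma:lin-binary}, the natural choice is to move the $y_{\max}$ score up by $1$, staying inside the affine core $[-1,1]$ where $\phi$ is exactly linear. On the core the change equals $(q_{\max}-q_{\hh})$ times the core length, so the bound is at least $q_{\max}-q_{\hh}$ after normalization.

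I expect the main obstacle to be controlling the cross terms with the remaining labels $y''$ when the two scores are moved. These terms must not increase, or their increase must be absorbed. My first attempt will be the swap-and-average construction: take the competitor as the average of $h$ and its swapped version, then use convexity and the identity of the swapped sums. If that fails, I will take the competitor to shift both scores within the linear core and use exact affinity there to show the cross terms cancel to first order.
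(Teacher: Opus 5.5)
\textbf{There is a genuine gap, and it cannot be closed.} Your reduction to the weights $q(y'\mid x)=\sum_y p(y\mid x)\,\ov\ell(y',y)$ is correct, as is the target regret $q(y_{\max}\mid x)-q(\hh(x)\mid x)$. You also identified the right obstacle: the cross terms with the remaining labels. But the pointwise inequality $\Delta\sC_{\loss,\sH}(h,x)\le\Delta\sC_{\loss^{\mathrm{sum}}_{\phi},\sH}(h,x)$ that you set out to prove is false once $|\sY|\ge 3$, and so is the statement itself.

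To see this, take $\sH$ to be all score functions and $\sD$ concentrated on one $x$, so the minimizability gaps vanish. Let $\ell$ be the 0-1 loss, so $q=p$ and $\loss^{\mathrm{sum}}_{\phi}$ is the multi-class sum loss. Let $\Phi=\exp$, so $\ov\Phi(v)=2-v$ on $[-1,1]$ and $\ov\Phi(v)+\ov\Phi(-v)-4=2\cosh(|v|-1)-2\ge(|v|-1)_+^2$. Take $\sY=\{1,2,3\}$, $p(\cdot\mid x)=(\tfrac13+\epsilon,\tfrac13,\tfrac13-\epsilon)$ and $h(x,\cdot)=(-\delta,0,-1)$.

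\begin{itemize}
\item Then $\hh(x)=2$, and the target regret is $\epsilon$.
\item All margins of $h$ lie in $[-1,1]$. Its surrogate conditional risk is therefore exactly $\sum_i p_i\sum_{j\neq i}(2-s_i+s_j)=4-3\epsilon(s_1-s_3)=4-3\epsilon(1-\delta)$.
\item For an arbitrary score vector $s$, put $w=s_1-s_3$. Bound every term by the supporting line $\ov\Phi(v)\ge 2-v$, except that for the pair $\{1,3\}$ you keep the nonnegative excess, which is at least $p_3\bigl(\ov\Phi(w)+\ov\Phi(-w)-4\bigr)$. This gives a conditional risk of at least $4-3\epsilon w+p_3(|w|-1)_+^2\ge 4-3\epsilon-9\epsilon^2$ for $\epsilon\le 1/12$.
\item Hence the surrogate regret is at most $3\epsilon\delta+9\epsilon^2$, against a target regret of $\epsilon$. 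For $\epsilon=0.01$ and $\delta=0.1$ this is $0.0039<0.01$ (numerically about $0.0037$). Letting $\epsilon,\delta\to 0$ rules out a linear bound with any constant.
\item The margins of $h$ lie in $[-1,1]$, where $\wt\Phi=\ov\Phi$. So the same $h$ also violates the $\wt\Phi$ inequality; only the $O(\epsilon^2)$ constant changes.
\end{itemize}

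Your candidate competitors fail concretely on this example. The swap, or its average with $h$, gains only $O(\epsilon\delta)$. Raising $s_1$ by $1$ moves $s_1-s_3$ from $0.9$ to $1.9$, which increases the $\{1,3\}$ term by about $0.27\gg\epsilon$.

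\textbf{The paper's proof has the same hole.} It avoids the cross terms by asserting that all pairwise infima are attained by a single score vector. The surrogate regret would then split into a sum of nonnegative pairwise regrets, and one could keep only the pair $\{y_{\max},\hh(x)\}$. That assertion is false. For $q_i>q_j$, the pair term $q_i\phi(u)+q_j\phi(-u)$ is strictly decreasing on $[-1,1]$, so its minimizers satisfy $u\ge 1$. For $\Phi=\exp$ the minimizer is $u^*_{ij}=1+\tfrac12\log(q_i/q_j)$, so $u^*_{12}+u^*_{23}=u^*_{13}+1$, and no score vector attains all three pairwise infima. In general one only has $\inf\sum\ge\sum\inf$. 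That bounds the surrogate regret from above by the sum of pairwise regrets, not from below. In the example, the paper's bound $(q_1-q_2)(1-m)=0.011$ exceeds the true regret of about $0.0037$. The single-pair argument, yours or the paper's, is valid only when $|\sY|=2$, where there are no cross terms.
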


\begin{proof}[Proof Sketch]
  The structured loss is defined as a sum of pairwise margins weighted by the
  structural distance $\overline{\ell}(y, y')$. We define a conditional regret
  that aggregates these pairwise terms. Similar to the multi-class case, we
  lower bound this sum by the contribution of the ``most violated'' pair
  relative to the prediction. Because the local margin loss is linear near zero,
  this contribution scales linearly with the target structural error, avoiding
  the square-root degradation typical of sums of smooth convex functions.
\end{proof}

In contrast to structured sum-exponential surrogates, which yield a square-root
rate \citep{MaoMohriZhong2023structured}, this result establishes a
\emph{linear} rate. The structured linear-core surrogates thus serve as valid
smooth proxies for the discrete structured error with strictly stronger
$\sH$-consistency guarantees.

\begin{remark}[Comparison of Convergence Rates]
  To explicitly compare our theorems to previous works, we highlight the
  theoretical difference in the convergence transfer rate dictated by the
  geometry of the surrogate losses:
  \begin{itemize}
  \item \textbf{Binary Classification:} Previous works analyzing standard smooth
    surrogates (e.g., Logistic, Exponential, Squared-Hinge) establish a
    square-root $\sH$-consistency bound, meaning the excess target risk
    $\Delta\sR$ is bounded by $O(\sqrt{\Delta\sL})$
    \citep{bartlett2006convexity,Awasthi2022Hconsistency}. Piecewise-linear
    losses like the Hinge loss achieve a fast linear bound
    ($\Delta\sR \le O(\Delta\sL)$) but are non-differentiable
    \citep{Awasthi2022Hconsistency}. Our Theorem~\ref{thm:bound-binary} proves
    that the Linear-Core surrogate achieves the strict linear rate of the Hinge
    loss while remaining globally $C^1/C^2$ smooth.
  \item \textbf{Multi-Class Classification:} For multi-class sum losses,
    previous smooth variants (e.g., sum-exponential or sum-squared-hinge)
    similarly suffer from a square-root transfer bound
    \citep{awasthi2022multi}. Our Theorem~\ref{thm:bound-multi} shows that by
    aggregating pairwise margins using our novel linear-core base, the
    multi-class Linear-Core surrogate strictly improves this to a linear bound.
  \item \textbf{Structured Prediction:} Previous work on structured
    sum-exponential surrogates \citep{MaoMohriZhong2023structured} yields a
    square-root rate. Our Theorem~\ref{thm:bound-struct} demonstrates that the
    structured Linear-Core surrogate elevates this to a fast linear rate.
  \end{itemize}
  To our knowledge, Linear-Core Surrogates are the first explicit family of loss
  functions to simultaneously guarantee $C^1/C^2$ smoothness and strict linear
  $\sH$-consistency across binary, multi-class, and structured domains.
\end{remark}

\subsection{Optimization and Computational Efficiency}
\label{sec:struct_optimization}

\textbf{Optimization Guarantees.}
While the standard Hinge loss (and similarly other piecewise linear loss
functions) is non-differentiable only at isolated points, this lack of
smoothness fundamentally alters the available convergence guarantees. Non-smooth
convex optimization relies on sub-gradient methods, which are theoretically
limited to a slow convergence rate of $O(1/\sqrt{T})$. In contrast, by
establishing that our linear-core surrogates are globally $C^1$ and admit valid
second-order approximations (Proposition~\ref{prop:properties}), we enable the
use of smooth gradient-based optimizers. For smooth convex functions, standard
gradient descent guarantees a faster rate of $O(1/T)$, and Nesterov’s
accelerated gradient methods can achieve the optimal rate of $O(1/T^2)$.

Furthermore, in the structured prediction setting, the loss
$\loss^{\mathrm{sum}}_{\ov \Phi}$ aggregates margins over an exponentially large
output space. Here, the non-differentiable \emph{kinks} of a standard Hinge loss
form a complex arrangement of hyperplanes rather than a single point, often
causing sub-gradient methods to oscillate and stall.  To verify this, we
conducted a controlled experiment on a synthetic isotropic binary classification
problem with orthogonal features.

\setlength{\columnsep}{10pt}
\setlength{\intextsep}{5pt}
\begin{wrapfigure}{r}{0.5\columnwidth}
  \includegraphics[width=0.5\columnwidth]{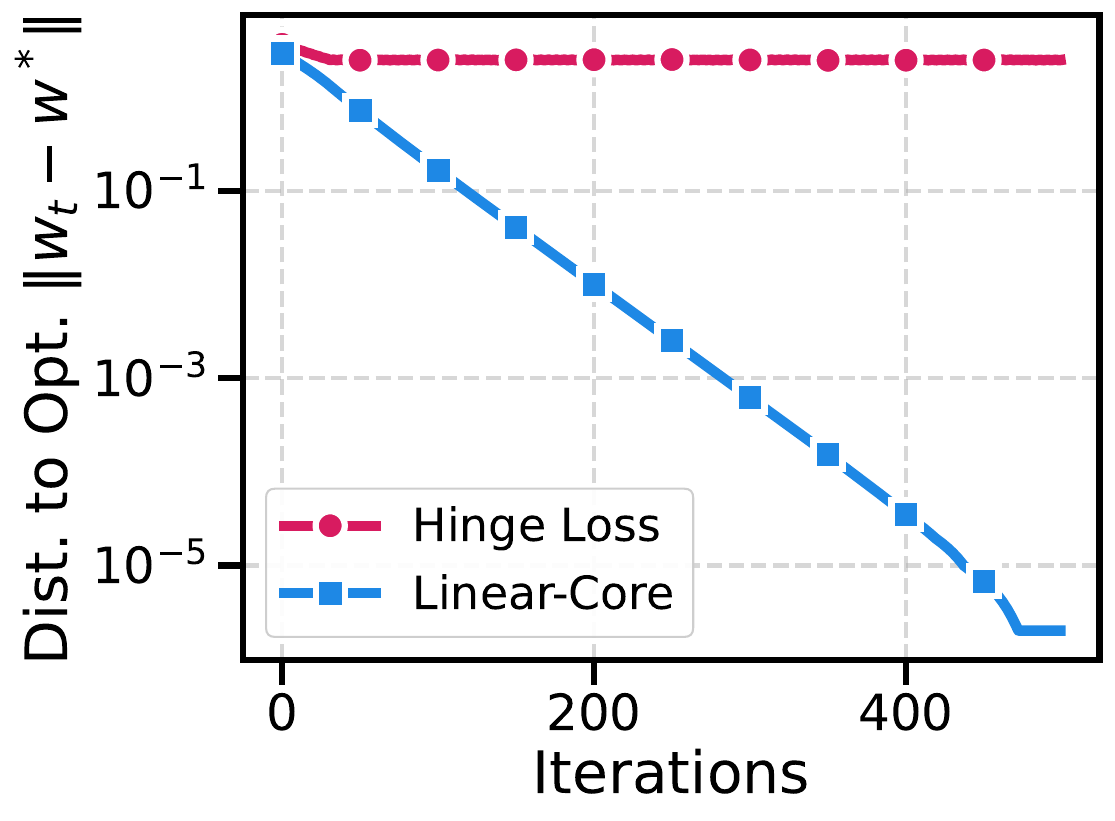}
  \vskip -.05in
  \caption{Optimization Stability: Linear-Core (Blue) converges linearly; Hinge
    (Red) stagnates.}
  \label{fig:optimization_trajectories}
\end{wrapfigure}

We minimized both the Hinge loss and the Linear-Core surrogate using Stochastic
Gradient Descent (SGD) \citep{robbins1951stochastic} with a fixed learning rate
$\eta=0.1$ and $L_2$ regularization ($\lambda=0.05$). As empirically
demonstrated in Figure~\ref{fig:optimization_trajectories}, the guaranteed $C^1$
smoothness of our surrogates ensures that the gradient magnitude naturally
decays near the optimum. This eliminates the chattering phenomenon inherent to
the Hinge loss, where non-zero sub-gradients prevent settling, and allows the
optimizer to converge linearly to high precision.

\textbf{Computational Efficiency via Stochastic Sampling.}
In the work of \citet{MaoMohriZhong2023structured}, the computational
tractability of the structured sum-exponential loss relies on the algebraic
homomorphism of the exponential function, $\exp(u+v) = \exp(u)\exp(v)$. This
property allows the sum over exponentially many structures to be decomposed into
local factors, enabling exact gradient computation via dynamic programming
algorithms (e.g., Forward-Backward or Sum-Product) in polynomial time. Our
proposed Structured Linear-Core Surrogates do not satisfy this multiplicative
property due to their piecewise definition and linear core. Consequently, exact
computation of the full sum loss over $\sY \times \sY$ is generally intractable
for large structured spaces.

However, we can ensure computational efficiency by exploiting the additive
structure of the loss. We interpret the structured sum loss as an expectation
over pairs of labels $(y', y'')$. Specifically, we can rewrite the gradient
update as an expectation under a sampling distribution $\sD$:
$\nabla \loss^{\mathrm{sum}}_{\ov \Phi}(h, x, y) = \E_{y' \sim \sD_1, y'' \sim
  \sD_2} \bracket[\big]{ \frac{\ov \ell(y', y)}{\P_{\sD}(y', y'')} \ov
  \Phi'(h(x, y'') - h(x, y')) \nabla (h(x, y'') - h(x, y')) }$.  By constructing
an unbiased estimator of the gradient using Monte Carlo sampling of pairs
$(y', y'')$, we reduce the per-iteration computational complexity from
$O(|\sY|^2)$ to $O(L)$ (the cost of sampling and embedding a single structure of
length $L$). For the outer summation, since $\ov \ell(y', y)$ typically
decomposes over the structure (e.g., Hamming distance), we can efficiently
sample $y'$ from a proposal distribution proportional to the structural error
(or simply uniformly with importance weights). For the inner summation, we
sample $y''$ via a simple proposal distribution (e.g., uniform or local
perturbation).

\begin{algorithm}[t]
  \caption{Stochastic Gradient Descent for Structured Linear-Core Surrogate}
  \label{alg:struct_sgd}
  \begin{algorithmic}[1]
    \STATE {\bfseries Input:} Training set $S$, step size $\eta$, proposal
    distributions $\sD_1, \sD_2$.  \STATE {\bfseries Initialize:} $h_0 = 0$.
    \FOR{$t = 1$ {\bfseries to} $T$} \STATE Sample $(x, y)$ uniformly from $S$.
    \STATE \COMMENT{1. Sample outer label $y'$ based on structural error} \STATE
    Sample $y' \sim \sD_1(\cdot \mid y)$ (e.g., prop.\ to Hamming distance).
    \STATE Compute weight $w_1 = \frac{\ov{\ell}(y', y)}{\sD_1(y')}$.  \STATE
    \COMMENT{2. Sample inner label $y''$ for comparison} \STATE Sample
    $y'' \sim \sD_2(\cdot \mid y')$ (e.g., uniform neighbor).  \STATE Compute
    weight $w_2 = \frac{1}{\sD_2(y'')}$.  \STATE \COMMENT{3. Compute Gradient
      Estimator} \STATE $m \gets h_{t-1}(x, y'') - h_{t-1}(x, y')$ \STATE
    $g_t \gets w_1 \, w_2 \, \ov{\Phi}'(m) \cdot (\nabla h(x, y'') - \nabla h(x,
    y'))$ \STATE {\bfseries Update:} $h_t \gets h_{t-1} - \eta g_t$ \ENDFOR
    \STATE {\bfseries Output:} $h_T$
  \end{algorithmic}
\end{algorithm}

Crucially, unlike non-smooth structured losses which require solving a global
inference problem (Loss-Augmented Inference) at every step, our approach
requires only forward sampling. This makes each iteration extremely fast and
trivial to parallelize. Furthermore, the smoothness of $\ov \Phi$ ensures that
the variance of the gradient estimates remains bounded, preserving the
convergence guarantees of Stochastic Gradient Descent (SGD).

\textbf{Variance of the Stochastic Gradient.}
A potential drawback of replacing exact inference with stochastic sampling is
the introduction of gradient noise. If the variance of the stochastic gradient
were to scale with the size of the output space $|\sY|$, the convergence rate
would degrade for large-vocabulary tasks.  We show that, remarkably, the
variance of our estimator depends only on the number of samples $K$ and the
feature radius $R$, and is \emph{independent of $|\sY|$}.

\begin{restatable}[Variance Bound for Stochastic
  Gradients]{theorem}{VarianceBound}
  \label{thm:variance_bound}
  Let $\ell(\bw)$ be the Linear-Core surrogate loss. Let $\h \nabla \ell(\bw)$
  be the stochastic gradient estimator constructed using a mini-batch of $K$
  negative samples $\{\by_k\}_{k=1}^K$ drawn uniformly from $\sY$. Assume the
  feature map is bounded such that $\|\phi(\bx, \by)\|_2 \le R$ for all
  $\bx, \by$. Then, the variance of the estimator is bounded by:
  $ \E\bracket*{ \norm*{ \h \nabla \ell(\bw) - \nabla \ell(\bw) }_2^2 } \le
  \frac{4 R^2}{K}$.
\end{restatable}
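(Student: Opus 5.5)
We treat the estimator as an average of $K$ i.i.d.\ unbiased terms and bound the second moment of a single term. Write the linear model $h(x,y) = \bw \cdot \phi(\bx,\by)$, so that for a pair $(\by',\by'')$ the per-pair gradient is $\ov\Phi'(m)\,(\phi(\bx,\by'') - \phi(\bx,\by'))$ with $m = \bw\cdot(\phi(\bx,\by'') - \phi(\bx,\by'))$. Define the single-sample estimator $g_k$, built from the $k$-th uniformly drawn negative $\by_k$ and rescaled by the appropriate normalization, so that $\E[g_k] = \nabla \ell(\bw)$. The mini-batch estimator is $\h\nabla\ell(\bw) = \frac1K\sum_{k=1}^K g_k$. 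Independence of the $g_k$ then gives
\[
\E\bracket*{\norm*{\h\nabla\ell(\bw) - \nabla\ell(\bw)}_2^2} = \frac1K\,\E\bracket*{\norm*{g_1 - \nabla\ell(\bw)}_2^2} \le \frac1K\,\E\bracket*{\norm{g_1}_2^2}.
\]
The last step uses the fact that the variance is at most the second moment.

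The key ingredient is the uniform gradient bound $|\ov\Phi'(u)| \le 1$ for all $u$, and likewise for $\wt\Phi$. On the core $[-1,1]$ the derivative equals $-1$. For $u>1$ we have $\ov\Phi'(u) = -\Phi'(1-u)/\Phi'(0)$. Since $\Phi$ is convex with $\Phi \ge 0$, $\Phi'$ is nondecreasing and nonnegative; nonnegativity holds because a convex function that is nonnegative on $\Rset$ cannot have $\Phi'(v)<0$ at a point without violating $\Phi\ge 0$ as $v\to+\infty$. Because $1-u<0$, this gives $0 \le \Phi'(1-u) \le \Phi'(0)$. The branch $u<-1$ is symmetric. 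Combined with the triangle inequality $\norm{\phi(\bx,\by'') - \phi(\bx,\by')}_2 \le 2R$, each per-pair term has norm at most $2R$, hence squared norm at most $4R^2$, which yields $4R^2/K$.

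The main obstacle is the normalization. With importance weights $w_1 w_2$ as in Algorithm~\ref{alg:struct_sgd}, uniform sampling gives $w_2 = |\sY|$, and the bound would then scale with $|\sY|$. To obtain a $|\sY|$-free statement, I will interpret $\ell(\bw)$ as the \emph{normalized} (averaged) surrogate: the inner sum over $\by''$ is replaced by the mean over uniformly drawn negatives, and the outer weights $\ov\ell(\by',\by)$ are normalized to a probability (or the outer label is fixed as in the multi-class sum loss). This is consistent with the remark that constants and positive rescalings do not affect minimization. Under this normalization, $g_k = \ov\Phi'(m_k)(\phi(\bx,\by_k) - \phi(\bx,\by'))$ is exactly unbiased and satisfies $\norm{g_k}_2 \le 2R$. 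I will state this normalization explicitly and then check that unbiasedness holds in this form. If the outer label is also sampled, its normalized weight lies in $[0,1]$, so the bound still holds, because $\ov\ell \le 1$ can be assumed when $\ell\in[0,1]$.
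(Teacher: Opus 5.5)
Your skeleton is the paper's own: average $K$ i.i.d.\ unbiased terms, bound the variance by $\frac1K$ times the second moment of one term, and control that term with a uniform bound on the scalar derivative together with $\|\phi(\bx,\by_k)-\phi(\bx,\by')\|_2\le 2R$. Your explicit normalization, which drops the importance weight $w_2=|\sY|$, is what the paper does implicitly when it writes $\nabla\ell(\bw)$ as an expectation under the uniform sampling distribution, so that part is fine. The gap is in the key ingredient $|\ov\Phi'|\le 1$. The paper only asserts it as a ``Lipschitz property of the Linear-Core''. Your attempted proof of it fails, because the claim is false at the generality you state.

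\emph{First}, ``the branch $u<-1$ is symmetric'' is wrong. There $\ov\Phi'(u)=-\Phi'(-1-u)/\Phi'(0)$ with $-1-u>0$, so monotonicity of $\Phi'$ gives $\Phi'(-1-u)\ge\Phi'(0)$, i.e.\ $|\ov\Phi'(u)|\ge 1$, which is the opposite inequality. For the exponential base, $\ov\Phi(u)=e^{-1-u}+2$ on $u<-1$ has unbounded slope. The per-sample bound $\|g_k\|_2\le 2R$ therefore fails, and no $\bw$-independent bound follows. For the logistic base the slope tends to $-2$, and your argument only gives $16R^2/K$.

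\emph{Second}, even on the right branch, your lemma that a convex nonnegative $\Phi$ has $\Phi'\ge 0$ is false. Take $\Phi(v)=(v+1)^2$: it satisfies every hypothesis in the paper, including $\Phi'(0)=2>0$. Yet it gives $\wt\Phi'(u)=u-2$ for $u>1$, which is unbounded. Convexity and $\Phi'(0)>0$ only give $\Phi'(v)\le\Phi'(0)$ for $v\le 0$, i.e.\ the lower bound $\ov\Phi'(u)\ge -1$ for $u>1$.

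\emph{Repair.} Restrict the statement to the one-sided surrogate $\wt\Phi$ and add the hypothesis that $\Phi$ is nondecreasing, which holds for the logistic and exponential bases. Then $\wt\Phi'(u)\in[-1,0]$ for all $u$, each per-sample term has norm at most $2R$, and the rest of your argument yields $4R^2/K$ unchanged. For $\ov\Phi$, replace $1$ by $L=\sup_u|\ov\Phi'(u)|$ when this is finite, giving $4L^2R^2/K$.
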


\begin{proof}[Proof Sketch]
  The gradient estimator is an average of $K$ independent terms
  $\mathbf{g}_k$. Since the Linear-Core surrogate is $1$-Lipschitz (the
  derivative is bounded by 1), the norm of any single gradient term is bounded
  by $\|\bg_k\| \le 1 \cdot \text{diam}(\phi) \le 2R$. By properties of variance
  for independent bounded variables,
  $\Var(\h \nabla) \le \frac{1}{K} \sup \| \bg \|^2 \le
  \frac{4R^2}{K}$. Crucially, this bound relies only on the feature geometry
  $R$, not the cardinality $|\sY|$.
\end{proof}

\textbf{Remark (Contrast with Sampled Softmax)}. While techniques like Sampled
Softmax allow for $O(1)$ updates, they approximate the Log-Likelihood objective,
which suffers from slower square-root consistency rates (see
Table~\ref{tab:loss_comparison}). Our Stochastic Linear-Core approach is unique
in that it combines $O(1)$ sampling efficiency with the fast linear consistency
rates of margin-based losses.

\subsection{Empirical Validation: Sequence Tagging Efficiency}
\label{sec:struct_experiments}

We empirically validate the efficiency of our method on sequence tagging
tasks. Figure~\ref{fig:scalability} compares the training time per batch against
the Structured SVM (SSVM) as the vocabulary size $|\sY|$ increases. While SSVM
scales quadratically ($O(|\sY|^2)$) due to the Viterbi bottleneck, our
Linear-Core surrogate with stochastic sampling (Algorithm~\ref{alg:struct_sgd})
maintains constant throughput ($O(1)$), achieving a 23$\times$ speedup at
$|\sY|=400$. Full experimental details and additional convergence analyses are
provided in Appendix~\ref{app:synthetic_experiments}.

\setlength{\columnsep}{10pt}
\setlength{\intextsep}{5pt}
\begin{wrapfigure}{r}{0.5\columnwidth}
  \vskip -.05in
  \includegraphics[width=0.5\columnwidth]{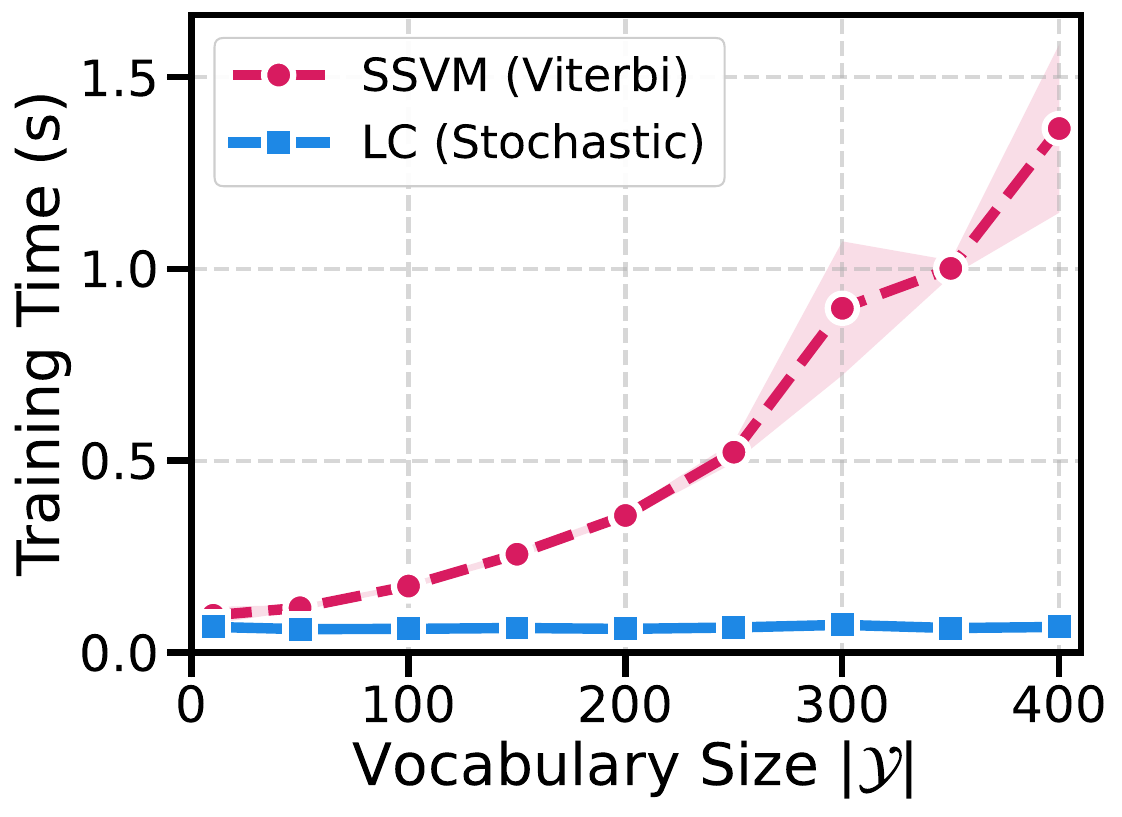}
  \vskip -.075in
  \caption{Scalability: SSVM $O(|\sY|^2)$ vs.~Linear-Core $O(1)$.}
  \label{fig:scalability}
\end{wrapfigure}

\subsection{Real-World Efficiency: Fine-Grained POS Tagging}
\label{sec:efficiency_exp}

To rigorously quantify the computational advantage of our method, we conducted a
stress test designed to expose the quadratic bottleneck of the CRF. We used the
\emph{Penn Treebank (PTB)} Part-of-Speech tagging
dataset~\citep{marcus1993building} but simulated a fine-grained tagging task by
artificially inflating the tag set size to $|\sY| = 4000$. This simulates
complex morpho-syntactic tagging or open-domain sequence labeling tasks where
the label space is large.

\textbf{Setup.}  To ensure that our efficiency claims hold perfectly regardless
of the underlying neural backbone, we compared methods across two different
architectures: a standard Bidirectional LSTM~\citep{hochreiter1997long} and a
modern BERT-based Transformer backbone \citep{devlin2018bert}.

The baselines (BiLSTM-CRF and BERT-CRF) use the respective backbone followed by
a Conditional Random Field (CRF) layer~\citep{lafferty2001conditional}. These
models minimize the negative log-likelihood of the correct tag sequence,
computing gradients exactly via the Forward-Backward algorithm with a time
complexity of $O(L|\sY|^2)$ per sequence. We compare these against our proposed
BiLSTM-Linear-Core and BERT-Linear-Core, which use the same architectures but
minimize the Linear-Core surrogate loss using the stochastic sampling algorithm
(Algorithm~\ref{alg:struct_sgd}) described in
Section~\ref{sec:struct_optimization}. Crucially, this approach reduces the time
complexity to $O(L)$.

For the BiLSTM models, we used an embedding dimension of 128 and a hidden
dimension of 256. Training was performed using SGD with momentum $0.9$
\citep{nesterov1983method} (and AdamW \citep{loshchilovdecoupled} for
BERT). Critically, we restricted the batch size to $B=8$. This was necessitated
by the CRF baselines, which incur a prohibitive memory cost due to storing the
computation graph for $4000 \times 4000$ transition interactions at every
sequence step. Our method, having $O(L)$ memory complexity, could theoretically
support much larger batches, but we maintained the same batch size for a fair,
controlled comparison. To ensure the bottleneck was strictly computational, we
filtered the dataset to include only sequences with length $L \ge 100$.

\begin{figure}[t]
  \centering
  \includegraphics[width=0.49\linewidth]{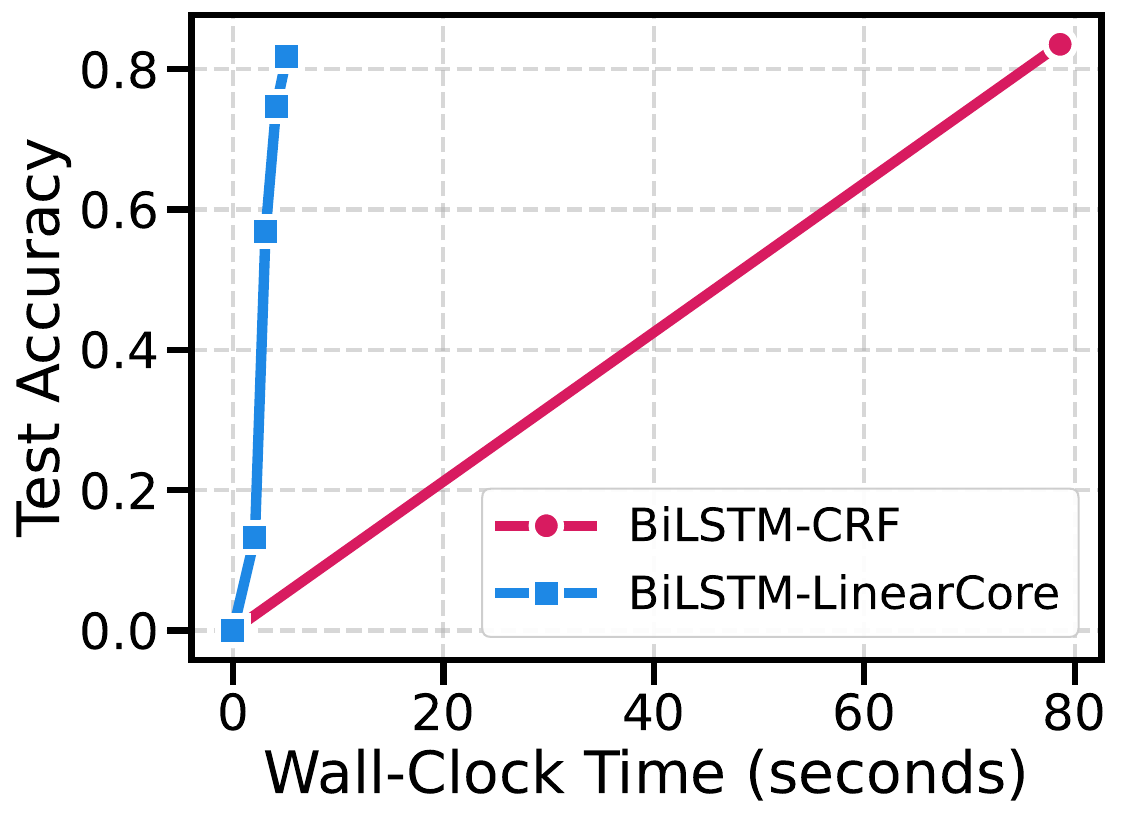}
  \includegraphics[width=0.49\linewidth]{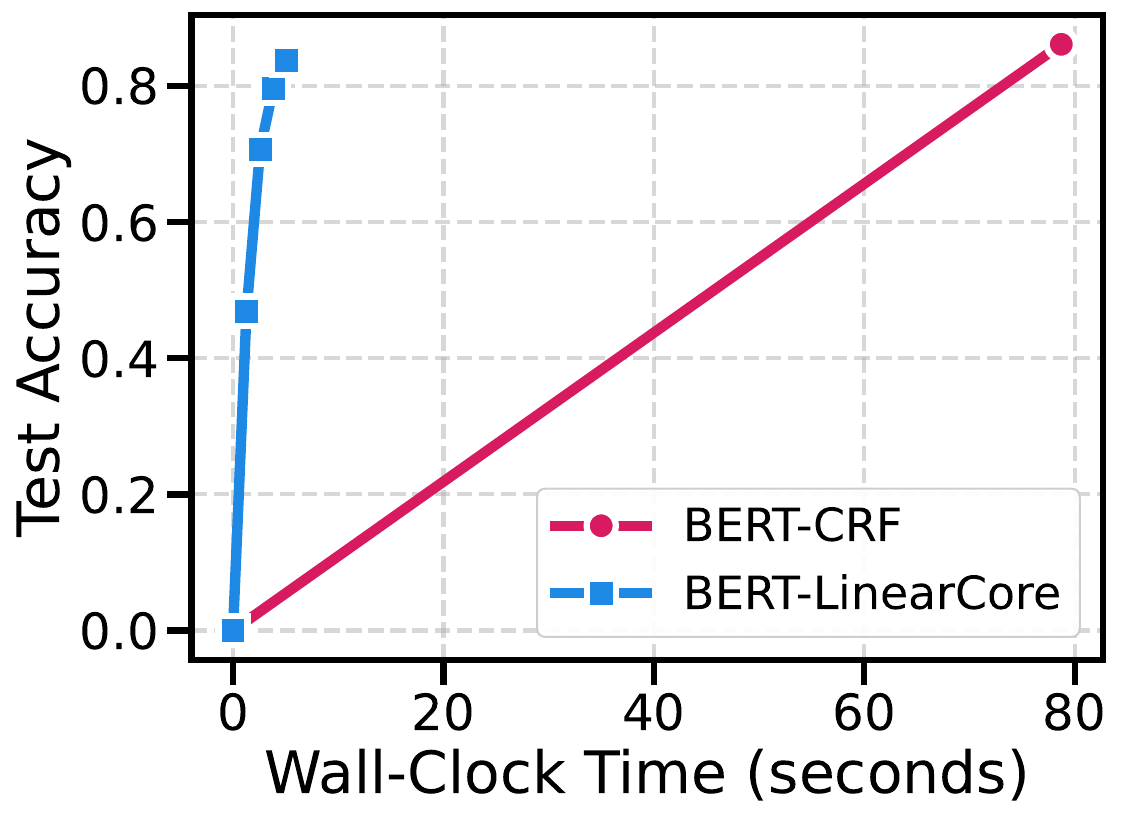}
  \caption{Real-world efficiency: Time to accuracy for sequence tagging
    ($|\sY|=4000$) using BiLSTM (Left) and BERT (Right) architectures.}
  \vskip -0.3in
  \label{fig:efficiency}
\end{figure}

\textbf{Results.}  We measured the wall-clock time to reach a target test
accuracy of 83\%. As shown in Figure~\ref{fig:efficiency}, the results are
dramatic.

For the BiLSTM architecture, the BiLSTM-CRF (Red) struggles with the
computational load. A single training epoch requires 78.4 seconds due to the
expensive matrix operations of the partition function. In contrast, the
BiLSTM-Linear-Core (Blue) converges rapidly. By using our stochastic sampling
algorithm, it bypasses the quadratic bottleneck entirely, reaching the target
accuracy in just 4.5 seconds. Quantitatively, our method achieves a
\textbf{17.4$\times$ speedup} in time-to-accuracy over the exact CRF baseline.

These findings perfectly mirror our results on the modern Transformer backbone,
proving that the computational bottleneck lies strictly in the structured loss
layer rather than the encoder. The BERT-CRF baseline required 78.7 seconds to
reach the target accuracy, remaining bottlenecked by the computation of the
partition function over the massive transition space. In contrast,
BERT-Linear-Core bypassed the quadratic bottleneck entirely, reaching the target
accuracy in just 5.1 seconds, which corresponds to a \textbf{15.4$\times$
  speedup} over the exact CRF inference.

These results confirm that Linear-Core Surrogates enable expressive structured
models in large-output domains where traditional CRFs are computationally
intractable. While standard English POS tagging has small tag sets, many
morphologically rich languages (Finnish, Turkish, Arabic) have tag sets scaling
into the thousands.  Our simulation with $|\sY| = 4000$ shows that Linear-Core
surrogates unlock efficient structured prediction for such complex tasks.

\section{Conclusion}

We introduced Linear-Core Surrogates, the first explicit family of smooth
convex loss functions that provably achieve linear $\sH$-consistency bounds.
By resolving the long-standing dichotomy between smoothness and fast
consistency rates, these surrogates simultaneously enable the fast $O(1/T)$
optimization convergence of differentiable losses and the optimal linear
transfer rate previously available only to non-smooth losses like the Hinge
loss. We established these guarantees progressively across binary,
multi-class, and structured prediction, demonstrating the generality of the
construction.  Beyond their theoretical significance, Linear-Core Surrogates
yield practical benefits: natural robustness to instance-dependent label noise
in multi-class classification, and a stochastic gradient algorithm for
structured prediction that bypasses the quadratic complexity of exact
inference, delivering massive speedups over standard baselines.

\newpage
\section*{Impact Statement}

This paper presents work whose goal is to advance the field of Machine
Learning. There are many potential societal consequences of our work, none which
we feel must be specifically highlighted here.

\bibliographystyle{icml2026}
\bibliography{slin}

%%%%%%%%%%%%%%%%%%%%%%%%%%%%%%%%%%%%%%%%%%%%%%%%%%%%%%%%%%%%%%%%%%%%%%%%%%%%%%%
%%%%%%%%%%%%%%%%%%%%%%%%%%%%%%%%%%%%%%%%%%%%%%%%%%%%%%%%%%%%%%%%%%%%%%%%%%%%%%%
% APPENDIX
%%%%%%%%%%%%%%%%%%%%%%%%%%%%%%%%%%%%%%%%%%%%%%%%%%%%%%%%%%%%%%%%%%%%%%%%%%%%%%%
%%%%%%%%%%%%%%%%%%%%%%%%%%%%%%%%%%%%%%%%%%%%%%%%%%%%%%%%%%%%%%%%%%%%%%%%%%%%%%%
\newpage
\appendix
\onecolumn

\renewcommand{\contentsname}{Contents of Appendix}
\tableofcontents
\addtocontents{toc}{\protect\setcounter{tocdepth}{3}}
\clearpage

\section{Related Work}
\label{app:related-work}

\paragraph{Comparison with Convolutional Fenchel--Young Losses}
\citet{cao2025establishing} also address the trade-off between smoothness and
linear convergence rates by introducing \emph{Convolutional Fenchel--Young
  losses}, defined via the infimal convolution of a generalized negentropy
regularizer and the target Bayes error. While their framework is theoretically
interesting, it differs from ours in several important respects.

\emph{(1) Implicit vs.\ explicit construction.}
The Fenchel--Young losses are defined \emph{implicitly} as the solution to a
variational optimization problem. An explicit closed-form expression is derived
only in the binary classification setting, where the resulting loss profile
effectively stitches a linear segment to a smooth tail---structurally matching
our construction. Indeed, in this special case their explicit binary loss is a
particular instance of our more general Linear-Core family (parameterized by an
arbitrary smooth base~$\Phi$ with $\Phi'(0) > 0$). In the multi-class and
structured settings, their framework does not yield explicit loss functions
amenable to direct evaluation; each gradient step requires solving an auxiliary
optimization problem. In contrast, our surrogates are fully explicit and admit
$O(1)$ evaluation and gradient computation without auxiliary solvers.

\emph{(2) Decoding and predictor form.}
The framework of \citet{cao2025establishing} relies on specific link functions
(derived from gradients of the convex conjugate) to decode scores into
predictions, departing from the standard $\argmax$ rule used in virtually all
deep learning applications. Our Linear-Core surrogates retain the standard
$\argmax$ decoding, $\hh(x) = \argmax_{y \in \sY} h(x, y)$, and serve as
direct, smooth replacements for the Hinge loss in any existing architecture.

\emph{(3) Structured prediction.}
To the best of our knowledge, the framework of \citet{cao2025establishing} does
not address the computational challenges of structured prediction where the
output space $\sY$ is exponentially large. In such settings, generic discrete
bounds are insufficient as they may hide dependencies on $|\sY|$ that make
computation intractable. Our work explicitly extends linear $\sH$-consistency
analysis to the structured setting (Theorem~\ref{thm:bound-struct}), proves
that convexity and smoothness are preserved, and introduces a stochastic
gradient algorithm (Algorithm~\ref{alg:struct_sgd}) that reduces the per-step
complexity from $O(|\sY|^2)$ to $O(1)$ per position.

\emph{(4) Stronger $\sH$-consistency guarantees.}
The regret bounds established by \citet{cao2025establishing} are limited to
excess loss bounds with respect to the family of all measurable functions (Bayes
consistency). Such bounds guarantee convergence to the Bayes optimal classifier
only in the non-parametric limit. In contrast, we provide \emph{linear
  $\sH$-consistency bounds}
\citep{awasthi2022multi,MaoMohriMohriZhong2023twostage,MaoMohriZhong2023characterization,MaoMohriZhong2023ranking,MaoMohriZhong2023rankingabs,mao2023cross,MaoMohriZhong2024deferral,MaoMohriZhong2024predictor,MaoMohriZhong2024score,mao2024h,mao2024multi,mao2024realizable,mao2024regression,MohriAndorChoiCollinsMaoZhong2024learning,cortes2024cardinality,cortes2025balancing,cortes2025improved,MaoMohriZhong2025mastering,MaoMohriZhong2025principled,mao2025enhanced,mao2025theory,zhong2025fundamental,desalvo2025budgeted,CortesMaoMohriZhong2026defid,CortesMohriZhong2026mod,montreuil2025adversarial,montreuil2025two,montreuil2026beyond,montreuil2026why,montreuil2026online,montreuil2026optimal,montreuil2026adversarial,montreuil2026learning,montreuil2026time,MohriZhong2026rllm,mohri2025beyond,mohri2026principled,mohri2026generalized}. These
bounds explicitly relate the estimation error of the surrogate within a
restricted hypothesis set $\sH$ to the estimation error of the target loss.
This is a strictly stronger guarantee that remains valid even for misspecified
models (where $\sH$ does not contain the Bayes optimal classifier), ensuring
that optimizing the smooth Linear-Core Surrogate effectively minimizes the
regret against the best possible competitor in~$\sH$.

\paragraph{Relationship with the Universal Square-Root Growth Rate}
Our results may appear at first glance to contradict the universal square-root
growth rate established by \citet*{MaoMohriZhong2024}, who proved that
$\sH$-consistency bounds for ``smooth'' margin-based surrogates necessarily
exhibit a square-root growth rate $\sT(t) = \Theta(t^2)$. However, there is no
contradiction: the definition of smoothness in their work requires $\Phi$ to be
\emph{twice continuously differentiable with $\Phi''(0) > 0$}, i.e., with
\emph{non-vanishing curvature at the origin}. This condition is satisfied by all
standard smooth losses (logistic, exponential, squared hinge), but it is
\emph{violated by design} in our Linear-Core surrogates: our construction
guarantees $\ov\Phi''(0) = 0$ (or, more precisely, $\ov\Phi$ is affine on
$[-1,1]$, so the second derivative vanishes on the entire linear core). This
vanishing curvature is precisely the mechanism that allows the transformation
$\sT(t)$ to grow linearly rather than quadratically near zero
(Lemma~\ref{lemma:lin-binary}), thereby escaping the square-root barrier. In
other words, the result of \citet{MaoMohriZhong2024} characterizes a fundamental
limitation of losses with non-zero curvature at the decision boundary, and our
Linear-Core surrogates circumvent this limitation by design.

\paragraph{Contrast with Huber Smoothing}
It is important to distinguish our approach from standard Huber smoothing
\citep{huber1964robust}.  In the context of classification, Huber-style
smoothing typically replaces the non-differentiable ``kink'' of the Hinge loss
(or the region near the decision boundary) with a quadratic segment to ensure
differentiability.
While effective for optimization, replacing the linear segment with a quadratic
destroys the local linearity at the origin.  This structural change generally
degrades the $\sH$-consistency bound from a fast linear rate to a slower
square-root rate (similar to the Squared-Hinge loss)
\citep{Awasthi2022Hconsistency}.  In contrast, our method explicitly
\emph{preserves} the linearity at the origin (specifically in the interval
$[-1, 1]$).  This retention of the ``linear core'' is precisely what enables the
fast rates derived in Theorem~\ref{thm:bound-binary} and
Corollary~\ref{cor:bound-binary-one}, while the smoothing is applied only to the
tails to facilitate gradient-based optimization.

\section{Proof of Proposition~\ref{prop:properties}}
\label{app:prop-proof}

\PropProperties*

\begin{proof}
  We prove the convexity and smoothness for the binary surrogates first, and
  then extend to the multi-class and structured cases.

  \textbf{1. Binary Symmetric Surrogate $\ov \Phi$}

  \emph{Convexity:} Recall the definition of $\ov \Phi$:
  \begin{equation*}
    \ov \Phi(u) =
    \begin{cases}
      - u + 1 + \dfrac{\Phi(0)}{\Phi'(0)}, & -1 \leq u \leq 1, \\[4pt]
      \dfrac{\Phi(1 - u)}{\Phi'(0)}, & u > 1, \\[10pt]
      \dfrac{\Phi(-1 - u)}{\Phi'(0)} + 2, & u < -1.
    \end{cases}
  \end{equation*}
  Each branch is convex on its own interval:
  \begin{itemize}
  \item On $(-1, 1)$, $\ov \Phi$ is linear, hence convex.
  \item On $(1, \infty)$, $u \mapsto 1 - u$ is affine. Since $\Phi$ is convex,
    $u \mapsto \Phi(1 - u)$ is convex. Positive scaling by $1 / \Phi'(0)$
    preserves convexity.
  \item On $(-\infty, -1)$, similarly, the composition with the affine map
    $u \mapsto -1 - u$ is convex.
  \end{itemize}
  It remains to check the junctions $u = \pm 1$. A continuous piecewise $C^1$
  function is convex if the derivative is non-decreasing, which requires
  $\ov \Phi'_-(u_0) \leq \ov \Phi'_+(u_0)$ at any junction $u_0$.  Here,
  $\ov \Phi'(u) = -1$ on $(-1, 1)$.  For $u > 1$,
  $\ov \Phi'(u) = - \frac{\Phi'(1 - u)}{\Phi'(0)}$. As $u \to 1^+$,
  $1-u \to 0^-$, so $\ov \Phi'_+(1) = - \frac{\Phi'(0)}{\Phi'(0)} = -1$. This
  matches $\ov \Phi'_-(1) = -1$.  For $u < -1$,
  $\ov \Phi'(u) = - \frac{\Phi'(-1 - u)}{\Phi'(0)}$. As $u \to -1^-$,
  $-1-u \to 0^+$, so $\ov \Phi'_-(-1) = - \frac{\Phi'(0)}{\Phi'(0)} = -1$. This
  matches $\ov \Phi'_+(-1) = -1$.  Thus, the derivative is continuous everywhere
  and $\ov \Phi$ is convex.

  \emph{Smoothness ($C^1$):} As shown above, the one-sided derivatives match at
  $u = \pm 1$. Since $\Phi'$ is continuous on the outer intervals, $\ov \Phi'$
  is continuous everywhere. Thus $\ov \Phi \in C^1(\Rset)$.

  \emph{Smoothness ($C^2$):} For $u > 1$,
  $\ov \Phi''(u) = \frac{\Phi''(1 - u)}{\Phi'(0)}$. For $u < -1$,
  $\ov \Phi''(u) = \frac{\Phi''(-1 - u)}{\Phi'(0)}$. On $(-1, 1)$,
  $\ov \Phi''(u) = 0$.  Continuity at $u=1$ requires
  $\lim_{u \to 1^+} \ov \Phi''(u) = 0$, which implies
  $\lim_{z \to 0^-} \Phi''(z) = 0$.  Similarly at $u=-1$, we need
  $\lim_{z \to 0^+} \Phi''(z) = 0$.  Thus, $\ov \Phi \in C^2(\Rset)$ if and only
  if $\lim_{z \to 0} \Phi''(z) = 0$. In particular, if $\Phi \in C^2$ and
  $\Phi''(0)=0$, this holds.

  \textbf{2. Binary One-Sided Surrogate $\wt \Phi$}

  \emph{Convexity:} On $(-\infty, 1]$, $\wt \Phi$ is linear (convex). On
  $(1, \infty)$, it matches $\ov \Phi$ (convex). At $u=1$, the derivatives match
  at $-1$. Thus $\wt \Phi$ is convex.

  \emph{Smoothness ($C^1$ and $C^2$):} Matching derivatives at $u=1$ implies
  $\wt \Phi \in C^1(\Rset)$.  For $C^2$, we require
  $\lim_{u \to 1^+} \wt \Phi''(u) = 0$, which implies
  $\lim_{z \to 0^-} \Phi''(z) = 0$.

  \textbf{3. Multi-class and Structured Extensions}

  Let $\phi \in \curl*{\ov \Phi, \wt \Phi}$.  The multi-class loss
  $\ell^{\mathrm{sum}}_{\phi}(h, x, y) = \sum_{y' \neq y} \phi(h(x, y) - h(x,
  y'))$ and structured loss $\loss^{\mathrm{sum}}_{\phi}$ are non-negative
  linear combinations of terms of the form $\phi(L(h))$, where $L$ is a linear
  functional of the score vector $h(x, \cdot)$.  Since $\phi$ is convex and $L$
  is linear, $\phi \circ L$ is convex. The sum is therefore convex.  Since
  $\phi$ is $C^1$ (or $C^2$), and $L$ is smooth, the composition is $C^1$ (or
  $C^2$). Thus, the multi-class and structured losses inherit the smoothness
  properties of the base scalar surrogate.
\end{proof}

\section{Stability Analysis}
\label{app:stability}

To verify that the fast linear rates are not an artifact of the specific
interval $[-1, 1]$, we consider a generalized family of Linear-Core surrogates
parameterized by a threshold $\tau > 0$. We define the generalized surrogate
$\ov{\Phi}_\tau$ by stitching the linear core on $[-\tau, \tau]$ to the smooth
tail:
\begin{equation}
  \label{eq:generalized_phi}
  \ov \Phi_\tau(u) =
  \begin{cases}
    - u + \tau + \dfrac{\Phi(0)}{\Phi'(0)}, & -\tau \leq u \leq \tau, \\[8pt]
    \dfrac{\Phi(\tau - u)}{\Phi'(0)}, & u > \tau, \\[10pt]
    \dfrac{\Phi(-\tau - u)}{\Phi'(0)} + 2\tau, & u < -\tau.
  \end{cases}
\end{equation}
Since $\ov{\Phi}_\tau$ is obtained by affine scaling of the argument
$u \mapsto u/\tau$ and the function values, it inherits the convexity and
smoothness properties of the base $\Phi$ exactly as established in
Proposition~\ref{prop:properties}. Furthermore, the linear $\sH$-consistency
bound (Theorem~\ref{thm:bound-binary}) extends naturally to
$\ov{\Phi}_\tau$. The transformation $\sT$ maintains a linear lower bound
$\sT(t) \geq \frac{1}{\tau} \, t$ with $\sT(0) = 0$, preserving the fast
$O(\Delta \sL)$ convergence rate.

We first analyzed the convergence rates for robust thresholds
$\tau \in \{0.1, 0.5, 1.0, 2.0, 5.0\}$. As shown in
Figure~\ref{fig:stability_thresholds}, the linear convergence rate is robust to
the choice of $\tau$ in this regime. All Linear-Core variants maintain a slope
of $1$ (implying $\Delta \sR = O(\Delta \sL)$), standing in sharp contrast to
the Logistic loss, which degrades to a slope of $1/2$ (implying
$\Delta \sR = O(\sqrt{\Delta \sL})$). This confirms that the fast rate is driven
by the non-vanishing curvature at the origin provided by the linear segment.

\begin{figure}[t]
  \centering
  \includegraphics[width=0.75\linewidth]
  {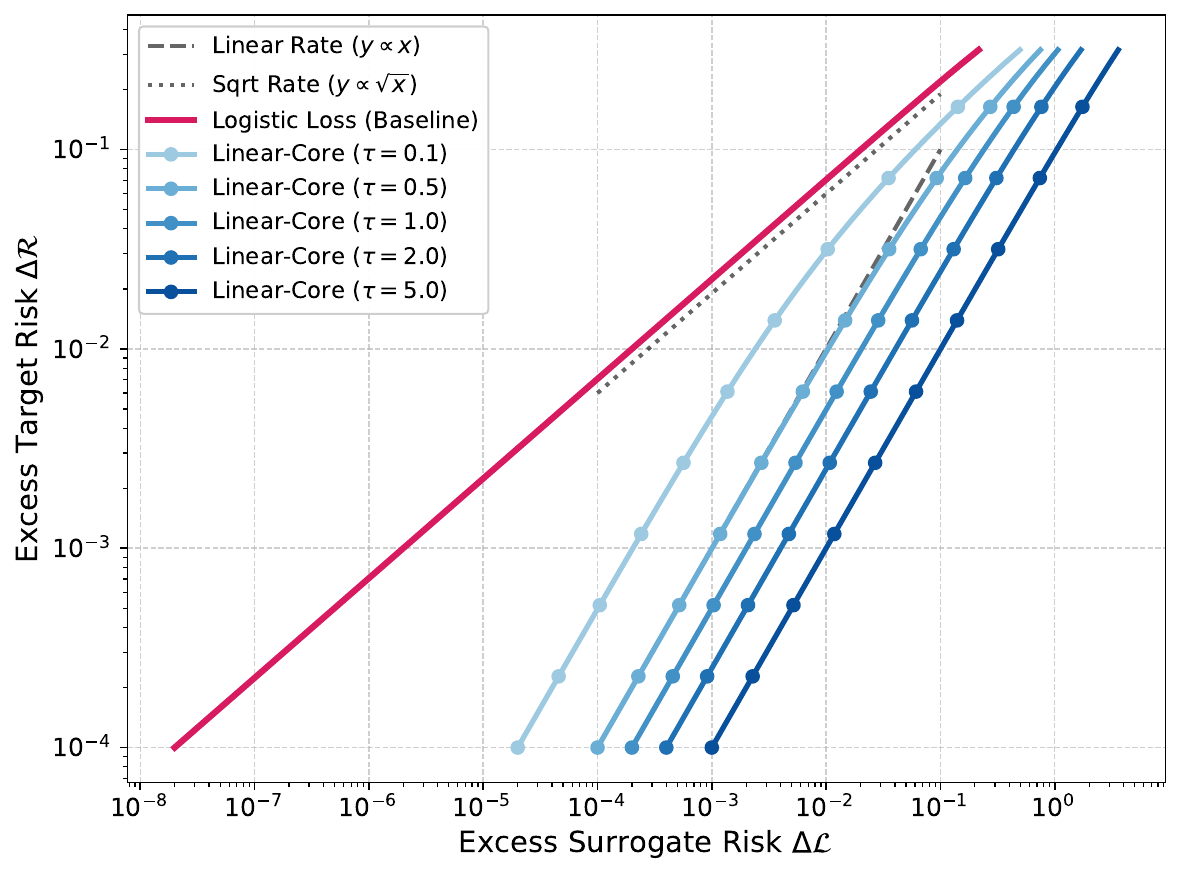}
  \caption{Stability of Convergence Rates across Core Thresholds.  We compare
    the excess target error $\Delta \mathcal{R}$ against the excess surrogate
    error $\Delta \mathcal{L}$ for the generalized Linear-Core surrogate
    $\ov{\Phi}_\tau$ with varying threshold widths $\tau$.  Regardless of
    whether the linear core is narrow ($\tau=0.1$) or wide ($\tau=5.0$), all
    \textcolor{DarkBlue}{Linear-Core variants (Blue lines)} maintain a strict
    linear convergence rate parallel to the $y \propto x$ asymptote.  In
    contrast, the \textcolor{DarkRed}{Logistic loss (Red)} exhibits the slower
    square-root rate ($y \propto \sqrt{x}$).}
  \label{fig:stability_thresholds}
\end{figure}

However, we also investigated the limit case where the linear core vanishes
($\tau \to 0$). We tested microscopic thresholds
$\tau \in \{10^{-1}, \dots, 10^{-5}\}$. As shown in
Figure~\ref{fig:vanishing_core}, as $\tau$ approaches zero, the surrogate
$\ov{\Phi}_\tau$ effectively reverts to a standard smooth loss
function. Consequently, the acceleration vanishes: the curves for the smallest
thresholds (e.g., $\tau = 10^{-5}$) align with the Logistic baseline, exhibiting
the slower square-root convergence rate ($y \propto \sqrt{x}$). This
demonstrates that the acceleration is strictly dependent on the presence of a
non-negligible linear component; when this component is removed, the fast rate
is lost.

\begin{figure}[t]
  \centering
  \includegraphics[width=0.75\linewidth]
  {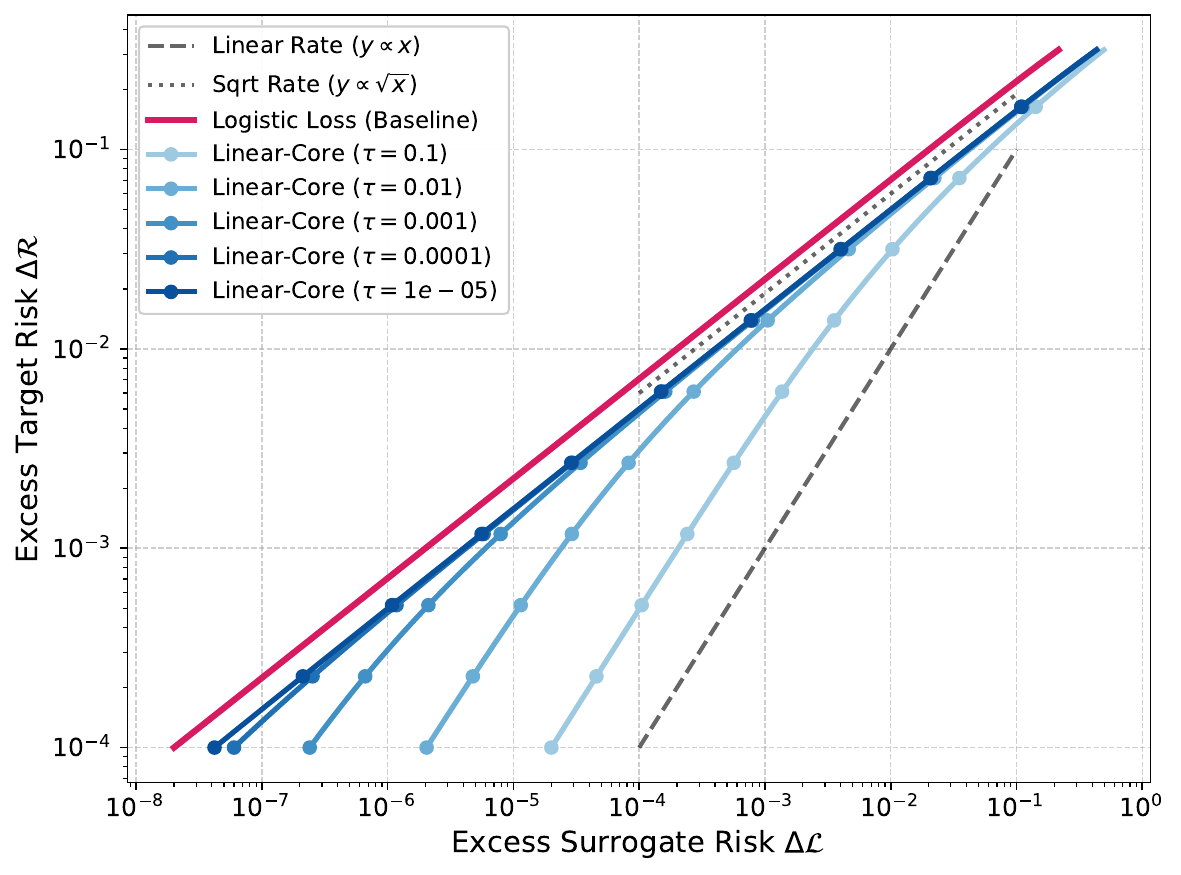}
  \caption{Degradation of Convergence Rates as Core Vanishes ($\tau \to 0$).  We
    analyze the convergence behavior for decreasing thresholds
    $\tau \in \{10^{-1}, \dots, 10^{-5}\}$.  As the linear core shrinks, the
    \textcolor{DarkBlue}{Linear-Core variants} (darker blue lines) gradually
    lose their linear acceleration.  For the smallest threshold
    ($\tau = 10^{-5}$), the curve fully aligns with the
    \textcolor{DarkRed}{Logistic Loss (Red)}, reverting to the standard
    square-root rate ($y \propto \sqrt{x}$).  This confirms that the linear core
    is the necessary structural element for fast rates.}
  \label{fig:vanishing_core}
\end{figure}

\section{Multi-Class Examples}
\label{app:multi_examples}

To illustrate the behavior of the multi-class linear-core surrogates, we present
two concrete instances derived from the logistic and exponential
losses. The resulting loss surfaces are visualized in
Figure~\ref{fig:multi_surfaces} (main text), plotting the loss
$\sum_{y' \neq y} \ov \Phi(h(x, y) - h(x, y'))$ as a function of the margins
$m_1 = h(x, y) - h(x, y_1)$ and $m_2 = h(x, y) - h(x, y_2)$.

\paragraph{Logistic Linear-Core Surrogate.} Let $\Phi(u) = \log(1 + e^u)$. We
have $\Phi(0) = \log 2$ and $\Phi'(0) = 1/2$. The corresponding surrogate
$\ov \Phi_{\log}$ is given by:
\begin{equation*}
  \ov \Phi_{\log}(u) =
  \begin{cases}
    - u + 1 + 2 \log 2, & -1 \leq u \leq 1, \\[4pt]
    2 \log(1 + e^{1 - u}), & u > 1, \\[10pt]
    2 \log(1 + e^{-1 - u}) + 2, & u < -1.
  \end{cases}
\end{equation*}
In the three-class setting, the total loss is the sum of the surrogates applied
to each margin:
\begin{equation*}
  \ell^{\mathrm{sum}}_{\ov \Phi_{\log}}(m_1, m_2) = \ov \Phi_{\log}(m_1) + \ov \Phi_{\log}(m_2).
\end{equation*}
This function behaves linearly for small margins and transitions smoothly to the
scaled logistic tail for large positive margins.

\paragraph{Exponential Linear-Core Surrogate.} Let $\Phi(u) = e^u$. We have
$\Phi(0) = 1$ and $\Phi'(0) = 1$. The surrogate $\ov \Phi_{\exp}$ is:
\begin{equation*}
  \ov \Phi_{\exp}(u) =
  \begin{cases}
    - u + 2, & -1 \leq u \leq 1, \\[4pt]
    e^{1 - u}, & u > 1, \\[10pt]
    e^{-1 - u} + 2, & u < -1.
  \end{cases}
\end{equation*}
Similarly, the total loss for the three-class case is given by:
\begin{equation*}
  \ell^{\mathrm{sum}}_{\ov \Phi_{\exp}}(m_1, m_2) = \ov \Phi_{\exp}(m_1) + \ov \Phi_{\exp}(m_2).
\end{equation*}
This creates a loss that is linear in the central region $[-1, 1]$ and decays
exponentially for large positive margins, offering a robust alternative to the
standard sum-exponential loss.

\section{Empirical Validation: Efficiency in Sequence Tagging
  -- Full Details}
\label{app:synthetic_experiments}

To empirically validate the computational efficiency claims discussed in
Section~\ref{sec:struct_optimization}, we compare our method against a standard
baseline on a sequence tagging task.

A critical practical limitation of standard structured prediction methods like
the Structured SVM (SSVM) is the computational cost of the training loop. The
SSVM objective, $\max_{y'\neq y} \max(0, \ell(y',y) - (h(x, y) - h(x, y')))$,
requires solving a \emph{loss-augmented inference} problem (finding the ``most
violated constraint'') at every gradient step. For sequence tagging tasks, the
standard evaluation metric is the Hamming loss, defined as
$\ell(y, y') = \frac{1}{L} \sum_{j=1}^{L} \1_{y'_j \neq y_j}$, where $L$ is the
sequence length and $y_j$ denotes the label at the $j$-th position. Optimizing
the SSVM with this loss necessitates running the Viterbi algorithm
\citep{viterbi2003error}, which scales as $O(L |\sY|^2)$ per sample and is
difficult to parallelize on modern hardware.

In contrast, our Structured Linear-Core Surrogate exploits the additive
structure described in Algorithm~\ref{alg:struct_sgd}
(Section~\ref{sec:struct_optimization}), where the loss is evaluated by
aggregating local margins rather than solving a global maximization
problem. This allows the objective to be optimized using simple stochastic
sampling, avoiding the sequential inference bottlenecks inherent to the SSVM.

\textbf{Setup.}  We consider a synthetic sequence labeling task with sequence
length $L=20$, label set size $|\sY|=200$, and input dimension $d=20$. We
generate $1{,}000$ training sequences using a linear Hidden Markov Model
\citep{rabiner2002tutorial} with strong random transition potentials to ensure
that structural dependencies are significant. We train a linear neural sequence
model following the architecture of \citet{collobert2011natural}, consisting of
a linear projection for unary scores and a learnable transition matrix. We
optimize the model using Stochastic Gradient Descent (SGD)
\citep{robbins1951stochastic} with a fixed learning rate $\eta=0.01$ and a batch
size of $1$. We compare the wall-clock training time required to reach a target
test error for the Structured SVM \citep{tsochantaridis2005large} (implemented
with an exact Viterbi solver) against our Structured Linear-Core Surrogate
(implemented with the stochastic sampling strategy).

\textbf{Convergence Speed.}  Figure~\ref{fig:struct_efficiency} plots the test
Hamming error against wall-clock training time. The Structured SVM (Red) suffers
from the high overhead of the Viterbi oracle, resulting in slow convergence in
real time, requiring over 200 seconds to minimize the error. Our Linear-Core
Surrogate (Blue), using the smooth one-sided logistic tail $\wt \Phi_{\log}(u)$
and efficient additive decomposition, demonstrates a dramatic speedup. It
converges to the optimal error rate almost immediately (within the first few
seconds), validating that our method offers both the theoretical benefits of
consistency and the practical advantage of computational efficiency in
structured domains.

\begin{figure}[t]
  \centering
  \includegraphics[width=0.75\linewidth]{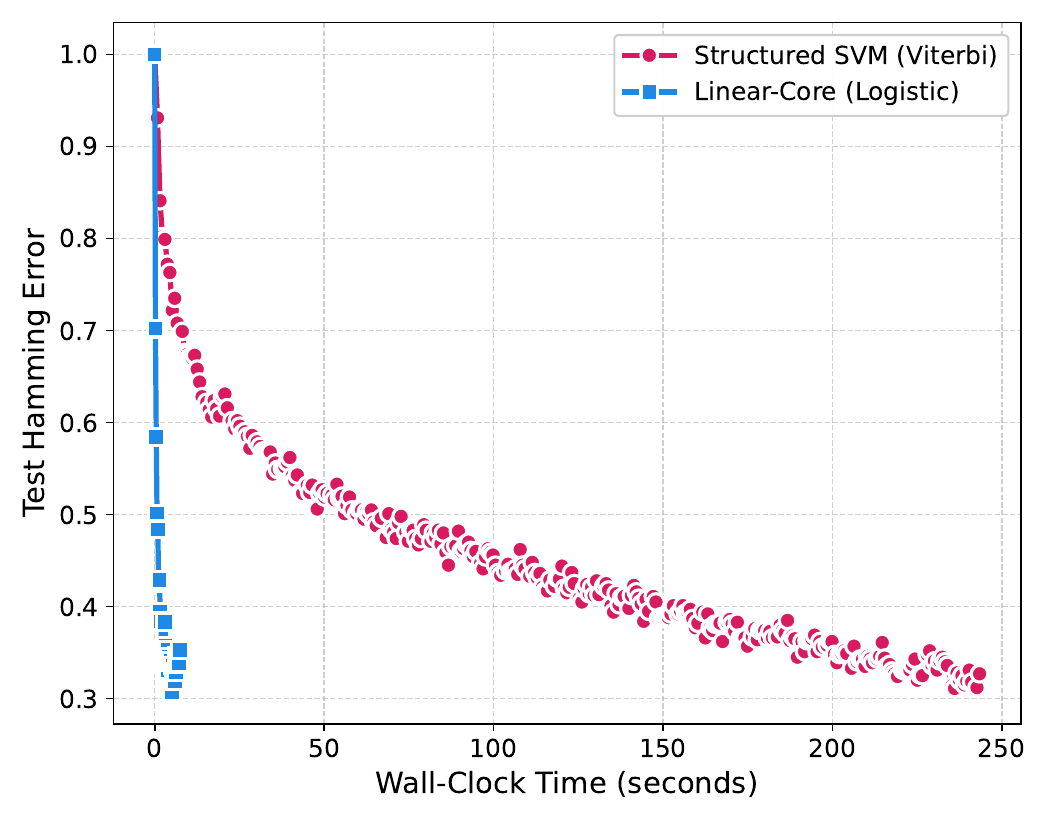}
  \caption{Test Error vs. Wall-Clock Time for Sequence Tagging.  We compare a
    Structured SVM (Red Circles) against our Logistic Linear-Core Surrogate
    (Blue Squares) on a synthetic sequence labeling task ($L=20, |\sY|=200$).
    The Structured SVM is bottlenecked by the sequential Viterbi algorithm
    required for every gradient update.  In contrast, our method processes
    updates significantly faster due to the efficiency of the stochastic
    sampling approach, achieving comparable test error in a fraction of the
    wall-clock time (e.g., reaching $<0.35$ error in under 10 seconds versus
    $\approx 250$ seconds for SSVM).}
  \label{fig:struct_efficiency}
\end{figure}

\textbf{Scalability Analysis.}  We further investigate the impact of vocabulary
size on training throughput. As noted previously, solving the loss-augmented
inference problem via Viterbi (for SSVM) imposes a quadratic dependency
$O(L|\sY|^2)$.  As shown in Figure~\ref{fig:scalability_full} (Red curve), this
quadratic complexity makes training prohibitively slow as the vocabulary size
grows; increasing $|\sY|$ from 100 to 400 results in a nearly $8\times$ increase
in training time for the Structured SVM.

In contrast, the Linear-Core surrogate proposed in this work is differentiable
everywhere. This smoothness property fundamentally changes the optimization
landscape: instead of solving a combinatorial maximization problem (argmax) at
every step, we can estimate the gradient as an expectation over the label
space. This allows us to use \emph{unbiased stochastic sampling} to approximate
the gradient, decoupling the computational cost from the size of the output
space. As shown in Figure~\ref{fig:scalability_full} (Blue curve), our method
maintains a constant throughput regardless of vocabulary size. At $|\sY|=400$,
the Linear-Core surrogate achieves a \textbf{23$\times$ speedup} over the
SSVM. This confirms that our approach enables efficient linear-rate training on
large-scale structured problems where traditional max-oracle methods are
intractable.

It is important to note that \emph{Conditional Random Fields (CRF)}
\citep{lafferty2001conditional}, the standard probabilistic approach for
sequence modeling, shares the same computational bottleneck as SSVM. The
gradient of the CRF log-likelihood requires computing marginal probabilities via
the Forward-Backward algorithm, which also scales as $O(L|\sY|^2)$.

Furthermore, while widely used approaches such as CRF and SSVM are natural,
recent theoretical analysis has shown that their associated loss functions are
not Bayes-consistent with respect to discrete target losses, such as the Hamming
loss~\citep{MaoMohriZhong2023structured}. Consequently, these methods inherently
cannot be supported by the strong linear $\sH$-consistency bounds that we
establish for Linear-Core surrogates in Theorem~\ref{thm:bound-struct}.

\begin{figure}[t]
  \centering
  \includegraphics[width=0.75\linewidth]{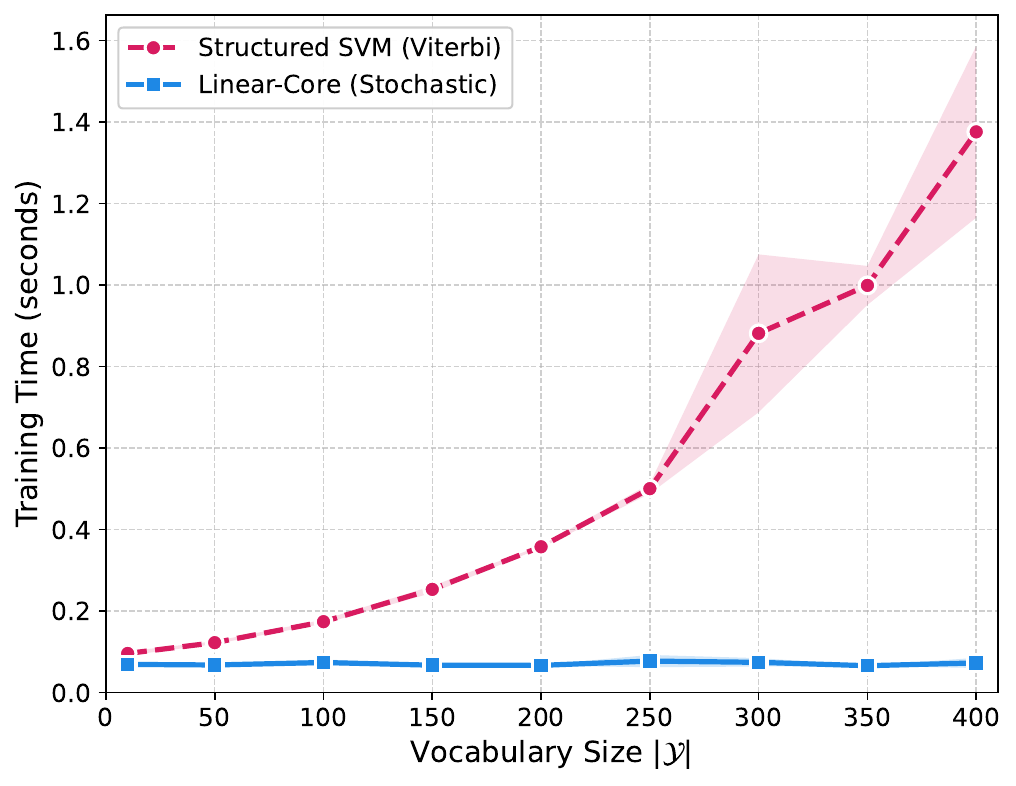}
  \caption{Scalability with Vocabulary Size.  We compare the wall-clock training
    time per batch of the Structured SVM (using Viterbi inference) against our
    Linear-Core Surrogate (using stochastic sampling) as the label vocabulary
    size $|\sY|$ increases.  The SSVM (Red) suffers from the quadratic
    complexity of dynamic programming ($O(L|\sY|^2)$).  In contrast, the
    Linear-Core surrogate enables unbiased gradient estimation via sampling,
    resulting in near-constant scaling ($O(L)$). At $|\sY|=400$, our method
    achieves a \textbf{23$\times$ speedup}, reducing the computational
    bottleneck of large-scale structured learning.}
  \label{fig:scalability_full}
\end{figure}

\section{Definitions for Consistency Proofs}
\label{app:definitions}

For a loss function $\ell$, we define the \emph{conditional error} of a
hypothesis $h \in \sH$ at a point $x \in \sX$ as
\[
  \sC_{\ell}(h, x) = \sum_{y \in \sY} p(y \mid x) \, \ell(h, x, y),
\]
where $p(y \mid x) = \sD(Y = y \mid X = x)$ is the conditional probability of
$y$ given $x$.  The \emph{best-in-class conditional error} is defined as
\[
  \sC_{\ell}^*(\sH, x) = \inf_{h\in \sH}\sC_{\ell}(h, x).
\]
The \emph{conditional regret} is the difference between the conditional error
and the best-in-class conditional error:
\[
  \Delta\sC_{\ell,\sH}(h,x) = \sC_{\ell}(h,x) - \sC_{\ell}^*(\sH, x).
\]
The generalization error can be expressed as the expectation of the conditional
error: $\sE_{\ell}(h) = \E_{x}\bracket*{\sC_{\ell}(h, x)}$.

\section{Proofs for Binary Consistency Bounds}
\label{app:consistency-binary}

\subsection{Proof of Theorem~\ref{thm:T-binary}}
\label{app:T-binary}

\TBinary*
\begin{proof}
  Since $\ov \Phi$ is convex and differentiable at zero and satisfies
  $\ov \Phi'(0) = -1 < 0$, by \citet[Theorem~4.1]{MaoMohriZhong2024}, we
  complete the proof.
\end{proof}

\subsection{Proof of Lemma~\ref{lemma:lin-binary}}
\label{app:lin-binary}

\LinBinary*
\begin{proof}
  By definition,
  \begin{equation*}
    \sT(t) = \ov \Phi(0) - \inf_{u \in \Rset} 
    \paren*{\tfrac{1 - t}{2} \, \ov \Phi(-u) + \tfrac{1 + t}{2} \, \ov \Phi(u)}.
  \end{equation*}

  \textbf{Step 1. Lower bound $\sT(t) \geq t$.}  On $[-1, 1]$, $\ov \Phi$ is
  linear:
  \[
    \ov \Phi(u) = -u + 1 + \frac{\Phi(0)}{\Phi'(0)}, \qquad \ov \Phi(-u) = u + 1
    + \frac{\Phi(0)}{\Phi'(0)}.
  \]
  Thus for any $u \in [-1, 1]$,
  \begin{equation*}
    \frac{1 - t}{2} \, \ov \Phi(-u) + \frac{1 + t}{2} \, \ov \Phi(u)
    = \frac{1 - t}{2} \paren*{u + 1 + \frac{\Phi(0)}{\Phi'(0)}}
    + \frac{1 + t}{2} \paren*{-u + 1 + \frac{\Phi(0)}{\Phi'(0)}}
    = \paren*{\frac{\Phi(0)}{\Phi'(0)} + 1} - t u.
  \end{equation*}
  For $t \in [0, 1]$, this expression is minimized over $u \in [-1, 1]$ at
  $u = 1$, giving
  \begin{equation*}
    \inf_{u \in \Rset} 
    \paren*{\tfrac{1 - t}{2} \, \ov \Phi(-u) + \tfrac{1 + t}{2} \, \ov \Phi(u)}
    \leq \paren*{\frac{\Phi(0)}{\Phi'(0)} + 1} - t.
  \end{equation*}
  Since $\ov \Phi(0) = \frac{\Phi(0)}{\Phi'(0)} + 1$ (because $0 \in [-1, 1]$),
  we obtain
  \begin{equation*}
    \sT(t) \geq 
    \paren*{\frac{\Phi(0)}{\Phi'(0)} + 1}
    - \paren*{\paren*{\frac{\Phi(0)}{\Phi'(0)} + 1} - t}
    = t.
  \end{equation*}

  \textbf{Step 2. Exact value at $t = 0$.}  When $t = 0$, we have
  \begin{equation*}
    \sT(0) = \ov \Phi(0) - \inf_{u \in \Rset} 
    \tfrac{1}{2} \paren*{\ov \Phi(-u) + \ov \Phi(u)}.
  \end{equation*}
  If $u \in [-1, 1]$, then
  \begin{equation*}
    \tfrac{1}{2} \paren*{\ov \Phi(-u) + \ov \Phi(u)}
    = \tfrac{1}{2} \paren*{u + 1 + \tfrac{\Phi(0)}{\Phi'(0)} 
      + \paren*{-u + 1 + \tfrac{\Phi(0)}{\Phi'(0)}}}
    = \frac{\Phi(0)}{\Phi'(0)} + 1.
  \end{equation*}
  Thus every $u \in [-1, 1]$ attains the value $\frac{\Phi(0)}{\Phi'(0)} + 1$.

  For $u > 1$ or $u < -1$, one checks from the outer branches of $\ov \Phi$ and
  convexity that
  \begin{equation*}
    \tfrac{1}{2} \paren*{\ov \Phi(-u) + \ov \Phi(u)} 
    \geq \frac{\Phi(0)}{\Phi'(0)} + 1,
  \end{equation*}
  with equality only at the boundary $u = \pm 1$.

  Therefore,
  \begin{equation*}
    \inf_{u \in \Rset} \tfrac{1}{2} \paren*{\ov \Phi(-u) + \ov \Phi(u)} 
    = \frac{\Phi(0)}{\Phi'(0)} + 1,
  \end{equation*}
  and the set of minimizers is precisely $[-1, 1]$.  Since
  $\ov \Phi(0) = \frac{\Phi(0)}{\Phi'(0)} + 1$, it follows that
  \begin{equation*}
    \sT(0) = 0.
  \end{equation*}
  Combining the two steps proves the claim.
\end{proof}

\subsection{Proof of Theorem~\ref{thm:bound-binary}}
\label{app:bound-binary}

\BoundBinary*
\begin{proof}
  By \citet[Theorem~4.1]{MaoMohriZhong2024}, we have
  \begin{equation*}
    \forall h \in \sH, \quad
    \sT \paren*{\sE_{\ell_{0-1}}(h) - \sE^*_{\ell_{0-1}}(\sH) + \sM_{\ell_{0-1}}(\sH)}
    \leq
    \sE_{\ell_{\ov \Phi}}(h) - \sE^*_{\ell_{\ov \Phi}}(\sH) + \sM_{\ell_{\ov \Phi}}(\sH).
  \end{equation*}
  By Lemma~\ref{lemma:lin-binary}, since $\sT(t) \geq t$ for all $t \in [0, 1]$,
  it follows that
  \begin{equation*}
    \forall h \in \sH, \quad
    \sE_{\ell_{0-1}}(h) - \sE^*_{\ell_{0-1}}(\sH) + \sM_{\ell_{0-1}}(\sH)
    \leq
    \sE_{\ell_{\ov \Phi}}(h) - \sE^*_{\ell_{\ov \Phi}}(\sH) + \sM_{\ell_{\ov \Phi}}(\sH).
  \end{equation*}
  This completes the proof.
\end{proof}

\subsection{Proof of Lemma~\ref{lemma:lin-binary-one}}
\label{app:asym}

\LinBinaryOne*
\begin{proof}
  For $u \in [-1, 1]$, $\wt \Phi(u) = - u + 1 + \Phi(0)/\Phi'(0)$ and
  $\wt \Phi(-u) = u + 1 + \Phi(0)/\Phi'(0)$. Thus
  \begin{equation*}
    \tfrac{1 - t}{2}\, \wt \Phi(-u) + \tfrac{1 + t}{2}\, \wt \Phi(u)
    = \Bigl( \dfrac{\Phi(0)}{\Phi'(0)} + 1 \Bigr) - t\, u.
  \end{equation*}
  Minimizing over $u \in [-1, 1]$ gives
  \[
    \inf_{u} \paren*{\tfrac{1 - t}{2}\, \wt \Phi(-u) + \tfrac{1 + t}{2}\, \wt
      \Phi(u)} \leq \dfrac{\Phi(0)}{\Phi'(0)} + 1 - t,
  \]
  and since $\wt \Phi(0) = \dfrac{\Phi(0)}{\Phi'(0)} + 1$, we obtain
  $\sT_{\mathrm{one}}(t) \geq t$. At $t = 0$, the same calculation shows
  $\sT_{\mathrm{one}}(0) = 0$.
\end{proof}

\subsection{Proof of Corollary~\ref{cor:bound-binary-one}}
\label{app:bound-binary-one}

\BoundBinaryOne*
\begin{proof}
  Since $\wt \Phi$ is convex and differentiable at zero and satisfies the
  inequality $\wt \Phi'(0) = -1 < 0$, by \citet[Theorem~4.1]{MaoMohriZhong2024},
  for complete hypothesis sets, the transformation $\sT$ is equal to
  $\sT_{\mathrm{one}}$:
  \begin{equation*}
    \forall h \in \sH, \quad
    \sT_{\mathrm{one}} \paren*{\sE_{\ell_{0-1}}(h) - \sE^*_{\ell_{0-1}}(\sH)
      + \sM_{\ell_{0-1}}(\sH)}
    \leq
    \sE_{\ell_{\wt \Phi}}(h) - \sE^*_{\ell_{\wt \Phi}}(\sH) + \sM_{\ell_{\wt \Phi}}(\sH).
  \end{equation*}
  By Lemma~\ref{lemma:lin-binary-one}, since $\sT_{\mathrm{one}}(t) \geq t$ for
  all $t \in [0, 1]$, it follows that
  \begin{equation*}
    \forall h \in \sH, \quad
    \sE_{\ell_{0-1}}(h) - \sE^*_{\ell_{0-1}}(\sH)
    + \sM_{\ell_{0-1}}(\sH)
    \leq
    \sE_{\ell_{\wt \Phi}}(h) - \sE^*_{\ell_{\wt \Phi}}(\sH) + \sM_{\ell_{\wt \Phi}}(\sH).
  \end{equation*}
  This completes the proof.
\end{proof}

\section{Proofs for Multi-class Consistency Bounds}
\label{app:consistency-multi}

\subsection{Auxiliary Lemma~\ref{lemma:explicit_assumption_01} and
  Lemma~\ref{lem:restricted-interval-optimizer}}
\label{app:auxiliary-multi}

\begin{lemma}
  \label{lemma:explicit_assumption_01}
  Assume $\sH$ is symmetric and complete.  Then, for any $x \in \sX$, the
  best-in-class conditional error and the conditional regret for $\sfL_{0-1}$
  can be expressed as follows:
  \begin{align*}
    \sC^*_{\sfL_{0-1}, \sH}(x) & = 1 - \max_{y\in \sY} p(y \mid x)\\
    \Delta \sC_{\sfL_{0-1}, \sH}(h, x) & = \max_{y\in \sY} p(y \mid x) - p(\hh(x) \mid x).
  \end{align*}
\end{lemma}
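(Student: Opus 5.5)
The plan is to compute the conditional error of the multi-class zero-one loss directly and then use completeness and symmetry of $\sH$ to identify the infimum over $\sH$ pointwise. Fix $x \in \sX$. Since $\sfL_{0-1}(h, x, y) = 1_{\hh(x) \neq y}$, the conditional error is
\[
  \sC_{\sfL_{0-1}}(h, x) = \sum_{y \in \sY} p(y \mid x)\, 1_{\hh(x) \neq y} = 1 - p(\hh(x) \mid x).
\]
This identity only uses the fact that $\hh(x)$ is a single label, with ties in the $\argmax$ broken by a fixed deterministic rule. It holds for every $h \in \sH$.

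The next step is to lower-bound the infimum. For any $h$ we have $p(\hh(x) \mid x) \leq \max_{y \in \sY} p(y \mid x)$, so
\[
  \sC^*_{\sfL_{0-1}, \sH}(x) = \inf_{h \in \sH} \paren*{1 - p(\hh(x)\mid x)} \geq 1 - \max_{y \in \sY} p(y \mid x).
\]

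For the matching upper bound, I need some $h \in \sH$ whose prediction at $x$ is a label $y_{\max} \in \argmax_{y} p(y \mid x)$. This is where the hypotheses on $\sH$ enter, and it is the only step needing care. Completeness in the multi-class sense says that the scores $h(x, y)$ range over all of $\Rset$, and symmetry says that $\sH$ is closed under permuting the roles of labels. Together they should give an $h$ with $h(x, y_{\max})$ strictly larger than every other $h(x, y')$. Concretely, I would start from any $h_0$, use completeness to realize a large score for one label, and use symmetry to move that top score onto $y_{\max}$. A strict maximum avoids any tie-breaking issue, so $\hh(x) = y_{\max}$. The main obstacle is pinning down exactly which property of the symmetric-complete definition is used here, namely that the set of score vectors at $x$ allows any prescribed label to be the strict $\argmax$. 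I would state this explicitly, following the convention of \citet{awasthi2022multi}.

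With such an $h$, the infimum is attained and $\sC^*_{\sfL_{0-1}, \sH}(x) = 1 - \max_y p(y \mid x)$. Subtracting this from the expression for $\sC_{\sfL_{0-1}}(h, x)$ gives the regret formula
\[
  \Delta \sC_{\sfL_{0-1}, \sH}(h, x) = \max_{y} p(y \mid x) - p(\hh(x) \mid x).
\]
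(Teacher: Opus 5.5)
Your overall route matches the paper's. The paper invokes \citet[Lemma~3]{awasthi2022multi}, which gives $\sC^*_{\sfL_{0-1},\sH}(x) = 1 - \max_{y \in \mathsf{H}(x)} p(y \mid x)$ with $\mathsf{H}(x) = \{\hh(x) : h \in \sH\}$, and then uses $\mathsf{H}(x) = \sY$ for symmetric $\sH$. Your first two steps re-derive that lemma, and your remaining step is exactly showing $y_{\max} \in \mathsf{H}(x)$.

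The gap is in how you justify that step. Suppose symmetry only meant that $\sH$ is closed under permuting labels. Then symmetry plus completeness would not give a strict $\argmax$. Take the class of all $h$ whose score vector at each $x$ is a constant vector $(c, \ldots, c)$, $c \in \Rset$. It is complete and permutation-invariant, yet every prediction is fixed by the tie-breaking rule. So $\mathsf{H}(x)$ is a single label, and the claimed formula fails whenever $y_{\max}$ is not that label. For the same reason, ``use completeness to realize a large score for one label'' does not work: completeness controls one coordinate at a time and says nothing about the others.

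What you need is the product structure in the definition of symmetry from \citet{awasthi2022multi}. There is a family $\sF$ of real-valued functions on $\sX$ with $\{h(x, \cdot) : h \in \sH\} = \{(f_1(x), \ldots, f_n(x)) : f_1, \ldots, f_n \in \sF\}$ for every $x$, so the coordinates of the score vector at $x$ can be chosen independently. Completeness then gives $\{f(x) : f \in \sF\} = \Rset$. Choosing $f_{y_{\max}}(x) = 1$ and $f_y(x) = 0$ for $y \neq y_{\max}$ yields an $h \in \sH$ whose strict maximizer is $y_{\max}$. This is precisely the fact $\mathsf{H}(x) = \sY$ that the paper uses, and with it your argument is complete.
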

\begin{proof}
  By \citet[Lemma~3]{awasthi2022multi} and the fact that $\mathsf{H}(x) = \sY$
  when $\sH$ is symmetric, the proof is complete.
\end{proof}

\begin{restatable}[Restricted optimizer for $\ov\Phi$ and $\wt \Phi$ on
  $\bracket*{-1, 1}$]{lemma}{RestrictedIntervalOptimizer}
  \label{lem:restricted-interval-optimizer}
  For $a, b \geq 0$,
  \begin{equation*}
    \inf_{u \in \bracket*{-1, 1}} \paren*{ a \, \ov\Phi \paren*{ -u } + b \, \ov\Phi \paren*{ u } }
    = \inf_{u \in \bracket*{-1, 1}} \paren*{ a \, \wt\Phi \paren*{ -u } + b \, \wt \Phi \paren*{ u } }
    = \paren*{ a + b } \, \frac{\Phi(0)}{\Phi'(0)} + 2 \min \curl*{ a, b },
  \end{equation*}
  with the infimum attained at $u^* = -1$ if $a \geq b$ and at $u^* = 1$ if
  $a \leq b$.
\end{restatable}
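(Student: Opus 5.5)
The plan is a direct computation, because on $[-1,1]$ both $\ov\Phi$ and $\wt\Phi$ coincide with the same affine function. By the definitions \eqref{eq:Phi} and \eqref{eq:Phi_one}, for every $u \in [-1,1]$ we also have $-u \in [-1,1]$. Hence, writing $c = \Phi(0)/\Phi'(0)$,
\begin{equation*}
  \ov\Phi(u) = \wt\Phi(u) = -u + 1 + c, \qquad \ov\Phi(-u) = \wt\Phi(-u) = u + 1 + c .
\end{equation*}
This immediately gives the first equality in the statement: the two objectives agree pointwise on the restricted interval, so their infima agree. The outer branches, where the two surrogates differ, never enter.

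Next, substitute these expressions into the objective. For $a, b \geq 0$ this gives
\begin{equation*}
  g(u) := a\,\ov\Phi(-u) + b\,\ov\Phi(u) = (a+b)(1+c) + (a-b)\,u ,
\end{equation*}
which is affine in $u$ on the compact interval $[-1,1]$, so its infimum is attained at an endpoint.

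\textbf{Case $a \geq b$.} The slope $a-b$ is nonnegative, so the minimum is at $u^* = -1$. The value there is
\begin{equation*}
  (a+b)(1+c) - (a-b) = (a+b)c + 2b = (a+b)c + 2\min\{a,b\}.
\end{equation*}

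\textbf{Case $a \leq b$.} The slope is nonpositive, so the minimum is at $u^* = 1$. The value there is
\begin{equation*}
  (a+b)(1+c) + (a-b) = (a+b)c + 2a = (a+b)c + 2\min\{a,b\}.
\end{equation*}

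When $a = b$ the function $g$ is constant on $[-1,1]$, so both endpoints attain the infimum. This is consistent with the ``$\geq$/$\leq$'' overlap in the statement.

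There is no real obstacle here. The only point requiring care is to note explicitly that both $u$ and $-u$ lie in the linear core. This is what makes the $\wt\Phi$ case identical to the $\ov\Phi$ case, even though $\wt\Phi$ differs from $\ov\Phi$ for $u < -1$. Beyond that, the proof reduces to the identity $(a+b) - |a-b| = 2\min\{a,b\}$, which follows from the two cases above.
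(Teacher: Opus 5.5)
Your proposal is correct and follows essentially the same route as the paper. Both arguments use the fact that $\ov\Phi$ and $\wt\Phi$ agree with the same affine function on the linear core $[-1,1]$, which contains both $u$ and $-u$. This reduces the objective to $(a+b)\bigl(1+\tfrac{\Phi(0)}{\Phi'(0)}\bigr) + (a-b)\,u$, which is minimized at the endpoint $u^*=-1$ or $u^*=1$ according to the sign of $a-b$, yielding $(a+b)\tfrac{\Phi(0)}{\Phi'(0)} + 2\min\{a,b\}$.
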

\begin{proof}
  For $u \in \bracket*{-1, 1}$, the middle branch of $\ov\Phi$ and $\wt\Phi$
  gives
  \begin{equation*}
    \ov\Phi \paren*{ -u } = \wt\Phi \paren*{ -u } = 1 + u + \frac{\Phi(0)}{\Phi'(0)}, \qquad
    \ov\Phi \paren*{ u } = \wt\Phi \paren*{ -u } = 1 - u + \frac{\Phi(0)}{\Phi'(0)}.
  \end{equation*}
  Therefore, for $a, b \geq 0$,
  \begin{equation*}
    a \, \ov\Phi \paren*{ -u } + b \, \ov\Phi \paren*{ u } = 
    a \, \wt\Phi \paren*{ -u } + b \, \wt\Phi \paren*{ u } 
    = \paren*{ a + b } \paren*{ 1 + \frac{\Phi(0)}{\Phi'(0)} } + \paren*{ a - b } u.
  \end{equation*}
  The right-hand side is affine in $u$, hence minimized on the interval
  $\bracket*{-1, 1}$ at an endpoint: at $u^* = -1$ if $a \geq b$, and at
  $u^* = 1$ if $a \leq b$.  Evaluating at these points yields
  \begin{align*}
    \inf_{u \in \bracket*{-1, 1}} \paren*{ a \, \ov\Phi \paren*{ -u } + b \, \ov\Phi \paren*{ u } } 
    & = \inf_{u \in \bracket*{-1, 1}} \paren*{ a \, \wt\Phi \paren*{ -u } + b \, \wt\Phi \paren*{ u } }\\
    & = \paren*{ a + b } \paren*{ 1 + \frac{\Phi(0)}{\Phi'(0)} } - \abs{ a - b }\\
    & = \paren*{ a + b } \, \frac{\Phi(0)}{\Phi'(0)} + 2 \min \curl*{ a, b }.
  \end{align*}
  This proves the claim.
\end{proof}

\subsection{Proof of Theorem~\ref{thm:bound-multi}}
\label{app:bound-multi}

\BoundMulti*
\begin{proof}
  Fix $x \in \sX$. Let $\phi \in \{ \ov\Phi, \wt\Phi \}$.  For brevity, let
  $W_y \coloneqq p(y \mid x)$.  Let $y_{\max} \in \argmax_{y \in \sY} W_y$ and
  let $\hh(x) = \argmax_{y \in \sY} h(x, y)$.  If $\hh(x) = y_{\max}$, by
  Lemma~\ref{lemma:explicit_assumption_01}, the inequality
  $\Delta \sC_{\sfL_{0-1}, \sH}(h, x) \leq \Delta
  \sC_{\ell^{\mathrm{sum}}_{\phi}, \sH}(h, x)$ holds trivially since the
  left-hand side is zero.  Assume $\hh(x) \neq y_{\max}$. The conditional error
  of the sum loss can be decomposed into a sum of pairwise errors. We have:
  \begin{align*}
    \sC_{\ell^{\mathrm{sum}}_{\phi}}(h, x)
    &= \sum_{y \in \sY} W_y \sum_{y' \neq y} \phi\paren*{ h(x, y) - h(x, y') } \\
    &= \frac12 \sum_{\{y, y'\} \subseteq \sY, y \neq y'} \underbrace{\bracket*{ W_y \phi\paren*{ h(x, y) - h(x, y') } + W(y') \phi\paren*{ h(x, y') - h(x, y) } }}_{\eqqcolon \sC_{\{y, y'\}}(h, x)}.
  \end{align*}
  We first determine the best-in-class conditional error
  $\sC^*_{\ell^{\mathrm{sum}}_{\phi}}(\sH, x)$.  Consider any pair $\{y, y'\}$
  with $W_y \geq W(y')$. Minimizing the pairwise term $\sC_{\{y, y'\}}(h, x)$
  requires the margin $h(x, y) - h(x, y')$ to be optimized (typically driven to
  a positive value).  Generally, pairwise constraints might conflict (e.g.,
  violating the triangle inequality).  However, here the ``preference''
  direction for every pair is determined by the order of the scalar
  probabilities $p(\cdot \mid x)$.  Since these probabilities induce a total
  ordering on $\sY$, the pairwise requirements are transitive and acyclic.
  Therefore, there is no conflict: one can construct a score vector
  $h(x, \cdot)$ that satisfies the optimal margin requirements for \emph{all}
  pairs simultaneously (for instance, by setting scores proportional to the rank
  of $W_y$).  Since $\sH$ is complete, such a vector exists in $\sH$.  Thus, the
  infimum of the sum is the sum of the infimums:
  \begin{equation*}
    \sC^*_{\ell^{\mathrm{sum}}_{\phi}}(\sH, x) = \frac12 \sum_{\{y, y'\} \subseteq \sY, y \neq y'} \inf_{h \in \sH} \sC_{\{y, y'\}}(h, x).
  \end{equation*}
  The conditional regret then decomposes additively:
  \begin{equation*}
    \Delta \sC_{\ell^{\mathrm{sum}}_{\phi}, \sH}(h, x)
    = \frac12 \sum_{\{y, y'\} \subseteq \sY, y \neq y'} \underbrace{\sC_{\{y, y'\}}(h, x) - \inf_{h \in \sH} \sC_{\{y, y'\}}(h, x)}_{\eqqcolon \Delta \sC_{\{y, y'\}}(h, x)}.
  \end{equation*}
  Since each pairwise regret term $\Delta \sC_{\{y, y'\}}(h, x)$ is
  non-negative, we can lower bound the total regret by the two terms
  corresponding to the pair $\{y_{\max}, \hh(x)\}$. Note that
  $\Delta \sC_{\{y, y'\}}(h, x) = \Delta \sC_{\{y', y\}}(h, x) $.  Let
  $y_1 = y_{\max}$ and $y_2 = \hh(x)$. By definition, $p_{y_1} \geq p_{y_2}$.
  Also, let $m = h(x, y_1) - h(x, y_2)$. Since $y_2$ is the predicted class,
  $h(x, y_2) \geq h(x, y_1)$, implying $m \leq 0$.  The pairwise regret for
  $\{y_1, y_2\}$ is:
  \begin{equation*}
    \Delta \sC_{\{y_1, y_2\}}(h, x)
    = p_{y_1} \phi(m) + p_{y_2} \phi(-m) - \inf_{u \in \Rset} \paren*{ p_{y_1} \phi(u) + p_{y_2} \phi(-u) }.
  \end{equation*}
  We apply Lemma~\ref{lem:restricted-interval-optimizer}. Since
  $[-1, 1] \subset \Rset$, the infimum over $\Rset$ is upper bounded by the
  restricted infimum over $[-1, 1]$ (attained at the boundary $u = 1$ since
  $p_{y_1} \geq p_{y_2}$), so:
  \begin{equation*}
    \inf_{u \in \Rset} \paren*{ p_{y_1} \phi(u) + p_{y_2} \phi(-u) } \leq (p_{y_1} + p_{y_2})\frac{\Phi(0)}{\Phi'(0)} + 2 \min\{p_{y_1}, p_{y_2}\} = (p_{y_1} + p_{y_2})\frac{\Phi(0)}{\Phi'(0)} + 2 p_{y_2}.
  \end{equation*}
  For the first term, we use the property that $\phi$ has slope $-1$ at the
  origin. By convexity,
  $\phi(t) \geq \phi(0) - t = (1 + \frac{\Phi(0)}{\Phi'(0)}) - t$.  Thus:
  \begin{align*}
    p_{y_1} \phi(m) + p_{y_2} \phi(-m)
    &\geq p_{y_1} \bracket*{ 1 + \frac{\Phi(0)}{\Phi'(0)} - m } + p_{y_2} \bracket*{ 1 + \frac{\Phi(0)}{\Phi'(0)} + m } \\
    &= (p_{y_1} + p_{y_2}) \paren*{ 1 + \frac{\Phi(0)}{\Phi'(0)} } + (p_{y_2} - p_{y_1}) m.
  \end{align*}
  Subtracting the minimal error:
  \begin{align*}
    \Delta \sC_{\{y_1, y_2\}}(h, x)
    &\geq (p_{y_1} + p_{y_2}) \paren*{ 1 + \frac{\Phi(0)}{\Phi'(0)} } + (p_{y_2} - p_{y_1}) m - \bracket*{ (p_{y_1} + p_{y_2})\frac{\Phi(0)}{\Phi'(0)} + 2 p_{y_2} } \\
    &= (p_{y_1} + p_{y_2}) - 2 p_{y_2} + (p_{y_2} - p_{y_1}) m \\
    &= (p_{y_1} - p_{y_2}) + (p_{y_2} - p_{y_1}) m \\
    &= (p_{y_1} - p_{y_2}) (1 - m).
  \end{align*}
  By the symmetry, we have
  $\Delta \sC_{\{y_2, y_1\}}(h, x) = \Delta \sC_{\{y_1, y_2\}}(h, x) \geq
  (p_{y_1} - p_{y_2}) (1 - m)$.  Since $p_{y_1} \geq p_{y_2}$ and $m \leq 0$, we
  have $(p_{y_1} - p_{y_2}) \geq 0$ and $(1 - m) \geq 1$. Therefore, since each
  pairwise regret is non-negative, we have:
  \begin{equation*}
    \Delta \sC_{\ell^{\mathrm{sum}}_{\phi}, \sH}(h, x) \geq \frac12 \paren*{\Delta \sC_{\{y_1, y_2\}}(h, x) +  \Delta \sC_{\{y_2, y_1\}}(h, x)} \geq p_{y_1} - p_{y_2} = p(y_{\max} \mid x) - p(\hh(x) \mid x).
  \end{equation*}
  By Lemma~\ref{lemma:explicit_assumption_01}, this lower bound equals
  $\Delta \sC_{\sfL_{0-1}, \sH}(h, x)$.  Finally, taking expectations over $x$
  yields the statement of the theorem:
  \begin{equation*}
    \sR_{\sfL_{0-1}}(h) - \sR^{*}_{\sfL_{0-1}}(\sH) + \sM_{\sfL_{0-1}}(\sH)
    \leq \sR_{\ell^{\mathrm{sum}}_{\phi} }(h) 
    - \sR^{*}_{ \ell^{\mathrm{sum}}_{\phi} }(\sH)
    + \sM_{ \ell^{\mathrm{sum}}_{\phi}}\paren*{\sH }.
  \end{equation*}
\end{proof}

\section{Proofs for Structured Consistency Bounds}
\label{app:consistency-struct}

\subsection{Auxiliary Lemma~\ref{lemma:explicit_assumption_01-struct}}
\label{app:auxiliary-struct}

\begin{lemma}
  \label{lemma:explicit_assumption_01-struct}
  Assume $\sH$ is symmetric and complete.  Then, for any $x \in \sX$, the
  best-in-class conditional error and the conditional regret for $\loss$ can be
  expressed as follows:
  \begin{align*}
    \sC^*_{\loss, \sH}(x) & = \min_{y' \in \sY} \sum_{y \in \sY} p(y \mid x) \ell(y', y)\\
    \Delta \sC_{\loss, \sH}(h, x) & = \sum_{y \in \sY} p(y \mid x) \ell(\hh(x), y) - \min_{y' \in \sY} \sum_{y \in \sY} p(y \mid x) \ell(y', y).
  \end{align*}
\end{lemma}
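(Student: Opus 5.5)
The argument mirrors Lemma~\ref{lemma:explicit_assumption_01}: we compute the conditional error of $\loss$ at a fixed $x$ and use symmetry and completeness to show that every label is achievable as a prediction. Fix $x \in \sX$. Since $\loss(h, x, y) = \ell(\hh(x), y)$, the conditional error is
\[
  \sC_{\loss}(h, x) = \sum_{y \in \sY} p(y \mid x)\, \ell(\hh(x), y),
\]
which depends on $h$ only through the predicted label $\hh(x)$. Hence
\[
  \sC^*_{\loss, \sH}(x) = \inf_{y' \in \mathsf{H}(x)} \sum_{y} p(y \mid x)\, \ell(y', y),
\]
where $\mathsf{H}(x) = \curl*{\hh(x) \colon h \in \sH}$ is the set of labels that some hypothesis in $\sH$ can predict at $x$.

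The key step is to show that $\mathsf{H}(x) = \sY$, exactly as used in the proof of Lemma~\ref{lemma:explicit_assumption_01} via \citet[Lemma~3]{awasthi2022multi}. Completeness means every score value is reachable, and symmetry means the scores of different labels can be permuted. Together these let us choose, for any target $y' \in \sY$, a hypothesis whose score on $y'$ strictly exceeds the scores of all other labels. Then $\hh(x) = y'$, with no ambiguity from the argmax tie-breaking rule. Since $\sY$ is finite and $\mathsf{H}(x) = \sY$, the infimum is a minimum over $\sY$, which gives the first formula.

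The regret formula then follows directly from the definition $\Delta\sC_{\loss,\sH}(h,x) = \sC_{\loss}(h,x) - \sC^*_{\loss}(\sH,x)$. The only point requiring care is verifying $\mathsf{H}(x) = \sY$ under the paper's notions of symmetry and completeness, including a strict maximizer so that tie-breaking plays no role. I would simply cite \citet[Lemma~3]{awasthi2022multi} (or its structured analogue in \citet{MaoMohriZhong2023structured}) for that step.
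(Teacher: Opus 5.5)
Your proposal is correct and follows the paper's route: the paper likewise combines the observation that the conditional error of $\loss$ depends on $h$ only through $\hh(x)$, so that the infimum runs over $\mathsf{H}(x)$ (\citet[Lemma~3]{MaoMohriZhong2023structured}), with the fact that $\mathsf{H}(x)=\sY$ under symmetry, which it also simply asserts without proof. One correction to your heuristic for that last step: symmetry in the sense of \citet{awasthi2022multi} means the score vector ranges over $[f_1(x),\ldots,f_n(x)]$ with each $f_i$ chosen independently from a common family $\sF$, and it is this independence that produces a strict maximizer at any prescribed label; mere permutation invariance plus completeness would not suffice, since a class of hypotheses that are constant across labels satisfies both.
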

\begin{proof}
  By \citet[Lemma~3]{MaoMohriZhong2023structured} and the fact that
  $\mathsf{H}(x) = \sY$ when $\sH$ is symmetric, the proof is complete.
\end{proof}

\subsection{Proof of Theorem~\ref{thm:bound-struct}}
\label{app:bound-struct}

\BoundStruct*
\begin{proof}
  The proof for $\ov \Phi$ and $\wt \Phi$ is identical due to the coincidence of
  the functions on $[-1, 1]$.  Let $\phi \in \{ \ov\Phi, \wt\Phi \}$.  We first
  establish a pointwise lower bound on the surrogate regret.  Fix $x$. Let
  $p_y = p(y|x)$.  The target conditional error is
  $\sC_{\loss}(h, x) = \sum_{y \in \sY} p_y \ell(\hh(x), y)$. Since $\sH$ is
  complete, by Lemma~\ref{lemma:explicit_assumption_01-struct}, the
  best-in-class target conditional error is
  $\sC^*_{\loss}(\sH, x) = \inf_{y' \in \sY} \sum_{y \in \sY} p_y \ell(y', y)$.
  Thus, the target conditional regret is:
  \begin{equation*}
    \Delta \sC_{\loss, \sH}(h, x) = \sum_{y \in \sY} p_y \ell(\hh(x), y) - \inf_{y' \in \sY} \sum_{y \in \sY} p_y \ell(y', y).
  \end{equation*}
  Now consider the surrogate loss.  The conditional surrogate error is:
  \begin{align*}
    \sC_{\loss^{\mathrm{sum}}_{\phi}}(h, x)
    &= \sum_{y \in \sY} p_y \sum_{y' \in \sY} \ov \ell(y', y) \sum_{y'' \neq y'} \phi(h(x, y') - h(x, y'')) \\
    &= \sum_{y' \in \sY} \sum_{y'' \neq y'} \phi(h(x, y') - h(x, y'')) \underbrace{\sum_{y \in \sY} p_y \ov \ell(y', y)}_{\eqqcolon W(y')}\\
    &= \frac12 \sum_{\{y', y''\} \subseteq \sY, y' \neq y''} \underbrace{\bracket*{ W(y') \phi\paren*{ h(x, y') - h(x, y'') } + W(y'') \phi\paren*{ h(x, y'') - h(x, y') } }}_{\eqqcolon \sC_{\{y', y''\}}(h, x)}.
  \end{align*}
  Note that the inner summation $W(y')$ does not depend on $y''$.  We first
  determine the best-in-class conditional error
  $\sC^*_{\loss^{\mathrm{sum}}_{\phi}}(\sH, x)$.  Consider any pair
  $\{y', y''\}$ with $W(y') \geq W(y'')$. Minimizing the pairwise term
  $\sC_{\{y', y''\}}(h, x)$ requires the margin $h(x, y') - h(x, y'')$ to be
  optimized (typically driven to a positive value).  Generally, pairwise
  constraints might conflict (e.g., violating the triangle inequality).
  However, here the ``preference'' direction for every pair is determined by the
  order of the scalar weights $W(\cdot)$.  Since these weights induce a total
  ordering on $\sY$, the pairwise requirements are transitive and acyclic.
  Therefore, there is no conflict: one can construct a score vector
  $h(x, \cdot)$ that satisfies the optimal margin requirements for \emph{all}
  pairs simultaneously (for instance, by setting scores proportional to the rank
  of $W(y)$).  Since $\sH$ is complete, such a vector exists in $\sH$.  Thus,
  the infimum of the sum is the sum of the infimums:
  \begin{equation*}
    \sC^*_{\loss^{\mathrm{sum}}_{\phi}}(\sH, x) = \frac12 \sum_{\{y', y''\} \subseteq \sY, y' \neq y''} \inf_{h \in \sH} \sC_{\{y', y''\}}(h, x).
  \end{equation*}
  The conditional regret then decomposes additively:
  \begin{equation*}
    \Delta \sC_{\loss^{\mathrm{sum}}_{\phi}, \sH}(h, x)
    = \frac12 \sum_{\{y', y''\} \subseteq \sY, y' \neq y''} \underbrace{\sC_{\{y', y''\}}(h, x) - \inf_{h \in \sH} \sC_{\{y', y''\}}(h, x)}_{\eqqcolon \Delta \sC_{\{y', y''\}}(h, x)}.
  \end{equation*}
  Let $y_{\max} \in \argmax_{y \in \sY} W(y)$ and let
  $\hh(x) = \argmax_{y \in \sY} h(x, y)$.  If $\hh(x) = y_{\max}$, by
  Lemma~\ref{lemma:explicit_assumption_01-struct}, the inequality
  $\Delta \sC_{\loss, \sH}(h, x) \leq \Delta \sC_{\loss^{\mathrm{sum}}_{\phi},
    \sH}(h, x)$ holds trivially since the left-hand side is zero.  Assume
  $\hh(x) \neq y_{\max}$.  Since each pairwise regret term
  $\Delta \sC_{\{y', y''\}}(h, x)$ is non-negative, we can lower bound the total
  regret by the two terms corresponding to the pair $\{y_{\max}, \hh(x)\}$. Note
  that $\Delta \sC_{\{y', y''\}}(h, x) = \Delta \sC_{\{y'', y'\}}(h, x) $.  Let
  $y_1 = y_{\max}$ and $y_2 = \hh(x)$. By definition, $W(y_1) \geq W(y_2)$.
  Also, let $m = h(x, y_1) - h(x, y_2)$. Since $y_2$ is the predicted class,
  $h(x, y_2) \geq h(x, y_1)$, implying $m \leq 0$.  The pairwise regret for
  $\{y_1, y_2\}$ is:
  \begin{equation*}
    \Delta \sC_{\{y_1, y_2\}}(h, x)
    = W(y_1) \phi(m) + W(y_2) \phi(-m) - \inf_{u \in \Rset} \paren*{ W(y_1) \phi(u) + W(y_2) \phi(-u) }.
  \end{equation*}
  We apply Lemma~\ref{lem:restricted-interval-optimizer}. Since
  $[-1, 1] \subset \Rset$, the infimum over $\Rset$ is upper bounded by the
  restricted infimum over $[-1, 1]$ (attained at the boundary $u = 1$ since
  $W(y_1) \geq W(y_2)$), so:
  \begin{align*}
    \inf_{u \in \Rset} \paren*{ W(y_1) \phi(u) + W(y_2) \phi(-u) } 
    & \leq (W(y_1) + W(y_2))\frac{\Phi(0)}{\Phi'(0)} + 2 \min\{W(y_1), W(y_2)\}\\
    & = (W(y_1) + W(y_2))\frac{\Phi(0)}{\Phi'(0)} + 2 W(y_2).
  \end{align*}
  For the first term, we use the property that $\phi$ has slope $-1$ at the
  origin. By convexity,
  $\phi(t) \geq \phi(0) - t = (1 + \frac{\Phi(0)}{\Phi'(0)}) - t$.  Thus:
  \begin{align*}
    W(y_1) \phi(m) + W(y_2) \phi(-m)
    &\geq W(y_1) \bracket*{ 1 + \frac{\Phi(0)}{\Phi'(0)} - m } + W(y_2) \bracket*{ 1 + \frac{\Phi(0)}{\Phi'(0)} + m } \\
    &= (W(y_1) + W(y_2)) \paren*{ 1 + \frac{\Phi(0)}{\Phi'(0)} } + (W(y_2) - W(y_1)) m.
  \end{align*}
  Subtracting the minimal error:
  \begin{align*}
    \Delta \sC_{\{y_1, y_2\}}(h, x)
    &\geq (W(y_1) + W(y_2)) \paren*{ 1 + \frac{\Phi(0)}{\Phi'(0)} } + (W(y_2) - W(y_1)) m\\
    & \qquad - \bracket*{ (W(y_1) + W(y_2))\frac{\Phi(0)}{\Phi'(0)} + 2 W(y_2) } \\
    &= (W(y_1) + W(y_2)) - 2 W(y_2) + (W(y_2) - W(y_1)) m \\
    &= (W(y_1) - W(y_2)) + (W(y_2) - W(y_1)) m \\
    &= (W(y_1) - W(y_2)) (1 - m).
  \end{align*}
  By the symmetry, we have
  $\Delta \sC_{\{y_2, y_1\}}(h, x) = \Delta \sC_{\{y_1, y_2\}}(h, x) \geq
  (W(y_1) - W(y_2)) (1 - m)$.  Since $W(y_1) \geq W(y_2)$ and $m \leq 0$, we
  have $(W(y_1) - W(y_2)) \geq 0$ and $(1 - m) \geq 1$. Therefore, since each
  pairwise regret is non-negative, we have:
  \begin{equation*}
    \Delta \sC_{\loss^{\mathrm{sum}}_{\phi}, \sH}(h, x) \geq \frac12 \paren*{\Delta \sC_{\{y_1, y_2\}}(h, x) +  \Delta \sC_{\{y_2, y_1\}}(h, x)} \geq W(y_1) - W(y_2).
  \end{equation*}
  By Lemma~\ref{lemma:explicit_assumption_01-struct}, this lower bound equals
  $\Delta \sC_{\loss, \sH}(h, x)$.  Finally, taking expectations over $x$ yields
  the statement of the theorem:
  \begin{equation*}
    \sR_{\loss}(h) - \sR^{*}_{\loss}(\sH) + \sM_{\loss}(\sH)
    \leq \sR_{\loss^{\mathrm{sum}}_{\phi} }(h)
    - \sR^{*}_{ \loss^{\mathrm{sum}}_{\phi} }(\sH)
    + \sM_{ \loss^{\mathrm{sum}}_{\phi}}\paren*{\sH }.
  \end{equation*}
\end{proof}

\section{Proof of Theorem~\ref{thm:variance_bound}}
\label{app:variance_bound}

\VarianceBound*

\begin{proof}
  Recall that the gradient of the Linear-Core loss for a single example
  $(\bx, \by^*)$ can be written as an expectation:
  \begin{equation}
    \nabla \ell(\bw)
    = \E_{\by \sim \P(\by|\bx)} \bracket*{ \Psi'(\cdot)
      \paren*{ \phi(\bx, \by) - \phi(\bx, \by^*) } },
  \end{equation}
  where $\Psi'$ is the scalar derivative of the surrogate and $\P(\by|\bx)$ is
  the sampling distribution (e.g., uniform).  Let $\bg_k$ be the gradient
  estimate from a single sample $\by_k$:
  \begin{equation}
    \bg_k = \Psi'(\cdot) \paren*{ \phi(\bx, \by_k) - \phi(\bx, \by^*) }.
  \end{equation}
  Since $\Psi'$ is bounded by $1$ (Lipschitz property of the Linear-Core) and
  the feature norm is bounded by $R$, the norm of any single estimate is
  bounded:
  \begin{equation}
    \| \bg_k \|_2
    = | \Psi'(\cdot) | \cdot \| \phi(\bx, \by_k) - \phi(\bx, \by^*) \|_2
    \le 1 \cdot (R + R) = 2R.
  \end{equation}
  The total estimator is the average
  $\h \nabla \ell(\bw) = \frac{1}{K} \sum_{k=1}^K \bg_k$.  Using the standard
  variance property for independent random variables:
  \begin{align}
    \E\bracket*{ \| \h \nabla \ell(\bw) - \nabla \ell(\bw) \|_2^2 }
    & = \frac{1}{K^2} \sum_{k=1}^K \E\bracket*{ \| \bg_k - \nabla \ell(\bw) \|_2^2 } \\
    & \le \frac{1}{K} \sup_{\by} \| \bg(\by) \|_2^2 \\
    & \le \frac{(2R)^2}{K} = \frac{4R^2}{K}.
  \end{align}
  Thus, the variance decreases linearly with $K$ and is independent of the
  cardinality $|\sY|$.
\end{proof}

\section{Generalization to Non-Decomposable Metrics via
  Rational Kernels}
\label{app:rational_kernels}

Our structured prediction analysis in Section~\ref{sec:struct} centered on the
Hamming loss due to its immediate token-level additive decomposability. However,
the theoretical and computational advantages of the Linear-Core surrogate extend
elegantly to non-decomposable metrics such as exact F1-score and BLEU when
viewed through the lens of rational kernels \citep{cortes2004rational}.

While BLEU is often considered strictly non-decomposable at the token level, its
core mechanism, $n$-gram overlap counts, can be expressed exactly as an instance
of a rational kernel computed via weighted finite-state transducers (WFSTs). By
expanding the state space of our sequence model to encode length-$(n-1)$
histories, the $n$-gram matching scores $\ov{\ell}(y', y)$ become strictly
additively factorizable over the transitions of this expanded automaton.

Consequently, we do not need to rely on high-variance, sequence-level sampling
strategies (like REINFORCE). Instead, our unbiased stochastic gradient estimator
(Algorithm~\ref{alg:struct_sgd}) generalizes directly: we simply construct our
proposal distributions $\sD_1$ and $\sD_2$ over the paths of this expanded
WFST. Because the Linear-Core surrogate's consistency and variance bounds
(Theorem~\ref{thm:bound-struct} and Theorem~\ref{thm:variance_bound}) rely only
on the additive aggregation of margins, and not on the specific topology of the
state space, both the optimal $O(\Delta\sL)$ linear transfer rate and the $O(1)$
per-step computational efficiency are rigorously preserved.

\end{document}